\renewcommand{\cite}{\citep}
\newcommand{\E}{\mathbb{E}}
\newcommand{\states}{\mathcal{S}}
\newcommand{\actions}{\mathcal{A}}
\newcommand{\opt}{^\star}
\renewcommand{\exp}[1]{\operatorname{exp}\left( #1\right) }
\newcommand{\tr}{^{\top}}
\newcommand{\Real}{\mathbb{R}}
\renewcommand{\exp}[1]{\operatorname{exp}\left(#1\right)}
\DeclareMathOperator*{\argmax}{argmax}
\DeclareMathOperator{\ess}{ess}
\DeclareMathOperator{\ext}{ext}
\DeclareMathOperator{\cvaro}{CVaR}
\DeclareMathOperator{\evaro}{EVaR}
\DeclareMathOperator{\ermo}{ERM}
\newcommand{\PiHR}{\Pi_{\mathrm{HR}}}
\newcommand{\PiMR}{\Pi_{\mathrm{MR}}}
\newcommand{\PiMD}{\Pi_{\mathrm{MD}}}
\newcommand{\PiSR}{\Pi_{\mathrm{SR}}}
\newcommand{\PiSD}{\Pi_{\mathrm{SD}}}
\renewcommand{\exp}[1]{\operatorname{exp}\left( #1\right) }
\newcommand{\probs}[1]{\Delta_{#1}}  
\newcommand{\BellI}{T}
\newcommand{\BellIE}{L}
\renewcommand{\P}{\mathbb{P}}
\newcommand{\Ex}[1]{\mathbb{E}\left[ #1 \right]}
\newcommand{\erm}[2]{\ermo_{#1}\left[#2\right]}
\newcommand{\evar}[2]{\evaro_{#1} \left[#2\right]}
\newcommand{\ermp}[3]{\ermo_{#1}^{#2}\left[#3\right]}
\theoremstyle{plain}
\newtheorem{theorem}{Theorem}[section]
\newtheorem{proposition}[theorem]{Proposition}
\newtheorem{lemma}[theorem]{Lemma}
\newtheorem{corollary}[theorem]{Corollary}
\theoremstyle{definition}
\newtheorem{assumption}[theorem]{Assumption}
\theoremstyle{remark}
\title{Risk-averse Total-reward MDPs with ERM and EVaR}
\author{%
\name Xihong Su \email xihong.su@unh.edu\\
\addr
University of New Hampshire\\
Durham, NH, USA
\AND
\name Julien Grand-Cl\'ement
\email grand-clement@hec.fr \\
\addr HEC Paris \\
Paris, France
\AND
\name Marek Petrik \email mpetrik@cs.unh.edu \\
\addr
University of New Hampshire\\
Durham, NH, USA
}
\begin{document}

\maketitle

\begin{abstract}
Optimizing risk-averse objectives in discounted MDPs is challenging because most models do not admit direct dynamic programming equations and require complex history-dependent policies. In this paper, we show that the risk-averse {\em total reward criterion}, under the Entropic Risk Measure (ERM) and Entropic Value at Risk (EVaR) risk measures, can be optimized by a stationary policy, making it simple to analyze, interpret, and deploy. We propose exponential value iteration, policy iteration, and linear programming to compute optimal policies. Compared with prior work, our results only require the relatively mild condition of transient MDPs and allow for {\em both} positive and negative rewards. Our results indicate that the total reward criterion may be preferable to the discounted criterion in a broad range of risk-averse reinforcement learning domains. 
\end{abstract}

\section{Introduction}
\label{sec:intro} 

Risk-averse Markov decision processes~(MDP)~\cite{puterman205markov} that use \emph{monetary risk measures} as their objective have been gaining in popularity in recent years~\cite{Kastner2023, Marthe2023a, lam2022risk, li2022quantile, bauerle2022markov, hau2023entropic, Hau2023a,suevar,su2024optimality}. Risk-averse objectives, such as Value at Risk~(VaR), Conditional Value at Risk~(CVaR), Entropic Risk Measure~(ERM), or Entropic Value at Risk~(EVaR), penalize the variability of returns~\cite{Follmer2016stochastic}. As a result, these risk measures yield policies with stronger guarantees on the probability of catastrophic losses, which is important in domains like healthcare or finance. 

In this paper, we target the \emph{total reward criterion} (TRC)~\cite{Kallenberg2021markov,puterman205markov} instead of the common discounted criterion. TRC also assumes an infinite horizon but does not discount future rewards. To control for infinite returns, we assume that the MDP is {\em transient}, i.e. that there is a positive probability that the process terminates after a finite number of steps, an assumption commonly used in the TRC literature~\cite{filar2012competitive}. We consider the TRC with both positive and negative rewards. When the rewards are non-positive, the TRC is equivalent to the \emph{stochastic shortest path} problem, and when they are non-negative, it is equivalent to the \emph{stochastic longest path}~\cite{Dann2023}.

Two reasons motivate our departure from discounted objectives in risk-averse MDPs. First, considering risk affects discounted objectives significantly. It is common to use discounted objectives because they admit optimal stationary policies and value functions that can be computed using dynamic programs. However, most risk-averse discount objectives, such as VaR, CVaR, or EVaR, require that optimal policies are \emph{history-dependent}~\cite{bauerle2011markov,Hau2023a,hau2023entropic} and do not admit standard dynamic programming optimality equations.

Second, TRC captures the concept of stochastic termination, which is common in reinforcement learning~\cite{Sutton2018}. In risk-neutral objectives, discounting can serve well to model the probability of termination because it guarantees the same optimal policies~\cite{puterman205markov,su2023solving}. However, as we show in this work, no such correspondence exists with risk-averse objectives, and the difference between them may be arbitrarily significant. Modeling stochastic termination using a discount factor in {\em risk-averse} objectives is inappropriate and leads to dramatically different optimal policies. 

As our main contribution, we show that the risk-averse TRC with ERM and EVaR risk measures admit optimal stationary policies and optimal value functions in transient MDPs. We also show that the optimal value function satisfies dynamic programming equations and can be computed with exponential value iteration, policy iteration, or linear programming algorithms. These algorithms are simple and closely resemble the algorithms for solving MDPs.

Our results indicate that EVaR is a particularly interesting risk measure in reinforcement learning. ERM and the closely related exponential utility functions have been popular in sequential decision-making problems because they admit dynamic programming decompositions~\cite{patek1999stochastic, deFreitas2020,smith2023exponential, denardo1979optimal, hau2023entropic, Hau2023a}. Unfortunately, ERM is difficult to interpret; it is scale-dependent; and it is incomparable with popular risk measures like VaR and CVaR. Because EVaR reduces to an optimization over ERM, it preserves most of the computational advantages of ERM, and since EVaR closely approximates CVaR and VaR at the same risk level, its value is also much easier to interpret. Finally, EVaR is also a coherent risk measure, unlike ERM~\cite{Ahmadi-Javid2012, ahmadi2017analytical}. 

\begin{table}
\centering
\begin{tabular}{lcccc} 
\toprule
& \multicolumn{2}{c}{Risk properties} & \multicolumn{2}{c}{Optimal policy} \\
\cmidrule(rl){2-3}  \cmidrule(rl){4-5} 
Risk measure & Coherent & Law inv. & Disc. & TRC \\  
\midrule
$\E$   & yes & yes & S & S\\
\midrule
EVaR   & yes & yes & M & \textbf{S} \\
ERM    & no & yes & M & \textbf{S} \\
NCVaR  & yes & no & S & S \\
VaR    & yes & yes & H & H \\
CVaR   & yes & yes & H & H \\
\bottomrule
\end{tabular}
\caption{Structure of optimal policies in
  risk-averse MDPs: ``S'', ``M'' and ``H'' refer to Stationary, Markov and History-dependent policies respectively.}
\label{fig:policy-comparison}
\end{table}
   
\Cref{fig:policy-comparison} puts our contribution in the context of other work on risk-averse MDP objectives. Optimal policies for VaR and CVaR are known to be history-dependent in the discounted objective~\cite{bauerle2011markov,Hau2023a} and must be history-dependent in TRC because TRC generalizes the finite-horizon objective. The TRC with Nested risk measures, such as Nested CVaR~(NCVaR), applies the risk measure in each level of the dynamic program independently and preserves most of the favorable computational properties of risk-neutral MDPs~\cite{ahmadi2021risk}. Unfortunately, nested risk measures are difficult to interpret; their value depends on the sequence in which the rewards are obtained in a complex and unpredictable way~\cite{Kupper2006} and may be unbounded even if MDPs are transient. 

While we are unaware of prior work on the TRC objective with ERM or EVaR risk-aversion {\em allowing both positive and negative rewards}, the ERM risk measure is closely related to exponential utility functions. Prior work on TRC with exponential utility functions also imposes constraints on the sign of the instantaneous rewards, such as all positive rewards~\cite{blackwell1967positive} or all negative rewards~\cite{bertsekas1991analysis, freire2016extreme, carpin2016risk, de2020risk, fei2021exponential, fei2021risk,  ahmadi2021risk, cohen2021minimax, meggendorfer2022risk}. Disallowing a mix of positive and negative rewards limits the modeling power of prior work because it requires that either all states are more desirable or all states are less desirable than the terminal state. Allowing rewards with mixed signs raises some technical challenges, which we address by employing a squeeze argument that takes advantage of MDP's transience. 

\textbf{Notation.} We use a tilde to mark random variables, e.g. $\tilde{x}$. Bold lower-case letters represent vectors, and upper-case bold letters represent matrices. Sets are either calligraphic or upper-case Greek letters. The symbol $\mathbb{X}$ represents the space of real-valued random variables. When a function is defined over an index set, such as $z\colon \{1,2, \dots, N \} \to \Real$, we also treat it interchangeably as a vector $\bm{z}\in \Real^n$ such that $z_i = z(i), \forall i = 1, \dots , n$. Finally, $\Real, \Real_+, \Real_{++}$ denote real, non-negative real, and positive real numbers, respectively. $\bar{\Real} = \Real \cup \{-\infty, \infty\}$. Given a finite set $\mathcal{Y}$, the probability simplex is $\Delta_{\mathcal{Y}}:=\{x \in \Real_+^{\mathcal{Y}} \mid  \bm{1}^{\mathrm{T}}x=1\}$.

\section{Background on risk-averse MDPs}
\label{sec:preliminaries}

\paragraph{Markov Decision Processes}
We focus on solving Markov decision processes~(MDPs)~\cite{puterman205markov}, modeled by a tuple $(\bar{\states}, \actions, \bar{p}, \bar{r}, \bm{\bar{\mu}})$, where $\bar{\states} = \{1,2, \dots , S, S+1\}$ is the finite set of states and $\actions=\{1,2,\ldots, A\}$ is the finite set of actions. The transition function $\bar{p}\colon \bar{\states} \times \actions \to \probs{\bar{\states}}$ represents the probability $\bar{p}(s, a, s')$ of transitioning to $s'\in \bar{\mathcal{S}}$ after taking $a\in \mathcal{A}$ in $s\in \bar{\mathcal{S}}$ and $\bm{\bar{p}}_{sa} \in \probs{\bar{\states}}$ is such that $(\bm{\bar{p}}_{sa})_{s'} = \bar{p}(s,a,s')$. The function $\bar{r}\colon \bar{\states} \times  \actions \times  \bar{\states} \to \Real$ represents the reward $\bar{r}(s,a,s') \in \Real$ associated with transitioning from $s\in \bar{\states}$ and $a\in \mathcal{A}$ to $s'\in \bar{\states}$. The vector $\bm{\bar{\mu}} \in \probs{\bar{\states}}$ is the initial state distribution. 

We designate the state $e := S + 1$ as a \emph{sink state} and use $\mathcal{S} = \left\{ 1, \dots , S \right\}$ to denote the set of all non-sink states. The sink state $e$ must satisfy that $\bar{p}(e,a,e) = 1$ and $\bar{r}(e,a,e) = 0$ for each $a\in \mathcal{A}$, and $\bar{\mu}_e = 0$. Throughout the paper, we use a bar to indicate whether the quantity involves the sink state $e$. Note that the sink state can indicate a goal when all rewards are negative and an undesirable terminal state when all rewards are positive. 

The following technical assumption is needed to simplify the derivation. To lift the assumption, one needs to carefully account for infinite values, which adds complexity to the results and distracts from the main ideas.
\begin{assumption} \label{asm:positive-initial}
  The initial distribution $\bm{\mu}$ satisfies that
  \[
   \bm{\mu} > \bm{0}.  
  \]
\end{assumption}

The solution to an MDP is a {\em policy}. Given a horizon $t \in \mathbb{N}$, a history-dependent policy in the set $\PiHR^t$  maps the history of states and actions to a distribution over actions. A \emph{Markov policy} $\pi \in \PiMR^t$ is a sequence of decision rules $\pi = (\bm{d}_0, \bm{d}_1, \dots , \bm{d}_{t-1})$ with $\bm{d}_k\colon  \states \to  \probs{\actions}$ the decision rule for taking actions at time $k$. The set of all \emph{randomized decision rules} is $\mathcal{D} = (\probs{\actions})^\states$. \emph{Stationary policies} $\PiSR$ are Markov policies with $\pi := (\bm{d})_{\infty} := (\bm{d}, \bm{d}, \dots)$ with the identical decision rule in every timestep. We treat decision rules and stationary policies interchangeably. The sets of \emph{deterministic} Markov and stationary policies are denoted by $\PiMD^t$ and $\PiSD$. Finally, we omit the superscript $t$ to indicate infinite horizon definitions of policies.  

The risk-neutral Total Reward Criterion~(TRC) objective is:
\begin{equation} \label{eq:objective-expected}
  \sup_{\pi\in \PiHR} \liminf_{t\to \infty}
  \E^{\pi,\bm{\mu}} \left[ \sum_{k = 0}^{t-1} r(\tilde{s}_k, \tilde{a}_k, \tilde{s}_{k+1} ) \right],
\end{equation}
where the random variables are denoted by a tilde and $\tilde{s}_k$ and $\tilde{a}_k$ represent the state from $\bar{\states}$ and action at time $k$.  The superscript $\pi$ denotes the policy that governs the actions $\tilde{a}_k$ when visiting $\tilde{s}_{k}$ and $\bm{\mu}$ denotes the initial distribution.
Finally, note that $\liminf$ gives a conservative estimate of a policy's return since the limit does not necessarily exist for non-stationary policies.

Unlike the discounted criterion, the risk-neutral TRC may be unbounded, optimal policies may not exist, or may be non-stationary~\cite{bertsekas2013stochastic,james2006analysis}. To circumvent these issues, we assume that all policies have a positive probability of eventually transitioning to the sink state. 
\begin{assumption} \label{def:transient-mdp}
The MDP is \emph{transient} for any $\pi\in \Pi_{\mathrm{SD}}$:
  \begin{equation} \label{eq:transient-condition}
    \sum_{t = 0}^{\infty} \P^{\pi,s}\left[\tilde{s}_t = s'\right] \;<\;  \infty, \qquad
    \forall s,s'\in \mathcal{S} .  
\end{equation}
\end{assumption}

\Cref{def:transient-mdp} underlies most of our results. Transient MDPs are important because their optimal policies exist and can be chosen to be stationary deterministic~\citep[theorem~4.12]{Kallenberg2021markov}. Transient MDPs are also common in stochastic games~\cite{filar2012competitive} and generalize the stochastic shortest path problem~\cite{bertsekas2013stochastic}. 

An important tool in their analysis is the \emph{spectral radius} $\rho\colon \Real^{n \times  n} \to  \Real$ which is defined for each $\bm{A}\in \Real^{n \times  n}$ as the maximum absolute eigenvalue: $\rho(\bm{A}) := \max_{i=1, \dots , n} |\lambda_i|$ where $\lambda_i$ is the $i$-th eigenvalue~\cite{Horn2013}.
\begin{lemma}[Theorem 4.8~in~\citet{Kallenberg2021markov}] \label{lem:transient-spectral-radius}
An MDP is transient if and only if $\rho(\bm{P}^{\pi}) < 1$ for all $\pi\in \PiSR$.
\end{lemma}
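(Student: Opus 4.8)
The plan is to prove the equivalence policy-by-policy: since both the transience condition and the spectral-radius condition are quantified over the same set $\PiSD$, it suffices to fix an arbitrary $\pi\in\PiSD$ and show that $\sum_{t=0}^{\infty}\P^{\pi,s}[\tilde s_t = s'] < \infty$ for all $s,s'\in\mathcal{S}$ if and only if $\rho(\bm{P}^{\pi}) < 1$, where $\bm{P}^{\pi}\in\Real^{S\times S}$ is the substochastic transition matrix of $\pi$ restricted to the non-sink states. The bridge between the two formulations is the identity $\P^{\pi,s}[\tilde s_t = s'] = \bigl((\bm{P}^{\pi})^{t}\bigr)_{s,s'}$, which follows by a straightforward induction on $t$ using the Markov property. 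Consequently, transience of $\pi$ is exactly the statement that the matrix power series $\sum_{t=0}^{\infty}(\bm{P}^{\pi})^{t}$ has finite entries; since all entries are nonnegative and the partial sums are monotone nondecreasing in $t$, this is equivalent to entrywise convergence of that series.

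For the ``if'' direction, assume $\rho(\bm{P}^{\pi}) < 1$. Then $\bm{I}-\bm{P}^{\pi}$ is nonsingular and the Neumann series converges with $\sum_{t=0}^{\infty}(\bm{P}^{\pi})^{t} = (\bm{I}-\bm{P}^{\pi})^{-1}$ (see \cite{Horn2013}). In particular every partial sum is dominated entrywise by the finite matrix $(\bm{I}-\bm{P}^{\pi})^{-1}$, so the series has finite entries and $\pi$ satisfies the transience condition.

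For the ``only if'' direction, assume $\sum_{t=0}^{\infty}(\bm{P}^{\pi})^{t}$ has finite entries. The terms of a convergent series tend to zero, so $(\bm{P}^{\pi})^{t}\to\bm{0}$ as $t\to\infty$. If $\rho(\bm{P}^{\pi})\ge 1$, choose an eigenvalue $\lambda$ with $|\lambda|\ge 1$ and a (possibly complex) eigenvector $\bm{v}\neq\bm{0}$; then $\|(\bm{P}^{\pi})^{t}\bm{v}\| = |\lambda|^{t}\|\bm{v}\| \ge \|\bm{v}\| > 0$ for every $t$, contradicting $(\bm{P}^{\pi})^{t}\to\bm{0}$. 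Hence $\rho(\bm{P}^{\pi}) < 1$. Quantifying over all $\pi\in\PiSD$ yields the stated equivalence.

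I do not expect a serious obstacle here: this is a classical fact about substochastic (more generally, nonnegative) matrices. The only points that merit a line of care are (i) making explicit that the transience condition is genuinely equivalent to entrywise convergence of $\sum_t(\bm{P}^{\pi})^t$ — which uses nonnegativity and monotonicity of the partial sums — and (ii) the step that convergence of that series forces $(\bm{P}^{\pi})^{t}\to\bm{0}$, which is precisely what pins down $\rho(\bm{P}^{\pi}) < 1$ rather than merely $\rho(\bm{P}^{\pi})\le 1$.
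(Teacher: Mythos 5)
Your argument is correct. Note that the paper itself supplies no proof of this statement: it is imported verbatim as Theorem~4.8 of \citet{Kallenberg2021markov}, so there is no in-paper argument to compare against. Your policy-by-policy reduction, the identity $\P^{\pi,s}[\tilde s_t = s'] = \bigl((\bm{P}^{\pi})^{t}\bigr)_{s,s'}$ for the substochastic matrix restricted to $\mathcal{S}$, the Neumann-series direction, and the contrapositive via $(\bm{P}^{\pi})^{t}\to\bm{0}$ together constitute the standard textbook proof of this classical fact, and they are essentially what the cited reference does. You also correctly handle the one point that is easy to fumble, namely ruling out $\rho(\bm{P}^{\pi})=1$: convergence of the series forces the powers to vanish, which is incompatible with any eigenvalue of modulus one. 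The only cosmetic remark is that the monotone-partial-sums observation is needed merely to justify that ``finite entries of the infinite sum'' is a well-posed condition for a nonnegative series; you already flag this. Nothing is missing.
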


Now, let us understand the differences between a discounted MDP and a transient MDP, which are useful in demonstrating the behavior of risk-averse objectives. Consider the MDPs in \cref{fig:discounted-transient-mdp}. There is one non-sink state $s$ and one action $a$. A triple tuple represents an action, transition probability, and a reward separately. Note that every discounted MDP can be converted to a transient MDP as described in \citet[appendix B]{su2024stationary}. For the discounted MDP, the discount factor is $\gamma$. For the transient MDP, $e$ is the sink state, and there is a positive probability $1-\epsilon$ of transiting from state $s$ to state $e$. Once the agent reaches the state $e$, it stays in $e$. For the risk-neutral objective, if $\gamma$ equals $\epsilon$, their value functions have identical values. 
However, for risk-aversion objectives, such as ERM, we show that the value functions in a discounted MDP can diverge from those in a transient MDP in \cref{sec:numerical-eval}. 

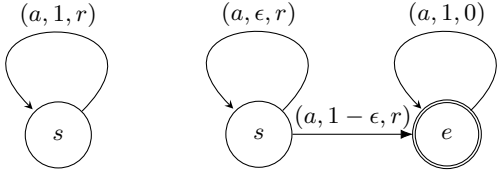
\begin{figure}
  \centering
\begin{tikzpicture}[->,-Latex,>=stealth,font=\small,node distance=25mm,el/.style = {inner sep=2pt, align=left, sloped}]
\node[ state] (1) {$s$};
\draw  
  (1) edge[loop,above] node[el,above]{$(a,1,r)$} (1);
\end{tikzpicture}
\begin{tikzpicture}[->,-Latex,>=stealth,font=\small,node distance=25mm,el/.style = {inner sep=2pt, align=left, sloped}]
\node[state] (1) {$s$};
\node[state, right of=1,double] (4) {$e$};
\draw  
  (1) edge[loop,above] node[el,above]{$(a,\epsilon, r)$} (1)
  (1) edge[ left, below] node[el,above,pos=0.5]{$(a,1-\epsilon,r)$} (4)
  (4) edge[loop,above] node[el,above]{$(a,1,0)$} (4);
\end{tikzpicture}
 \caption{Left: a discounted MDP, Right: a transient MDP}
 \label{fig:discounted-transient-mdp}
\end{figure}

\paragraph{Monetary risk measures}
Monetary risk measures aim to generalize the expectation operator to account for the spread of the random variable. 
\emph{Entropic risk measure}~(ERM) is a popular risk measure, defined for any risk level $\beta > 0$ and $\tilde{x}\in \mathbb{X}$ as~\cite{Follmer2016stochastic}
\begin{align} \label{eq:defn_ent_risk}
  \erm{\beta}{\tilde{x}} \;=\; - \beta^{-1} \cdot \log  \E \exp{-\beta\cdot  \tilde{x}} .
\end{align}
and extended to $\beta \in [0, \infty]$ as $\ermo_0[\tilde{x}] = \lim_{\beta \to 0^{+}} \erm{\beta}{\tilde{x}} = \E[\tilde{x}] $ and $\ermo_{\infty}[\tilde{x}] = \lim_{\beta\to \infty} \erm{\beta}{\tilde{x}} = \operatorname{ess} \inf[\tilde{x}]$.
ERM plays a unique role in sequential decision-making because it is the only law-invariant risk measure that satisfies the tower property (e.g., \citet[proposition A.1]{su2024stationary}), which is essential in constructing dynamic programs~\cite{hau2023entropic}. Unfortunately, two significant limitations of ERM hinder its practical applications. First, ERM is not positively homogenous and, therefore, the risk value depends on the scale of the rewards, and ERM is not coherent~\cite{Follmer2016stochastic, hau2023entropic, Ahmadi-Javid2012}. Second, the risk parameter $\beta$ is challenging to interpret and does not relate well to other standard risk measures, like VaR or CVaR.

For these reasons, we focus on the \emph{Entropic Value at Risk}~(EVaR), defined as, for a given $\alpha \in (0,1)$,
\begin{equation}\label{eq:evar-def-app}
  \begin{aligned}
    \evar{\alpha}{\tilde{x}}
   & \;=\;
    \sup_{\beta>0}  -\beta^{-1} \log \left(\alpha^{-1} \E \exp{ -\beta \tilde{x} } \right) \\
   & \;=\;
    \sup_{\beta>0}  \erm{\beta}{\tilde{x}} + \beta^{-1} \log  \alpha,
  \end{aligned}
\end{equation}
and extended to $\evar{0}{\tilde{x}} = \ess \inf [\tilde{x}]$ and $\evar{1}{\tilde{x}} = \Ex{\tilde{x}}$~\cite{Ahmadi-Javid2012}. It is important to note that the supremum in~\eqref{eq:evar-def-app} may not be attained even when $\tilde{x}$ is a finite discrete random variable~\cite{ahmadi2017analytical}.

EVaR addresses the limitations of ERM while preserving its benefits. EVaR is coherent and positively homogenous. EVaR is also a good approximation to interpretable quantile-based risk measures, like VaR and CVaR~\cite{Ahmadi-Javid2012,hau2023entropic}. 

\paragraph{Risk-averse MDPs.}
Risk-averse MDPs, using static VaR and CVaR risk measures, under the discounted criterion received abundant attention~\cite{Hau2023a, bauerle2011markov, bauerle2022markov, pflug2016time, li2022quantile}, showing that these objectives require history-dependent optimal policies. In contrast, nested risk measures under the TRC may admit stationary policies that can be computed using dynamic programming~\cite{ahmadi2021risk,meggendorfer2022risk,de2020risk,gavriel2012risk}. However, the TRC with nested CVaR can be unbounded~\citep[proposition C.1]{su2024stationary}. Recent work has shown that optimal Markov policies exist for EVaR discounted objectives, and they can be computed via dynamic programming~\cite{hau2023entropic}, building upon similar results established for ERM~\cite{Chung1987}. However, in TRC with ERM, the value functions may also be unbounded~\citep[proposition D.1]{su2024stationary}.

\section{Solving ERM Total Reward Criterion}
\label{sec:ssp-with-erm}

This section shows that an optimal stationary policy exists for ERM-TRC and that the value function satisfies dynamic programming equations. We then outline algorithms for computing it.

Our objective in this section is to maximize the ERM-TRC objective for some given $\beta > 0$ defined as
\begin{equation} \label{eq:sup-erm}
  \sup_{\pi\in \PiHR}  \liminf_{t\to \infty}
  \ermp{\beta}{\pi,\bm{\mu}}
   {\sum_{k=0}^{t-1} r(\tilde{s}_k,\tilde{a}_k,\tilde{s}_{k+1})}.
 \end{equation}
The definition employs limit inferior because the limit may not exist for non-stationary policies. 
Return functions $g_t\colon \PiHR \times \Real_{++} \to  \Real$ and $g_t\opt \colon \Real_{++} \to \Real$ for a horizon $t \in \mathbb{N}$ and the infinite-horizon versions $g_t\colon \PiHR \times \Real_{++} \to  \bar{\Real}$ and $g_t\opt \colon \Real_{++} \to \bar{\Real}$ are defined 
\begin{equation}
  \label{eq:g-definitions}
  \begin{aligned}
  g_t(\pi, \beta)  &:= \ermp{\beta}{\pi,\bm{\mu}}
  {\sum_{k=0}^{t-1} r(\tilde{s}_k,\tilde{a}_k,\tilde{s}_{k+1})}, \\
    g_t\opt(\beta) &:= \sup_{\pi\in \PiHR} g_t(\pi, \beta), \\
  g_{\infty}(\pi, \beta) &:= \liminf_{t\to \infty} g_t(\pi, \beta), \\
   g_{\infty}\opt(\beta) &:= \liminf_{t\to \infty} g_t\opt (\beta). 
  \end{aligned}
\end{equation}
Note that the functions $g_{\infty}$ and $g_{\infty}\opt$ can return infinite values and that~\eqref{eq:sup-erm} differs from  $g\opt_{\infty}$ in the order of the limit and the supremum. Finally,  when $\beta = 0$, we assume that all $g$ functions are defined as the expectation.
In the remainder of the section, we assume that the risk level $\beta > 0$ is fixed and omit it in notations when its value is unambiguous from the context. 

\subsection{Finite Horizon }

We commence the analysis with definitions and basic properties for the finite horizon criterion. To the best of our knowledge, this analysis is original in the context of the ERM but builds on similar approaches employed in the study of exponential utility functions.

Finite-horizon functions $\bm{v}^t( \pi ) \in \Real^{S}$ and $\bm{v}^{t,\star}\in \Real^{S}$ are defined for each horizon $t \in \mathbb{N}$ and policy $\pi\in \PiMD$, $s\in \mathcal{S}$ as 
\begin{equation}
\label{eq:erm-value-function}
\begin{aligned}
      v^t_s(\pi)
 & \;:=\; \ermp{\beta}{\pi,s}{\sum_{k=0}^{t-1} r(\tilde{s}_k,\tilde{a}_k,\tilde{s}_{k+1})},\\
  v^{t,\star}_s & \;:=\; \max_{\pi\in \PiMD} v^t_s(\pi),
\end{aligned}
\end{equation}
and $v_{e}^t(\pi) := 0$.

Because the nonlinearity of ERM complicates the analysis, it will be convenient to instead rely on \emph{exponential value function} $\bm{w}^{t}(\pi) \in \mathbb{R}^{S}$ for $\pi\in \PiMD$, $t \in \mathbb{N}$, and $s\in \states$ that satisfy
\begin{align} \label{eq:exp-value-function}
w_s^t(\pi)
&:= - \exp {  -\beta \cdot  v_s^t(\pi)  }, \\
v^t_s(\pi)
&= - \beta^{-1} \log (-  w^t_s(\pi)).
\end{align}
The optimal $\bm{w}^{t,\star} \in \mathbb{R}^{S}$ is defined analogously from $\bm{v}^{t,\star}$. Note that $\bm{w}^t < \bm{0}$ (componentwise) and $\bm{w}^0(\pi) = \bm{w}^{0,\star} = -\bm{1}$ for any $\pi \in \PiMD$. Similar exponential value functions have been used previously in exponential utility function objectives~\cite{denardo1979optimal,patek2001terminating}, in the analysis of robust MDPs, and even in regularized MDPs (see \citet{Grand-Clement2022} and references therein).

One can define a corresponding \emph{exponential Bellman operator} for any $\bm{w}\in \Real^S$ as
\begin{equation} \label{eq:exp-bellman-definition}
  \begin{aligned}
  \BellIE^{\bm{d}} \bm{w} &\;:=\;
  \bm{B}^{\bm{d}} \bm{w} - \bm{b}^{\bm{d}},
  \\
  \BellIE\opt  \bm{w} &\;:=\;
  \max_{\bm{d}\in \mathcal{D}} L^{\bm{d}} \bm{w} = \max_{\bm{d}\in \ext \mathcal{D}} L^{\bm{d}} \bm{w},
  \end{aligned}
\end{equation}
where $\ext \mathcal{D}$ is the set of extreme points of $\mathcal{D}$ corresponding to deterministic decision rules and $\bm{B}^{\bm{d}} \in \Real_+^{S \times   S}$ and $\bm{b}^{\bm{d}} \in  \Real_+^{S}$ are defined for $s, s'\in \mathcal{S}$ and $\bm{d}\in \mathcal{D}$ as
\begin{subequations} \label{eq:exponential-definitions}
\begin{align}
B_{s,s'}^{\bm{d}}
& :=
\sum_{a\in \mathcal{A}}  p(s, a, s') \cdot d_a(s) \cdot e^{-\beta \cdot  r(s,a,s')} ,\\
b_{s}^{\bm{d}} 
&:=
\sum_{a\in \mathcal{A}}  p(s, a, e) \cdot d_a(s) \cdot e^{-\beta \cdot  r(s,a,e)} .
\end{align}
\end{subequations}

The following theorem shows that $L$ can be used to compute $\bm{w}$. We use the shorthand notation $\pi_{1{:}t{-}1} = (\bm{d}_1, \dots , \bm{d}_{t-1}) \in \PiMR^{t-1}$ to denote the tail of $\pi$ that starts with $\bm{d}_1$ instead of $\bm{d}_0$.
\begin{theorem}\label{thm:exponential-bellman}
  For each $t =1, \dots $, and $\pi = (\bm{d}_0, \dots , \bm{d}_{t-1}) \in \PiMR^t$, the exponential values satisfy that
\[
\begin{aligned}
\bm{w}^t(\pi) &= L^{\bm{d}_t} \bm{w}^{t-1}(\pi_{1{:}t{-}1}),
 & \bm{w}^0(\pi) &= - \bm{1},
\\
\bm{w}^{t, \star } &= L\opt \bm{w}^{t-1,\star } = \bm{w}^t(\pi\opt) \ge \bm{w}^t(\pi),
& \bm{w}^{0,\star } &= -\bm{1},
\end{aligned}
\]
for some $\pi\opt \in \PiMD^t$.
\end{theorem}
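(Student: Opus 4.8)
The plan is to strip the $\mathrm{ERM}$ down to a plain expectation, then prove the one-step recursion $\bm w^t(\pi)=L^{\bm d_0}\bm w^{t-1}(\pi_{1:t-1})$ by conditioning on the first transition, and finally obtain the optimality statement by an induction on the horizon $t$ that combines this recursion with the monotonicity of the operators $L^{\bm d}$. For the reduction, I would combine \eqref{eq:defn_ent_risk} with the definition \eqref{eq:exp-value-function} to record that, for $s\in\states$,
\[
  w^t_s(\pi) \;=\; -\,\E^{\pi,s}\!\left[ \exp{ -\beta \sum_{k=0}^{t-1} r(\tilde s_k,\tilde a_k,\tilde s_{k+1}) } \right],
\]
which is a finite number in $(-\infty,0)$ since the exponent is bounded over a finite horizon; the empty sum gives $\bm w^0(\pi)=-\bm 1$, and because $e$ is absorbing with $r(e,\cdot,e)=0$ the analogous quantity started at $e$ equals $-1$ for every horizon.

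For the recursion, fix $\pi=(\bm d_0,\dots,\bm d_{t-1})\in\PiMR^t$ and $s\in\states$, write $r_k:=r(\tilde s_k,\tilde a_k,\tilde s_{k+1})$, and condition on the time-$0$ action and the time-$1$ state. Since the kernel $p$ and the reward $r$ do not depend on time, conditioning on $\tilde s_1=s'$ leaves a process whose law on the remaining $t-1$ steps coincides with that of the process started at $s'$ under the shifted policy $\pi_{1:t-1}=(\bm d_1,\dots,\bm d_{t-1})$; hence
\[
  \E^{\pi,s}\!\left[ \exp{ -\beta \sum_{k=0}^{t-1} r_k } \right]
  = \sum_{a\in\actions} d_a(s) \sum_{s'\in\bar{\states}} p(s,a,s')\, e^{-\beta r(s,a,s')} \, \E^{\pi_{1:t-1},s'}\!\left[ \exp{ -\beta \sum_{k=0}^{t-2} r_k } \right].
\]
Splitting the inner sum into $s'\in\states$, where the trailing expectation equals $-w^{t-1}_{s'}(\pi_{1:t-1})$, and $s'=e$, where it equals $1$, and comparing with \eqref{eq:exponential-definitions}, the right-hand side equals $-\bigl(\bm B^{\bm d_0}\bm w^{t-1}(\pi_{1:t-1})\bigr)_s + b^{\bm d_0}_s$. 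Negating and invoking \eqref{eq:exp-bellman-definition} gives $\bm w^t(\pi)=\bm B^{\bm d_0}\bm w^{t-1}(\pi_{1:t-1})-\bm b^{\bm d_0}=L^{\bm d_0}\bm w^{t-1}(\pi_{1:t-1})$, i.e.\ the first display (the rule peeled off being the initial $\bm d_0$), and with $\bm w^0(\pi)=-\bm 1$ this settles the first line.

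For the optimality line I would argue by induction on $t$, using two elementary facts: (i) each $L^{\bm d}$ is monotone, since $\bm B^{\bm d}\ge\bm 0$ entrywise forces $\bm w\le\bm w'\Rightarrow L^{\bm d}\bm w\le L^{\bm d}\bm w'$; and (ii) since $v\mapsto-\exp{-\beta v}$ is increasing, $\bm w^{t,\star}$ built from $\bm v^{t,\star}=\max_{\pi\in\PiMD^t}\bm v^t(\pi)$ satisfies $w^{t,\star}_s=\max_{\pi\in\PiMD^t}w^t_s(\pi)$. The base case $t=0$ is immediate. Assuming the claim at horizon $t-1$, there is $\bar\pi\in\PiMD^{t-1}$ with $\bm w^{t-1}(\bar\pi)=\bm w^{t-1,\star}\ge\bm w^{t-1}(\pi')$ for all $\pi'\in\PiMR^{t-1}$; then for any $\pi=(\bm d_0,\pi_{1:t-1})\in\PiMR^t$, the recursion and monotonicity give
\[
  \bm w^t(\pi)=L^{\bm d_0}\bm w^{t-1}(\pi_{1:t-1}) \le L^{\bm d_0}\bm w^{t-1,\star} \le \max_{\bm d\in\mathcal D} L^{\bm d}\bm w^{t-1,\star}=L\opt\bm w^{t-1,\star}.
\]
Conversely, take $\bm d^\star\in\ext\mathcal D$ attaining this maximum — it exists because $\bm d\mapsto L^{\bm d}\bm w$ is affine on the product of simplices $\mathcal D$, so the max is attained at an extreme (deterministic) point as in \eqref{eq:exp-bellman-definition} — and set $\pi^\star:=(\bm d^\star,\bar\pi)\in\PiMD^t$; then $\bm w^t(\pi^\star)=L^{\bm d^\star}\bm w^{t-1}(\bar\pi)=L^{\bm d^\star}\bm w^{t-1,\star}=L\opt\bm w^{t-1,\star}$. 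Chaining these with fact (ii) (and $\pi^\star\in\PiMD^t\subseteq\PiMR^t$) yields $\bm w^{t,\star}=L\opt\bm w^{t-1,\star}=\bm w^t(\pi^\star)\ge\bm w^t(\pi)$ for all $\pi\in\PiMR^t$, closing the induction.

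The only genuinely delicate point is the conditioning identity: one must write out the finite-horizon joint law of $(\tilde s_k,\tilde a_k)$ under a randomized Markov policy, factor it at time $1$, and verify through time-homogeneity of $p$ and $r$ that the conditional law of the tail equals the law under $\pi_{1:t-1}$ started at $s'$ — this is exactly where the Markov (rather than history-dependent) structure of $\pi$ is used. Everything else — monotonicity of $L^{\bm d}$, attainment at an extreme point, and the inductive bookkeeping — is routine, though one must keep the sink state $e$ out of the vectors $\bm w^t\in\Real^S$ and let it enter only through $\bm b^{\bm d}$, using $r(e,\cdot,e)=0$.
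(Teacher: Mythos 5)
Your proof is correct and takes essentially the same route as the paper's: the conditioning identity you derive for $-\E^{\pi,s}\bigl[\exp{-\beta\sum_{k}r_k}\bigr]$ is exactly the expansion the paper performs when it derives $(\BellIE^{\bm d}\bm w)_s$ in terms of $\bm B^{\bm d}$ and $\bm b^{\bm d}$ (using $w_e=-1$), and your monotonicity-based induction for the optimality line is the standard dynamic-programming argument the paper sketches and otherwise defers to prior work on ERM and exponential utilities. The only differences are cosmetic — you work directly with the exponentiated return instead of first defining the ERM Bellman operator $\BellI^{\bm d}$ and conjugating it, and you justify optimality of deterministic decision rules via affinity of $\bm d\mapsto \BellIE^{\bm d}\bm w$ on $\mathcal D$ rather than via mixture quasi-convexity of ERM — and you correctly read the statement's $L^{\bm d_t}$ as a typo for $L^{\bm d_0}$.
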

The proof of \cref{thm:exponential-bellman} is standard and has been established both in the context of ERMs~\cite{hau2023entropic} and utility functions~\cite{patek1997stochastic}.

The following corollary follows directly from \cref{thm:exponential-bellman} by algebraic manipulation and by the monotonicity of exponential value function transformation and the ERM.
\begin{corollary} 
\label{coro:g-pi-beta}
We have that 
\begin{align*} 
g_t(\pi, \beta)  &= \ermo^{\bm{\mu}}_{\beta}\left[ v^t_{\tilde{s}_0}(\pi) \right] ,\\
  g_t\opt(\beta) &= \ermo^{\bm{\mu}}_{\beta}\left[ v^{t,\star}_{\tilde{s}_0} \right]
                   = \max_{\pi\in \PiMD} \ermo^{\bm{\mu}}_{\beta}\left[ v^t_{\tilde{s}_0} (\pi) \right].
\end{align*}
\end{corollary}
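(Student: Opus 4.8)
The plan is to derive the two identities directly from \cref{thm:exponential-bellman} by unwinding the definitions and exploiting the monotonicity of the map $x \mapsto -\beta^{-1}\log(-x)$ on the negative reals. First I would fix a policy $\pi = (\bm{d}_0, \dots, \bm{d}_{t-1}) \in \PiMD$ and recall that, by definition \eqref{eq:g-definitions} together with \eqref{eq:erm-value-function},
\[
  g_t(\pi,\beta) \;=\; \ermp{\beta}{\pi,\bm{\mu}}{\textstyle\sum_{k=0}^{t} r(\tilde{s}_k,\tilde{a}_k,\tilde{s}_{k+1})}.
\]
The key step is a conditioning argument on the first state $\tilde{s}_0 \sim \bm{\mu}$: writing out the ERM as $-\beta^{-1}\log \E^{\pi,\bm{\mu}}\exp{-\beta \sum_k r}$ and using the tower property of the expectation operator over $\tilde{s}_0$, one gets
\[
  \E^{\pi,\bm{\mu}}\exp{-\beta\textstyle\sum_{k=0}^{t} r}
  \;=\; \sum_{s\in\states}\mu_s\, \E^{\pi,s}\exp{-\beta\textstyle\sum_{k=0}^{t} r}
  \;=\; \sum_{s\in\states}\mu_s\,\exp{-\beta\, v^{t+1}_s(\pi)} ,
\]
where the last equality is the definition of $v^{t+1}_s(\pi)$ in \eqref{eq:erm-value-function} (note the horizon index: $v^{t+1}$ accumulates rewards from $k=0$ to $t$). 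Taking $-\beta^{-1}\log$ of both sides and recognizing the right-hand side as $\ermp{\beta}{\bm{\mu}}{v^{t+1}_{\tilde{s}_0}(\pi)}$ yields the first identity; here I should double-check the exact horizon bookkeeping between the two display conventions, but this is routine.

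For the second identity I would combine the first one with \cref{thm:exponential-bellman}. That theorem gives the existence of $\pi\opt \in \PiMD^{t+1}$ with $\bm{w}^{t+1}(\pi\opt) = \bm{w}^{t+1,\star} \ge \bm{w}^{t+1}(\pi)$ for all $\pi$, and since $v^{t+1}_s(\pi) = -\beta^{-1}\log(-w^{t+1}_s(\pi))$ with $x\mapsto -\beta^{-1}\log(-x)$ strictly increasing on $\Real_{<0}$, this transfers to $\bm{v}^{t+1,\star} = \bm{v}^{t+1}(\pi\opt) \ge \bm{v}^{t+1}(\pi)$ pointwise. Then, because ERM is monotone (it is a monetary risk measure, or directly: $\exp{-\beta\cdot}$ is decreasing and $-\beta^{-1}\log$ is increasing), $\bm{v}^{t+1}(\pi) \le \bm{v}^{t+1,\star}$ pointwise implies $\ermp{\beta}{\bm{\mu}}{v^{t+1}_{\tilde{s}_0}(\pi)} \le \ermp{\beta}{\bm{\mu}}{v^{t+1,\star}_{\tilde{s}_0}}$. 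Combining with the first identity, $g_t(\pi,\beta) \le \ermp{\beta}{\bm{\mu}}{v^{t+1,\star}_{\tilde{s}_0}}$ for every $\pi$, with equality attained at $\pi\opt$; taking the supremum over $\pi \in \PiHR$ and invoking \cref{thm:exponential-bellman} (which already reduces the history-dependent supremum to the Markov-deterministic one) gives $g_t\opt(\beta) = \ermp{\beta}{\bm{\mu}}{v^{t+1,\star}_{\tilde{s}_0}} = \max_{\pi\in\PiMD}\ermp{\beta}{\bm{\mu}}{v^{t+1}_{\tilde{s}_0}(\pi)}$.

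The main obstacle is not conceptual but notational: I need to be careful that the supremum over $\PiHR$ in the definition of $g_t\opt$ genuinely collapses to $\PiMD$, which relies on \cref{thm:exponential-bellman} asserting the optimal $\pi\opt$ is Markov deterministic and dominates all policies including history-dependent ones — so I should make sure the statement of that theorem (or a remark around it) really covers $\PiHR$, not just $\PiMR^t$. If it only covers $\PiMR^t$, I would add one line using the standard fact that for expected-utility (hence ERM) objectives, history-dependent randomized policies cannot outperform Markov deterministic ones, which itself follows from a backward-induction dominance argument identical to the one proving \cref{thm:exponential-bellman}. A secondary point to keep straight is the sign and direction of monotonicity in the transformation $w \leftrightarrow v$, but since $\bm{w}^t < \bm{0}$ always, the map is well-defined and strictly increasing, so no edge cases arise. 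Given all pieces are already in place, the proof is short: state the conditioning identity, apply the transformation's monotonicity, apply ERM's monotonicity, and quote \cref{thm:exponential-bellman}.
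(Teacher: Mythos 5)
Your proposal is correct and follows essentially the same route the paper intends: the corollary is stated as a direct consequence of \cref{thm:exponential-bellman} via conditioning on $\tilde{s}_0$, the monotonicity of the transformation $w \mapsto -\beta^{-1}\log(-w)$, the monotonicity of ERM, and the reduction from $\PiHR$ to $\PiMD$ (which the paper supplies separately as \cref{thm:optim-mark-polic}), all of which you invoke. Your observation about the horizon bookkeeping is also well taken --- the paper's conventions in \eqref{eq:g-definitions} ($k=0$ to $t$) and \eqref{eq:erm-value-function} ($k=0$ to $t-1$) do differ by one, so the identity as displayed should read $v^{t+1}$ rather than $v^t$ under a literal reading of those definitions.
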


\subsection{Infinite Horizon}

We now turn to construct infinite-horizon optimal policies as a limiting case of the finite horizon. An important quantity is the infinite-horizon exponential value function defined for each $\pi\in \PiHR$ as
\[
  \bm{w}^{\infty}(\pi)
  \; :=\;  \liminf_{t \to \infty} \bm{w}^t(\pi),
  \quad
  \bm{w}^{\infty,\star}
  \; :=\;  \liminf_{t \to \infty} \bm{w}^{t,\star}.
\]
Note again that we use the inferior limit because the limit may not be defined for non-stationary policies. The limiting infinite-horizon value functions $\bm{w}^{\infty}(\pi)$ and $\bm{w}^{\infty,\star}$ are defined analogously from $\bm{v}^t(\pi)$ and $\bm{v}^{t,\star}$ using the inferior limit. 
The following theorem is the main result of this section. It shows that for an infinite horizon, the optimal exponential value function is attained by a stationary deterministic policy and is a fixed point of the exponential Bellman operator. 
\begin{theorem} \label{thm:erm-main-convergence}
Whenever $\bm{w}^{\infty,\star } > -\bm{\infty}$ there exists $\pi\opt = (\bm{d}\opt)_{\infty} \in \PiSD$ such that
  \[
    \bm{w}^{\infty, \star }
    \; =\; 
    \bm{w}^{\infty}(\pi\opt)
    \; =\; 
    L^{\bm{d}\opt} \bm{w}^{\infty, \star },
  \]
and $\bm{w}^{\infty, \star }$ is the unique value that satisfies this equation.
\end{theorem}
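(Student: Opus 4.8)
The plan is to realize $\bm{w}^{\infty,\star}$ as the value of a stationary deterministic policy by squeezing the (non-monotone) sequence $\bm{w}^{t,\star}$ between an increasing and a decreasing sequence of exponential Bellman iterates. First I would collect the elementary facts I need: each $L^{\bm{d}}$ is affine with $L^{\bm{d}}\bm{w}-L^{\bm{d}}\bm{w}'=\bm{B}^{\bm{d}}(\bm{w}-\bm{w}')$ and $\bm{B}^{\bm{d}}\ge\bm{0}$, so $L^{\bm{d}}$ and $L\opt=\max_{\bm{d}\in\ext\mathcal{D}}L^{\bm{d}}$ are order-preserving and continuous (a finite max of affine maps); $\bm{w}\le\bm{0}$ implies $L^{\bm{d}}\bm{w}\le\bm{0}$, and $\bm{w}^{t,\star}\le\bm{0}$ for all $t$ by \Cref{thm:exponential-bellman}; and, since $\bm{w}^{\infty,\star}=\liminf_t\bm{w}^{t,\star}$ is finite by hypothesis while every $\bm{w}^{t,\star}$ is finite, the sequence $(\bm{w}^{t,\star})_t$ is bounded, so $-\bm{\infty}<\underline{\bm{w}}:=\liminf_t\bm{w}^{t,\star}\le\overline{\bm{w}}:=\limsup_t\bm{w}^{t,\star}\le\bm{0}$.

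Next I would turn the recursion $\bm{w}^{t+1,\star}=L\opt\bm{w}^{t,\star}$ (from \Cref{thm:exponential-bellman}) into one-sided fixed-point inequalities. Writing $\bm{u}^t:=\inf_{s\ge t}\bm{w}^{s,\star}$, which increases to $\underline{\bm{w}}$, monotonicity of $L\opt$ gives $\bm{u}^{t+1}=\inf_{s\ge t}L\opt\bm{w}^{s,\star}\ge L\opt\bm{u}^t$, and letting $t\to\infty$ with continuity of $L\opt$ yields $\underline{\bm{w}}\ge L\opt\underline{\bm{w}}$; the symmetric argument with $\bm{v}^t:=\sup_{s\ge t}\bm{w}^{s,\star}\downarrow\overline{\bm{w}}$ gives $\overline{\bm{w}}\le L\opt\overline{\bm{w}}$.

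Then comes the squeeze. Pick a deterministic $\bm{d}\opt$ attaining $L\opt\overline{\bm{w}}=L^{\bm{d}\opt}\overline{\bm{w}}$; since $\overline{\bm{w}}\le L^{\bm{d}\opt}\overline{\bm{w}}$, the iterates $(L^{\bm{d}\opt})^t\overline{\bm{w}}$ are nondecreasing and bounded above by $\bm{0}$, hence converge to a finite fixed point $\bm{z}\opt$ of $L^{\bm{d}\opt}$ with $\overline{\bm{w}}\le\bm{z}\opt\le\bm{0}$. The crucial step is to show $\rho(\bm{B}^{\bm{d}\opt})<1$: from $\bm{z}\opt=\bm{B}^{\bm{d}\opt}\bm{z}\opt-\bm{b}^{\bm{d}\opt}$ with $\bm{z}\opt$ finite one gets $\bm{B}^{\bm{d}\opt}(-\bm{z}\opt)\le -\bm{z}\opt$ with $-\bm{z}\opt\ge\bm{0}$, and combining this Collatz--Wielandt-type bound with transience of the MDP (\Cref{lem:transient-spectral-radius}, so $\rho(\bm{P}^{\bm{d}\opt})<1$, i.e.\ $\bm{d}\opt$ cannot confine the chain to the non-sink states) rules out $\rho(\bm{B}^{\bm{d}\opt})=1$ together with the zero-reward cycles it would entail. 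Granting $\rho(\bm{B}^{\bm{d}\opt})<1$, the matrix $\bm{I}-\bm{B}^{\bm{d}\opt}$ is invertible with $(\bm{I}-\bm{B}^{\bm{d}\opt})^{-1}\ge\bm{0}$, so $\bm{z}\opt=-(\bm{I}-\bm{B}^{\bm{d}\opt})^{-1}\bm{b}^{\bm{d}\opt}$ is the unique fixed point of $L^{\bm{d}\opt}$, and $(\bm{B}^{\bm{d}\opt})^t\to\bm{0}$ forces $(L^{\bm{d}\opt})^t\bm{w}\to\bm{z}\opt$ from every starting point; in particular $\bm{w}^{\infty}((\bm{d}\opt)_{\infty})=\lim_t(L^{\bm{d}\opt})^t(-\bm{1})=\bm{z}\opt$ by \Cref{thm:exponential-bellman}. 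Optimality ($\bm{w}^t(\pi)\le\bm{w}^{t,\star}$) then gives $\bm{z}\opt=\bm{w}^{\infty}((\bm{d}\opt)_{\infty})\le\underline{\bm{w}}$, while $\underline{\bm{w}}\le\overline{\bm{w}}\le\bm{z}\opt$; hence $\underline{\bm{w}}=\overline{\bm{w}}=\bm{z}\opt$, so $\bm{w}^{t,\star}$ converges to $\bm{w}^{\infty,\star}=\bm{w}^{\infty}((\bm{d}\opt)_{\infty})$ and $\bm{w}^{\infty,\star}=L^{\bm{d}\opt}\bm{w}^{\infty,\star}$ (whence also $\bm{w}^{\infty,\star}=L\opt\bm{w}^{\infty,\star}$, since $\bm{w}^{\infty,\star}\ge L\opt\bm{w}^{\infty,\star}\ge L^{\bm{d}\opt}\bm{w}^{\infty,\star}=\bm{w}^{\infty,\star}$).

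For uniqueness, as a fixed point of $L^{\bm{d}\opt}$ it is immediate from $\rho(\bm{B}^{\bm{d}\opt})<1$; and reading the claim as uniqueness of the solution $\bm{w}\le\bm{0}$ of $\bm{w}=L\opt\bm{w}$, I would take $\bm{d}$ greedy for such a $\bm{w}$, so $\bm{w}=L^{\bm{d}}\bm{w}$ and $\bm{w}^{\infty,\star}\ge L^{\bm{d}}\bm{w}^{\infty,\star}$, subtract to get $(\bm{I}-\bm{B}^{\bm{d}})(\bm{w}^{\infty,\star}-\bm{w})\ge\bm{0}$, and invoke $\rho(\bm{B}^{\bm{d}})<1$ (same transience argument) and $(\bm{I}-\bm{B}^{\bm{d}})^{-1}\ge\bm{0}$ to conclude $\bm{w}^{\infty,\star}\ge\bm{w}$; swapping roles gives equality. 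I expect the decisive obstacle to be precisely the spectral fact that every decision rule greedy with respect to a finite nonpositive fixed point has $\rho(\bm{B}^{\bm{d}})<1$: unlike in the all-positive or all-negative reward regimes, the matrices $\bm{B}^{\bm{d}}$ need not be (sub)stochastic and the iteration is not monotone, so transience and the finiteness hypothesis must be fused carefully to exclude zero-reward cycles and spurious fixed points — everything else reduces to routine manipulation of bounded monotone sequences.
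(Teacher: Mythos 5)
Your overall architecture is sound and closely parallels the paper's own argument: squeeze $\bm{w}^{t,\star}$ between a nonincreasing upper sequence and a nondecreasing lower sequence of exponential Bellman iterates, extract a greedy $\bm{d}\opt$, show $\rho(\bm{B}^{\bm{d}\opt})<1$, and let the resulting contraction collapse the two bounds. (The paper runs the upper iteration from $\bm{0}$ and the lower one from $-\bm{1}$ rather than working with $\liminf$/$\limsup$ of the actual sequence, but your variant is equally valid, and your eigenvector identity $(1-\rho)\bm{f}\tr(-\bm{z}\opt)=\bm{f}\tr\bm{b}^{\bm{d}\opt}$ is, if anything, a slightly cleaner way to extract $\rho<1$ once the key inequality is available.)

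The genuine gap is exactly the step you wrote as ``Granting $\rho(\bm{B}^{\bm{d}\opt})<1$'': the claim that transience plus a finite nonpositive fixed point forces the spectral radius below one is the entire technical content of the theorem, and ``combining a Collatz--Wielandt-type bound with transience rules out $\rho=1$ and the zero-reward cycles it would entail'' is not a proof of it. Note that $\bm{B}^{\bm{d}}\bm{y}\le\bm{y}$ with $\bm{y}=-\bm{z}\opt\ge\bm{0}$ does not even give $\rho\le 1$ unless $\bm{y}>\bm{0}$ strictly, and $\rho(\bm{P}^{\bm{d}})<1$ says nothing directly about $\rho(\bm{B}^{\bm{d}})$ because the entries of $\bm{B}^{\bm{d}}$ are rescaled by $e^{-\beta r}$, which can exceed $1$. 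What is actually needed is the paper's \cref{lem:exponential-eigen-non-zero}: the Perron left-eigenvector $\bm{f}\ge\bm{0}$ of $\bm{B}^{\bm{d}}$ satisfies $\bm{f}\tr\bm{b}^{\bm{d}}>0$. The paper proves this by a support argument (\cref{lem:pro-goal-state-zero}): since $\bm{B}^{\bm{d}},\bm{b}^{\bm{d}}$ and $\bm{P}^{\bm{d}},\bm{p}^{\bm{d}}$ have identical zero patterns up to the multiplicative constants $c_{\mathrm l},c_{\mathrm u}>0$, the hypothesis $\bm{f}\tr\bm{b}^{\bm{d}}=0$ propagates (via $\bm{f}\tr\bm{B}^{\bm{d}}=\rho\bm{f}\tr$) to $\bm{f}\tr(\bm{P}^{\bm{d}})^t\bm{p}^{\bm{d}}=0$ for all $t$, whence the telescoping identity of \cref{lem:termination-probability} and $\rho(\bm{P}^{\bm{d}})<1$ (\cref{lem:transient-spectral-radius}) yield $\bm{f}\tr\bm{1}=0$, a contradiction. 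Only with $\bm{f}\tr\bm{b}^{\bm{d}}>0$ in hand does your identity $(1-\rho)\bm{f}\tr\bm{y}=\bm{f}\tr\bm{b}^{\bm{d}}$ deliver $\rho(\bm{B}^{\bm{d}\opt})<1$ (and the same lemma is needed again for the greedy rule in your uniqueness argument). You correctly diagnosed this as the decisive obstacle, but diagnosing it is not the same as clearing it; as written, the proof is incomplete at its load-bearing point.
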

\begin{corollary} \label{cor:erm-optimal-stationary}
Asuming the hypothesis of \cref{thm:erm-main-convergence}, we have that \(   \bm{v}^{\infty, \star} =  \bm{v}^{\infty}(\pi\opt) \) and
  \[
  g_{\infty}\opt(\beta) = \ermo^{\bm{\mu}}_{\beta}\left[ v^{\infty,\star}_{\tilde{s}_0} \right]
                   = \max_{\pi\in \PiSD} \ermo^{\bm{\mu}}_{\beta}\left[ v^{\infty}_{\tilde{s}_0} (\pi) \right].
  \]
\end{corollary}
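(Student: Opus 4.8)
The plan is to transport \cref{thm:erm-main-convergence}, which is phrased in terms of the exponential value functions $\bm{w}$, back to the ERM value functions $\bm{v}$ and to the objective $g$, using that $\bm{v}$ and $\bm{w}$ are linked by a continuous, strictly increasing bijection. Throughout I work under the hypothesis of \cref{thm:erm-main-convergence}, namely $\bm{w}^{\infty,\star} > -\bm{\infty}$ (the complementary case is covered by the unboundedness phenomenon of \cref{prop:erm-unbounded}), and I let $\pi\opt = (\bm{d}\opt)_{\infty} \in \PiSD$ with $\bm{w}^{\infty,\star} = \bm{w}^{\infty}(\pi\opt) = L^{\bm{d}\opt} \bm{w}^{\infty,\star}$ be as produced by that theorem.

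First I record that $\phi(x) := -\beta^{-1}\log(-x)$ is continuous and strictly increasing on $(-\infty,0)$ and extends continuously to $[-\infty,0]$ with $\phi(-\infty)=-\infty$, and that by \eqref{eq:exp-value-function} we have $v^t_s(\pi) = \phi\bigl(w^t_s(\pi)\bigr)$ and $v^{t,\star}_s = \phi\bigl(w^{t,\star}_s\bigr)$ for all $s,t,\pi$. Because a continuous monotone map commutes with $\liminf$, this gives $\bm{v}^{\infty}(\pi) = \phi\bigl(\bm{w}^{\infty}(\pi)\bigr)$ and $\bm{v}^{\infty,\star} = \phi\bigl(\bm{w}^{\infty,\star}\bigr)$ componentwise; applying $\phi$ to $\bm{w}^{\infty,\star} = \bm{w}^{\infty}(\pi\opt)$ yields the first assertion, $\bm{v}^{\infty,\star} = \bm{v}^{\infty}(\pi\opt)$.

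For the displayed chain of equalities, the corollary following \cref{thm:exponential-bellman} gives $g_t\opt(\beta) = \ermo^{\bm{\mu}}_{\beta}\bigl[v^{t,\star}_{\tilde{s}_0}\bigr]$, and expanding the ERM, $\ermo^{\bm{\mu}}_{\beta}\bigl[v^{t,\star}_{\tilde{s}_0}\bigr] = -\beta^{-1}\log\bigl(\sum_{s}\mu_s e^{-\beta v^{t,\star}_s}\bigr) = -\beta^{-1}\log\bigl(-\bm{\mu}\tr\bm{w}^{t,\star}\bigr) = \phi\bigl(\bm{\mu}\tr\bm{w}^{t,\star}\bigr)$, and the same computation gives $\ermo^{\bm{\mu}}_{\beta}\bigl[v^{\infty,\star}_{\tilde{s}_0}\bigr] = \phi\bigl(\bm{\mu}\tr\bm{w}^{\infty,\star}\bigr)$. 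Using that the proof of \cref{thm:erm-main-convergence} in fact establishes the genuine limit $\bm{w}^{t,\star}\to\bm{w}^{\infty,\star}$, continuity of $\phi$ and of $\bm{w}\mapsto\bm{\mu}\tr\bm{w}$ yields $g_t\opt(\beta)\to\phi\bigl(\bm{\mu}\tr\bm{w}^{\infty,\star}\bigr) = \ermo^{\bm{\mu}}_{\beta}\bigl[v^{\infty,\star}_{\tilde{s}_0}\bigr]$, so $g_{\infty}\opt(\beta) = \liminf_{t}g_t\opt(\beta) = \ermo^{\bm{\mu}}_{\beta}\bigl[v^{\infty,\star}_{\tilde{s}_0}\bigr]$. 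The remaining equality with $\max_{\pi\in\PiSD}\ermo^{\bm{\mu}}_{\beta}\bigl[v^{\infty}_{\tilde{s}_0}(\pi)\bigr]$ holds because, on one hand, $\pi\opt\in\PiSD$ together with $\bm{v}^{\infty}(\pi\opt) = \bm{v}^{\infty,\star}$ shows the $\max$ is attained at $\pi\opt$, and on the other hand any $\pi\in\PiSD$ satisfies $\bm{v}^t(\pi)\le\bm{v}^{t,\star}$ directly from the definition of $\bm{v}^{t,\star}$, hence $\bm{v}^{\infty}(\pi)\le\bm{v}^{\infty,\star}$ after $\liminf_t$, and $\ermo^{\bm{\mu}}_{\beta}[\,\cdot_{\tilde{s}_0}]$ is monotone.

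The one nonroutine point is the use of the \emph{genuine} convergence $\bm{w}^{t,\star}\to\bm{w}^{\infty,\star}$: with only $\bm{w}^{\infty,\star} = \liminf_t\bm{w}^{t,\star}$ one gets merely $\liminf_t\bm{\mu}\tr\bm{w}^{t,\star}\ge\bm{\mu}\tr\bm{w}^{\infty,\star}$ by superadditivity of $\liminf$, which is the wrong direction, and the gap is genuine because with rewards of mixed sign the sequence $\bm{w}^{t,\star}$ need not be monotone -- its first step away from $-\bm{1}$ moves up or down according to whether $\ermo_\beta$ of the immediate reward is nonnegative. I would therefore lean on the squeeze argument underlying \cref{thm:erm-main-convergence} for this convergence, or else recover it from the $\liminf$-version via $\bm{w}^t(\pi\opt)\le\bm{w}^{t,\star}$ and $\rho(\bm{B}^{\bm{d}\opt}) < 1$, with the latter forced by $\bm{w}^{\infty,\star} > -\bm{\infty}$.
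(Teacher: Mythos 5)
Your proof is correct, and it reaches the statement by a somewhat different route than the paper. The paper's proof is a one-line sandwich: it bounds $\bm{\mu}\tr\bm{v}^{\infty}(\pi\opt) \le \sup_{\pi\in \PiHR}\liminf_{t}\bm{\mu}\tr\bm{v}^t(\pi) \le \liminf_{t}\sup_{\pi\in \PiHR^t}\bm{\mu}\tr\bm{v}^t(\pi) \le \bm{\mu}\tr\bm{v}^{\infty}(\pi\opt)$, using the generic $\sup\liminf\le\liminf\sup$ inequality together with finite-horizon Markov optimality and \cref{thm:erm-main-convergence}, and concludes that everything collapses to equality. That version has the advantage of simultaneously pinning down the $\sup_{\pi}\liminf_t$ ordering, i.e.\ the actual objective \eqref{eq:sup-erm}, not just $g_{\infty}\opt = \liminf_t\sup_\pi$. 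Your version instead transports \cref{thm:erm-main-convergence} explicitly through the monotone bijection $\phi$ and computes $g_t\opt(\beta)=\phi(\bm{\mu}\tr\bm{w}^{t,\star})$ directly; what this buys is an honest accounting of the one step the paper's terse chain glosses over, namely that the $\liminf$ in the definition of $\bm{w}^{\infty,\star}$ goes the wrong way and one genuinely needs $\lim_t\bm{w}^{t,\star}$ to exist, which is supplied by the squeeze inside the proof of \cref{thm:erm-main-convergence} rather than by its statement. Two minor remarks: your parenthetical that the unbounded case ``is covered by \cref{prop:erm-unbounded}'' is not quite an argument (that proposition is only an existence example; the degenerate case should be dispatched as in the first line of the proof of \cref{thm:erm-main-convergence}, where all quantities are $-\infty$); and your final equality over $\PiSD$ quietly uses that every stationary deterministic policy truncates to a Markov deterministic policy so that $\bm{v}^t(\pi)\le\bm{v}^{t,\star}$, which is fine but worth stating since $\bm{v}^{t,\star}$ is defined as a max over $\PiMD$.
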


We now outline the proof of \cref{thm:erm-main-convergence}; see \citet[appendix D.4]{su2024stationary} for details. To establish \cref{thm:erm-main-convergence}, we show that $\bm{w}^{t,\star }$ converges to a fixed point as $t\to \infty$. Standard arguments do not apply to our setting~\cite{puterman205markov,Kallenberg2021markov,patek2001terminating} because the ERM-TRC Bellman operator is not an $L_{\infty}$-contraction, it is not linear, and the values in value iteration do not increase or decrease monotonically. Although the exponential Bellman operator $L^{\bm{d}}$ is linear, it may not be a contraction. 

The main idea of the proof is to show that whenever the exponential value functions are bounded, the exponential Bellman operator must be \emph{weighted-norm} contraction with a unique fixed point. To facilitate the analysis, we define $\bm{w}^t\colon \PiSR^t \times \Real^{S} \to \Real^{S}, t \in \mathbb{N}$ for $\bm{z}\in \Real^{S},\,\pi \in \PiSR^t$, as
\begin{equation} \label{eq:exp-value-expression}
\begin{aligned}
      \bm{w}^t(\pi, \bm{z})
    \; =\; & \BellIE^{\bm{d}} \bm{w}(\pi_{1{:}t{-}1}) 
     \; =\; \BellIE^{\bm{d}} \BellIE^{\bm{d}} \dots \BellIE^{\bm{d}} (-\bm{z}) \\
    \; =\;& - (\bm{B}^{\bm{d}})^t \bm{\bm{z}} - \sum_{k=0}^{t-1}  (\bm{B}^{\bm{d}})^k \bm{b}^{\bm{d}} .
\end{aligned}
\end{equation}
The value $\bm{z}$ can be interpreted as the exponential value function at the termination of the process following $\pi$ for $t$ periods.
Note that \( \bm{w}^t(\pi) = \bm{w}^t(\pi, \bm{1}), \, \forall \pi \in \PiMR, t \in \mathbb{N} . \) 

An important technical result we show is that the only way a \emph{stationary} policy's return can be bounded is if the policy's matrix has a spectral radius strictly less than $1$.
\begin{lemma} \label{lem:bounded-contraction-fixedpoint}
For each $\pi = (\bm{d})_{\infty} \in \PiSR$ and $\bm{z} \ge \bm{0}$:
\[
  \bm{w}^{\infty}(\pi, \bm{z}) > -\bm{\infty}
  \quad \Rightarrow \quad
  \rho(\bm{B}^{\bm{d}}) < 1.
\]
\end{lemma}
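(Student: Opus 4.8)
The plan is to work entirely with the explicit expansion
\[
  \bm w^t(\pi,\bm z)
  = -(\bm B^{\bm d})^t \bm z \;-\; \textstyle\sum_{k=0}^{t-1}(\bm B^{\bm d})^k \bm b^{\bm d}
\]
from \eqref{eq:exp-value-expression}, exploiting that $\bm B^{\bm d}\ge \bm 0$, $\bm b^{\bm d}\ge \bm 0$, and $\bm z\ge\bm 0$, and to reduce the claim to the standard fact that a nonnegative matrix $\bm B$ has $\rho(\bm B)<1$ if and only if its Neumann series $\sum_{k\ge 0}\bm B^k$ converges. Concretely, I will show that the boundedness hypothesis forces $\sum_{k\ge 0}(\bm B^{\bm d})^k\bm b^{\bm d}<\bm\infty$ componentwise, then use transience to upgrade this to $\sum_{k\ge 0}(\bm B^{\bm d})^k\bm 1<\bm\infty$, and finally invoke Perron--Frobenius.

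\emph{Step 1 (boundedness gives a convergent $\bm b^{\bm d}$-series).} Since $(\bm B^{\bm d})^t\bm z\ge \bm 0$, we have componentwise $\bm w^t(\pi,\bm z)\le -\sum_{k=0}^{t-1}(\bm B^{\bm d})^k\bm b^{\bm d}$, and the right-hand side is nonincreasing in $t$ because $\bm b^{\bm d}\ge \bm 0$. Hence for each state $s$, $\liminf_t w^t_s(\pi,\bm z)\le -\sum_{k=0}^{\infty}\bigl((\bm B^{\bm d})^k\bm b^{\bm d}\bigr)_s$, so $\bm w^{\infty}(\pi,\bm z)>-\bm\infty$ implies $\sum_{k=0}^{\infty}(\bm B^{\bm d})^k\bm b^{\bm d}<\bm\infty$ componentwise. \emph{Step 2 (transience makes $\bm b^{\bm d}$ reach every state).} By \cref{lem:transient-spectral-radius}, $\rho(\bm P^{\bm d})<1$, so no nonempty set of non-sink states is closed under $\bm P^{\bm d}$ (a closed set would give a row-stochastic principal submatrix, forcing $\rho(\bm P^{\bm d})\ge 1$). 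Because $\bm B^{\bm d}$ and $\bm P^{\bm d}$ have the same sign pattern and $e^{-\beta r}>0$, this means that from every state $s$ there is a directed path of length $\ell_s\le S-1$ in the support graph of $\bm B^{\bm d}$ to a state $\sigma_s$ with $b^{\bm d}_{\sigma_s}>0$; hence $\bigl((\bm B^{\bm d})^{\ell_s}\bm b^{\bm d}\bigr)_s>0$, so $\bm c:=\sum_{k=0}^{S-1}(\bm B^{\bm d})^k\bm b^{\bm d}$ satisfies $\bm c\ge \delta\bm 1$ for some $\delta>0$.

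\emph{Step 3 (bootstrap and conclude).} Monotonicity of multiplication by $\bm B^{\bm d}\ge\bm 0$ and a reindexing (each shifted tail is bounded by the full series) give
\[
  \delta\,\textstyle\sum_{k\ge 0}(\bm B^{\bm d})^k\bm 1
  \;\le\; \textstyle\sum_{k\ge 0}(\bm B^{\bm d})^k\bm c
  \;=\; \textstyle\sum_{j=0}^{S-1}\sum_{k\ge 0}(\bm B^{\bm d})^{k+j}\bm b^{\bm d}
  \;\le\; S\,\textstyle\sum_{m\ge 0}(\bm B^{\bm d})^m\bm b^{\bm d}\;<\;\bm\infty ,
\]
so $\sum_{k\ge 0}(\bm B^{\bm d})^k\bm 1<\bm\infty$. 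If $\rho(\bm B^{\bm d})\ge 1$, Perron--Frobenius yields a nonzero $\bm u\ge\bm 0$ with $\bm B^{\bm d}\bm u=\rho(\bm B^{\bm d})\,\bm u$; choosing $s^\ast$ with $u_{s^\ast}>0$ and using $\bm u\le \norm{\bm u}_\infty\bm 1$ gives $\sum_k\rho(\bm B^{\bm d})^k u_{s^\ast}=\sum_k\bigl((\bm B^{\bm d})^k\bm u\bigr)_{s^\ast}\le\norm{\bm u}_\infty\sum_k\bigl((\bm B^{\bm d})^k\bm 1\bigr)_{s^\ast}<\infty$, which is impossible when $\rho(\bm B^{\bm d})\ge 1$. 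Hence $\rho(\bm B^{\bm d})<1$.

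\emph{Main obstacle.} The delicate step is Step 2: translating the analytic transience condition \eqref{eq:transient-condition} into the combinatorial statement that the termination vector $\bm b^{\bm d}$, propagated through $\bm B^{\bm d}$, dominates a positive multiple of $\bm 1$ — equivalently, ruling out ``trapped'' subsets of states that the termination reward never sees. One must be careful that $\bm b^{\bm d}$ can vanish on many states (so the path argument, not merely $\bm b^{\bm d}>\bm 0$, is needed) and that $\bm z$ may vanish as well, which is precisely why Step 1 discards the $(\bm B^{\bm d})^t\bm z$ term rather than relying on it. The remaining steps are routine nonnegative-matrix bookkeeping.
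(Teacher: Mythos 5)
Your proof is correct, but it takes a genuinely different route from the paper's. The paper tests the expansion \eqref{eq:exp-value-expression} against a \emph{left} Perron eigenvector $\bm f$ of $\bm B^{\bm d}$ (\cref{lem:exponential-eigen-non-zero}): since $\bm f\tr(\bm B^{\bm d})^k\bm b^{\bm d}=\rho(\bm B^{\bm d})^k\,\bm f\tr\bm b^{\bm d}$, boundedness immediately forces the scalar geometric series $\sum_k\rho(\bm B^{\bm d})^k$ to converge, and the entire weight of transience is spent on showing $\bm f\tr\bm b^{\bm d}>0$ --- which the paper does analytically, via the telescoping termination-probability identity (\cref{lem:termination-probability}), the zero-pattern equivalence between $(\bm B^{\bm d},\bm b^{\bm d})$ and $(\bm P^{\bm d},\bm p^{\bm d})$ (\cref{lem:pro-goal-state-zero}), and $\lim_{t\to\infty}(\bm P^{\bm d})^t=\bm 0$. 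You instead use transience combinatorially (no closed set of non-sink states, which would yield a row-stochastic principal submatrix) to show that every state reaches the support of $\bm b^{\bm d}$ within $S-1$ steps, deduce that the full Neumann series $\sum_{k}(\bm B^{\bm d})^k\bm 1$ converges entrywise, and only then invoke Perron--Frobenius for a \emph{right} eigenvector to reach a contradiction (in fact, at that point $(\bm B^{\bm d})^k\bm 1\to\bm 0$ already forces $(\bm B^{\bm d})^k\to\bm 0$ and hence $\rho(\bm B^{\bm d})<1$ without it). Both arguments use \cref{lem:transient-spectral-radius} in the same role. The paper's route is shorter once its helper lemmas are in place and isolates the single scalar $\bm f\tr\bm b^{\bm d}$ that controls everything; yours is more self-contained and elementary, replaces the analytic limit argument with a finite reachability argument, and delivers the slightly stronger intermediate conclusion that $\sum_k(\bm B^{\bm d})^k\bm b^{\bm d}$ and $\sum_k(\bm B^{\bm d})^k\bm 1$ converge componentwise.
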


\cref{lem:bounded-contraction-fixedpoint} uses the transience property to show that the Perron vector (with the maximum absolute eigenvalue) $\bm{f}$ of $\bm{B}^{\bm{d}}$ satisfies that $\bm{f}\tr \bm{b}^{\bm{d}}> 0$. Therefore, $\rho(\bm{B}^{\bm{d}}) < 1$ is necessary for the series in~\eqref{eq:exp-value-expression} to be bounded.

The limitation of \cref{lem:bounded-contraction-fixedpoint} is that it only applies to stationary policies. The lemma does not preclude the possibility that all stationary policies have unbounded returns, but a Markov policy with a bounded return exists. We construct an upper bound on $\bm{w}^{t,\star}$ that decreases monotonically in $t$ and converges to show this is impossible. The proof then concludes by squeezing $\bm{w}^{t,\star}$ between a lower and the upper bound with the same limits. This technique allows us to relax the limiting assumptions from prior work~\cite{patek2001terminating,de2020risk}. Finally, our results imply an optimal stationary policy exists whenever the planning horizon $T$ is sufficiently large. Because the set $\PiSD$ is finite, one policy must be optimal for a sufficiently large $T$. This property suggests behavior similar to \emph{turnpikes} in discounted MDPs~\cite{puterman205markov}.

\subsection{Algorithms}
We now briefly describe the algorithms we use to compute the optimal ERM-TRC policies. Surprisingly, the main algorithms for discounted MDPs, including value iteration, policy iteration, and linear programming, can be adapted to this risk-averse setting with only minor modifications.

\emph{Value iteration} is the most direct method for computing the optimal value function~\cite{puterman205markov}. The value iteration computes a sequence of $\bm{w}^k, k = 0, \dots $ such that
\[
 \bm{w}^{k+1} = L\opt \bm{w}^k, \quad \bm{w}^0 = \bm{0}. 
\]
The initialization of $\bm{w}^0 = \bm{0}$ is essential and guarantees convergence directly from the monotonicity argument used to prove \cref{thm:erm-main-convergence}.

\emph{Policy iteration} (PI) starts by initializing with a stationary policy $\pi_0 = (\bm{d}^0)_{\infty} \in \PiSD$. Then, for each iteration $k = 0, \dots $, PI alternates between the policy evaluation step and the policy improvement step:
\begin{equation*}
\bm{w}^k = - (\bm{I} - \bm{B}^{\bm{d}^k})^{-1} \bm{b}^{\bm{d}^k}, \;
  \bm{d}^{k+1} \in \argmax_{\bm{d}\in \mathcal{D}} \bm{B}^{\bm{d}} \bm{w}^k - \bm{b}^{\bm{d}}. 
\end{equation*}
PI converges because it monotonically improves the value functions when initialized with a policy $\bm{d}^0$ with bounded return~\cite{patek2001terminating}. However, we lack a practical approach to finding such an initial policy.

Finally, \emph{linear programming} is a fast and convenient method for computing optimal exponential value functions:
\begin{equation} \label{eq:erm-lp}
  \min \left\{  \bm{1}\tr \bm{w} \mid \bm{w}\in \Real^{S}, \bm{w} \ge  - \bm{b}^a  + \bm{B}^{a} \bm{w} ,    \,\forall a\in \actions \right\}.
\end{equation}
Here, $\bm{B}_{s,\cdot }^{a}  =(B_{s,s_1 }^{a}, \cdots, B_{s,s_{S}}^{a})$, $B_{s,s'}^{a}$ and $b_{s}^a$ are constructed as in~\eqref{eq:exponential-definitions}. We use the shorthand $\bm{B}^a = \bm{B}^{\bm{d}}$ and $\bm{b}^a = \bm{b}^{\bm{d}}$ where  $d_{a'}(s) = 1$ if $a = a'$ for each $s\in \mathcal{S}, a'\in \mathcal{A}$.

It is important to note that the value functions, as well as the coefficients of $\bm{B}^{\bm{d}}$ may be irrational. It is, therefore, essential to study the sensitivity of the algorithms to errors in the input. However, this question is beyond the scope of the present paper, and we leave it for future work. 

\section{Solving EVaR Total Reward Criterion}
\label{sec:ssp-with-evar}

This section shows that the EVaR-TRC objective can be reduced to a sequence of ERM-TRC problems, similarly to the discounted case~\cite{hau2023entropic}. As a result, an optimal stationary EVaR-TRC policy exists and can be computed using the methods described in \cref{sec:ssp-with-erm}.

Formally, we aim to compute a policy that maximizes the EVaR of the random return at some given fixed risk level $\alpha \in (0,1)$ defined as
\begin{equation} \label{eq:risk_object}
\sup_{\pi \in \PiHR} \liminf_{t\to \infty}  
\evaro_{\alpha}^{\pi, \bm{\mu}}
\left[ \sum_{k=0}^{t-1} r(\tilde{s}_k,\tilde{a}_k,\tilde{s}_{k+1}) \right].
\end{equation}
In contrast with \citet{ahmadi2021constrained}, the objective in~\eqref{eq:risk_object} optimizes EVaR rather than Nested EVaR.

\subsection{Reduction to ERM-TRC}
To solve~\eqref{eq:risk_object}, we exploit that EVaR can be defined in terms of ERM as shown in~\eqref{eq:evar-def-app}. To that end, define a function $h_t\colon \PiHR \times \Real \rightarrow \bar{\Real}$ for $t \in \mathbb{N} $ as
\begin{equation} \label{eq:h}
h_t(\pi, \beta) \;:=\;   g_t(\pi, \beta) + \beta^{-1} \log(\alpha),
\end{equation}
where $g_t$ is the ERM value of the policy defined in~\eqref{eq:g-definitions}. Also, $h_t\opt$, $h_{\infty}$, $h_{\infty}\opt$ are defined analogously in terms of $g_t\opt$, $g_{\infty}$, and $g_{\infty}\opt$ respectively. The functions $h$ are useful, because by~\eqref{eq:evar-def-app}:
\begin{equation} \label{eq:h-function-utility}
\evaro_{\alpha}^{\pi, \bm{\mu}}
\left[ \sum_{k=0}^{t-1} r(\tilde{s}_k,\tilde{a}_k,\tilde{s}_{k+1}) \right]
\; =\; 
\sup_{\beta > 0} h_t(\pi, \beta),
\end{equation}
for each $\pi\in \PiHR$ and $t \in \mathbb{N} $. However, note that the limit in the definition of $\sup_{\beta > 0} h_{\infty}\opt(\beta)$ is inside the supremum unlike in the objective in~\eqref{eq:risk_object}.

There are two challenges with solving~\eqref{eq:risk_object} by reducing it to~\eqref{eq:h-function-utility}. First, the supremum in the definition of EVaR in~\eqref{eq:evar-def-app} may not be attained, as mentioned previously. Second, the functions $g_t\opt$ and $h_t\opt$ may not converge \emph{uniformly} to $g_{\infty}\opt$ and $h_{\infty}\opt$. Note that \cref{thm:erm-main-convergence} only shows \emph{pointwise} convergence when the functions are bounded.

To circumvent the challenges described above, we replace the supremum in~\eqref{eq:h-function-utility} with a maximum over a \emph{finite} set $\mathcal{B}(\beta_0, \delta)$ of discretized $\beta$ values:
\begin{subequations} \label{eq:b-set-defs}
\begin{equation} \label{eq:b-set-definition}
  \mathcal{B}(\beta_0, \delta) \;:=\;  \left\{ \beta_0, \beta_1, \dots , \beta_K \right\},
\end{equation}
where $\delta > 0$,  $0 < \beta_0 < \beta_1 < \dots  < \beta_K$, and
\begin{equation}
\label{eq:beta-construction}
\beta_{k+1} \; :=\;  \frac{\beta_k \log \frac{1}{\alpha}}{\log\frac{1}{ \alpha } -\beta_k \delta} ,
\qquad
\beta_K \; \ge\;  \frac{\log \frac{1}{\alpha}}{\delta},
\end{equation}
\end{subequations}
for an appropriately chosen value $K$ for each $\beta_0$ and $\delta$. We assume that the denominator in the expression for $\beta_{k+1}$ in \cref{eq:beta-construction} is positive; otherwise $\beta_{k+1} = \infty$ and $\beta_k$ is sufficiently large.

The construction in~\eqref{eq:b-set-defs} resembles equations (19) and (20) in~\citet{hau2023entropic} but differs in the choice of $\beta_0$ because Hoeffding's lemma does not readily bound the TRC criterion.

The following proposition upper-bounds the value of $K$; see~\citep[theorem 4.3]{hau2023entropic} for a proof that $K$ is polynomial in $\delta$. 
\begin{proposition} \label{prop:choice-K}
  Assume a given $\beta_0 > 0$ and $\delta \in (0,1)$ such that $\beta_0 \delta < \log \frac{1}{\alpha}$. Then, to satisfy the condition in~\eqref{eq:beta-construction}, it is sufficient to choose $K$ as
  \begin{equation} \label{eq:K-inequality}
   K := \frac{\log  z}{\log (1-z)}, \quad\text{where}\quad z := \frac{\beta_0 \delta }{\log \frac{1}{\alpha}}. 
\end{equation}
\end{proposition}

The following theorem shows that one can obtain an optimal ERM policy for an appropriately chosen $\beta$ that approximates an optimal EVaR policy arbitrarily closely. 
\begin{theorem} \label{thm:optimal-evar-erm}
For any $\delta > 0$, let
\[
  (\pi\opt,\beta\opt)  \in
  \argmax_{(\pi,\beta)\in \PiSD \times \mathcal{B}(\beta_0, \delta)}
  h_{\infty}(\pi,\beta),
\]
where $\beta_0 > 0$ is chosen such that $g_{\infty}\opt(0) \le g_{\infty}\opt(\beta_0) - \delta$. Then the limits below exist and satisfy:
\begin{equation} \label{eq:evar-guarantee}
\lim_{t\to \infty}  
\evaro_{\alpha}^{\pi\opt, \bm{\mu}}
\left[ \sum_{k=0}^{t-1} r(\tilde{s}_k,\tilde{a}_k,\tilde{s}_{k+1}) \right]
  \ge
  \sup_{\pi\in \PiHR} \lim_{t\to \infty} \sup_{\beta>0} h(\pi,\beta)
  -\delta.
\end{equation}
\end{theorem}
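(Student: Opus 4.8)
The plan is to sandwich the EVaR value of the policy $\pi\opt$ between the finite-$\mathcal{B}$ maximum and the true supremum-over-$\beta$ objective, losing only $\delta$ in each of two comparisons. First I would establish the easy direction: since $\pi\opt$ together with $\beta\opt$ maximizes $h_\infty$ over $\PiSD \times \mathcal{B}(\beta_0,\delta)$, and since by \cref{thm:erm-main-convergence} (via \cref{cor:erm-optimal-stationary}) the liminf defining $g_\infty(\pi\opt,\beta)$ is an honest limit for stationary policies and is attained by a stationary optimum, we get for every $\beta \in \mathcal{B}(\beta_0,\delta)$ that $h_\infty(\pi\opt,\beta\opt) \ge h_\infty(\pi\opt,\beta)$, and moreover the EVaR of $\pi\opt$'s return equals $\sup_{\beta>0} h_\infty(\pi\opt,\beta) \ge \max_{\beta\in\mathcal{B}} h_\infty(\pi\opt,\beta)$. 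Here I also need the limit $\lim_{t\to\infty}$ on the left-hand side of \eqref{eq:evar-guarantee} to exist; this follows because $\pi\opt$ is stationary, so each $g_t(\pi\opt,\beta)$ converges, and the sup over $\beta>0$ of the limits can be interchanged with the outer limit using monotone/uniform control inherited from \cref{lem:bounded-contraction-fixedpoint} (the exponential Bellman operator becomes a weighted-norm contraction once the value is bounded).

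The second, harder comparison is to show that restricting $\beta$ to the finite grid $\mathcal{B}(\beta_0,\delta)$ costs at most $\delta$ relative to the full supremum $\sup_{\beta>0}$. I would split the range of $\beta$ into three regions. For $\beta \le \beta_0$: by the choice of $\beta_0$ we have $g_\infty\opt(0) \le g_\infty\opt(\beta_0) - \delta$, and since $\beta \mapsto g_\infty\opt(\beta)$ is nonincreasing (ERM is nonincreasing in $\beta$ and this is preserved under the limit), $g_\infty\opt(\beta) \le g_\infty\opt(0)$, so $h_\infty\opt(\beta) = g_\infty\opt(\beta) + \beta^{-1}\log\alpha \le g_\infty\opt(0) \le g_\infty\opt(\beta_0) - \delta \le h_\infty\opt(\beta_0) + \beta_0^{-1}|\log\alpha| \cdots$ — more carefully, since $\log\alpha<0$, $h_\infty\opt(\beta) \le g_\infty\opt(0)$ and we compare against $h_\infty\opt(\beta_0) \ge g_\infty\opt(\beta_0) - \delta' $; the grid construction \eqref{eq:beta-construction} is exactly designed so consecutive points satisfy $\beta_{k}^{-1}\log\alpha - \beta_{k+1}^{-1}\log\alpha \le \delta$, controlling the region $\beta_0 \le \beta \le \beta_K$ by monotonicity of $g_\infty\opt$ plus the bounded increments of $\beta^{-1}\log\alpha$. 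For $\beta \ge \beta_K = -\log(\alpha)/\delta$: here $\beta^{-1}\log\alpha \ge -\delta$ is small, but $g_\infty\opt(\beta) \le g_\infty\opt(0)$ gives $h_\infty\opt(\beta) \le g_\infty\opt(0) \le g_\infty\opt(\beta_0) \le h_\infty\opt(\beta_0) + \beta_0^{-1}|\log\alpha|$; the cleaner bound is $h_\infty\opt(\beta) \le g_\infty\opt(\beta) \le g_\infty\opt(\beta_K)$ wait — the right statement is that the tail contributes nothing beyond $h_\infty\opt(\beta_K)$ up to $\delta$ because $\beta^{-1}\log\alpha$ has shrunk below $\delta$ in magnitude. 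Assembling the three regions yields $\sup_{\beta>0} h_\infty\opt(\beta) \le \max_{\beta\in\mathcal{B}(\beta_0,\delta)} h_\infty\opt(\beta) + \delta$.

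To finish, I would chain the inequalities: the EVaR-value of $\pi\opt$ equals $\sup_{\beta>0} h_\infty(\pi\opt,\beta) \ge \max_{\beta\in\mathcal{B}} h_\infty(\pi\opt,\beta) = h_\infty(\pi\opt,\beta\opt) = \max_{(\pi,\beta)\in\PiSD\times\mathcal{B}} h_\infty(\pi,\beta) \ge \max_{\beta\in\mathcal{B}} h_\infty\opt(\beta) \ge \sup_{\beta>0} h_\infty\opt(\beta) - \delta$, where the step $\max_{\beta\in\mathcal{B}} h_\infty(\pi\opt,\beta\opt) \ge \max_{\beta\in\mathcal{B}} h_\infty\opt(\beta)$ uses that for each fixed $\beta\in\mathcal{B}$ the ERM-TRC optimum is attained by a \emph{stationary} policy (\cref{cor:erm-optimal-stationary}), so $h_\infty\opt(\beta) = \max_{\pi\in\PiSD} h_\infty(\pi,\beta)$ and the joint max over $\PiSD\times\mathcal{B}$ dominates. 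Finally I would identify $\sup_{\beta>0} h_\infty\opt(\beta)$ with $\sup_{\pi\in\PiHR}\lim_{t\to\infty}\sup_{\beta>0} h(\pi,\beta)$ on the right of \eqref{eq:evar-guarantee}; this is the most delicate bookkeeping step, since it requires swapping $\sup_\pi$, $\lim_t$, and $\sup_\beta$, and is exactly where the non-uniform convergence of $g_t\opt \to g_\infty\opt$ flagged before the theorem bites — I expect this interchange, justified via the monotone convergence of value iteration from the optimistic initialization together with the contraction from \cref{lem:bounded-contraction-fixedpoint}, to be the main obstacle.
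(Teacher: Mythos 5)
Your overall architecture --- discretize $\beta$ into the grid $\mathcal{B}(\beta_0,\delta)$, control the three regions $[0,\beta_0)$, $[\beta_k,\beta_{k+1})$, and $[\beta_K,\infty)$ separately, and chain the resulting inequalities --- matches the paper's, and your three-region analysis is essentially its discretization lemmas. The gap is the step you yourself flag at the end: identifying $\sup_{\beta>0} h_{\infty}\opt(\beta)$ with the objective $\sup_{\pi\in\PiHR}\lim_{t\to\infty}\sup_{\beta>0}h_t(\pi,\beta)$. That identification requires interchanging $\sup_{\beta}$ over the \emph{infinite} set $(0,\infty)$ with $\lim_{t\to\infty}$, and, as the paper notes just before the theorem, $h_t\opt$ need not converge uniformly to $h_{\infty}\opt$ (indeed $g_{\infty}\opt(\beta)$ can equal $-\infty$ for large $\beta$, cf.\ \cref{prop:erm-unbounded}); the contraction and monotone-convergence arguments you invoke give only pointwise convergence in $\beta$, so they do not license the swap. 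As written, your proof does not close.

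The paper avoids the problematic interchange by reordering the operations. It first bounds $\sup_{\pi}\liminf_{t}\sup_{\beta}h_t(\pi,\beta)\le\liminf_{t}\sup_{\beta}h_t\opt(\beta)$, a one-directional swap of $\sup_\pi$ and $\liminf_t$ that is always valid; it then applies the discretization bound to $h_t\opt$ \emph{at each finite $t$} --- legitimate because $g_t\opt$ is non-increasing in $\beta$ for every $t$ --- obtaining $\sup_{\beta>0} h_t\opt(\beta)\le\max_{\beta\in\mathcal{B}(\beta_0,\delta)}h_t\opt(\beta)+\delta$; and only then exchanges $\lim_{t}$ with the maximum over the now \emph{finite} set $\mathcal{B}(\beta_0,\delta)$, which is trivial. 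The lower bound on the left-hand side of \eqref{eq:evar-guarantee} follows from $\lim_{t}h_t(\pi\opt,\beta\opt)\le\lim_{t}\sup_{\beta>0}h_t(\pi\opt,\beta)$ for the single fixed $\beta\opt$, which again needs no interchange. (A minor related overstatement in your first paragraph: the EVaR of $\pi\opt$'s return is $\lim_{t}\sup_{\beta}h_t(\pi\opt,\beta)$, which is $\ge\sup_{\beta}h_{\infty}(\pi\opt,\beta)$ but not obviously equal to it; fortunately only the inequality is needed.) To repair your argument, discretize $\beta$ \emph{before} taking $t\to\infty$.
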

Note that the right-hand side in~\eqref{eq:evar-guarantee} is the $\delta$-optimal objective in~\eqref{eq:risk_object}. 

The first implication of \cref{thm:optimal-evar-erm} is that there exists an optimal stationary deterministic policy. 
\begin{corollary} \label{cor:evar-stationary-policy}
  There exists an optimal stationary deterministic policy $\pi\opt \in \PiSD$ that attains the supremum in~\eqref{eq:risk_object}.
\end{corollary}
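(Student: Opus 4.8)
The plan is to extract the desired stationary policy directly from the pair $(\pi\opt,\beta\opt)$ produced by \cref{thm:optimal-evar-erm}, and then argue that the $\delta$-gap in \eqref{eq:evar-guarantee} can be driven to zero while keeping the policy stationary. First I would invoke \cref{thm:optimal-evar-erm}: for a fixed $\delta>0$ it yields a stationary deterministic $\pi\opt_\delta \in \PiSD$ (and an associated $\beta\opt_\delta \in \mathcal{B}(\beta_0,\delta)$) whose EVaR-TRC value is within $\delta$ of the optimal objective in \eqref{eq:risk_object}. Since $\PiSD$ is a \emph{finite} set, as $\delta$ ranges over a sequence $\delta_n \downarrow 0$ the policies $\pi\opt_{\delta_n}$ take only finitely many values, so some fixed $\pi\opt \in \PiSD$ occurs infinitely often. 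Along that subsequence, the EVaR-TRC value of this single stationary $\pi\opt$ is within $\delta_n \to 0$ of the optimum, hence equals the optimum; that is, $\pi\opt$ attains the supremum in \eqref{eq:risk_object}.

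The one subtlety to handle carefully is matching the two formulations of the objective: \cref{thm:optimal-evar-erm} bounds the quantity $\lim_{t\to\infty}\evaro_{\alpha}^{\pi\opt,\bm{\mu}}[\cdots]$ against $\sup_{\pi\in\PiHR}\lim_{t\to\infty}\sup_{\beta>0} h(\pi,\beta)-\delta$, whereas the objective in \eqref{eq:risk_object} is $\sup_{\pi\in\PiHR}\liminf_{t\to\infty}\evaro_{\alpha}^{\pi,\bm{\mu}}[\cdots]$. I would reconcile these using \eqref{eq:h-function-utility}, which gives $\evaro_{\alpha}^{\pi,\bm{\mu}}[\sum_{k=0}^t r] = \sup_{\beta>0} h_t(\pi,\beta)$ exactly, so the right-hand side of \eqref{eq:evar-guarantee} is indeed (a $\delta$-perturbation of) the optimal value of \eqref{eq:risk_object} once one checks that $\liminf_t$ and $\lim_t$ coincide for the relevant policies — for the stationary $\pi\opt$ this limit exists by \cref{cor:erm-optimal-stationary} applied at each $\beta\in\mathcal{B}(\beta_0,\delta)$ together with the finite maximum over $\mathcal{B}$, and for the supremum over $\PiHR$ the $\liminf$ can only be smaller than the interchanged expression, which is the direction we need.

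Concretely the steps are: (i) fix $\delta>0$, apply \cref{thm:optimal-evar-erm} to obtain $\pi\opt_\delta\in\PiSD$ satisfying \eqref{eq:evar-guarantee}; (ii) use \eqref{eq:h-function-utility} to identify the right-hand side of \eqref{eq:evar-guarantee} with the optimal value of \eqref{eq:risk_object} minus $\delta$, noting $\sup_{\pi\in\PiHR}\liminf_t \le \sup_{\pi\in\PiHR}\lim_t\sup_\beta h$; (iii) conclude the EVaR-TRC value of $\pi\opt_\delta$ is within $\delta$ of optimal; (iv) let $\delta\downarrow 0$ along a sequence, use finiteness of $\PiSD$ to find a fixed $\pi\opt$ realized infinitely often, and conclude it is exactly optimal. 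The main obstacle I anticipate is step~(ii) — carefully justifying the interchange of $\liminf_{t\to\infty}$ with $\sup_{\beta>0}$ (or with the finite $\max$ over $\mathcal{B}$) and confirming the relevant limits exist — since \cref{thm:erm-main-convergence} only provides pointwise-in-$\beta$ convergence of the ERM value functions, which is precisely why the discretized set $\mathcal{B}(\beta_0,\delta)$ was introduced; the finiteness of $\mathcal{B}$ is what rescues uniformity here, and the argument should lean on that explicitly.
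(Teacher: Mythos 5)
Your proposal is correct and takes essentially the same route as the paper's own proof: both arguments rest on \cref{thm:optimal-evar-erm} supplying a $\delta$-optimal stationary policy for every $\delta>0$, combined with the finiteness of $\PiSD$ to pass from approximate to exact optimality. The paper phrases this as a contradiction (a policy beating the best stationary one by $\epsilon>0$ is refuted by taking $\delta=\epsilon/2$), whereas you extract a fixed $\pi\opt$ by pigeonhole along $\delta_n\downarrow 0$; the difference is cosmetic, and your extra care in step~(ii) about reconciling $\liminf_t$ with the limit in \eqref{eq:evar-guarantee} is a detail the paper's terse proof leaves implicit.
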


The second implication of \cref{thm:optimal-evar-erm} is that it suggests an algorithm for computing the optimal, or near-optimal, stationary policy. We summarize it in \cref{sec:evar-algorithms}.

\subsection{Algorithms}
\label{sec:evar-algorithms}

We now propose a simple algorithm for computing a $\delta$-optimal EVaR policy described in \cref{alg:evar-algo}. The algorithm reduces finding optimal EVaR-TRC policies to solving a sequence of ERM-TRC
problems in~\eqref{eq:sup-erm}. As \cref{thm:optimal-evar-erm} shows, there exists a $\delta$-optimal policy such that it is ERM-TRC optimal for some $\beta \in \mathcal{B}(\beta_0, \delta)$. It is, therefore, sufficient to compute an ERM-TRC  optimal policy for one of those $\beta$ values. 
\begin{algorithm}
\KwData{MDP and desired precision $\delta > 0$ }
\KwResult{$\delta$-optimal policy $\pi\opt \in \PiSD$}
\While{$g_{\infty}\opt(0) - g_{\infty}\opt(\beta_0) > \delta$}{
$\beta_0 \gets \beta_0 / 2$ \;
}
Construct $\mathcal{B}(\beta_0, \delta)$ as described in~\eqref{eq:b-set-definition} \;
Compute $\pi\opt  \in
\argmax_{\pi\in \PiSD} \max_{\beta\in \mathcal{B}(\beta_0, \delta)} h_{\infty}(\pi,\beta)$ by solving a linear program in~\eqref{eq:erm-lp} \;
\caption{Simple EVaR algorithm}
\label{alg:evar-algo}
\end{algorithm}

The analysis above shows that \cref{alg:evar-algo} is correct.
\begin{corollary} \label{prop:evar-algo-ok}
\Cref{alg:evar-algo} computes the $\delta$-optimal policy $\pi\opt \in \PiSD$ that satifies the condition~\eqref{eq:evar-guarantee}.
\end{corollary}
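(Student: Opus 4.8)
The plan is to show that \cref{alg:evar-algo} produces exactly a pair $(\pi\opt,\beta\opt)$ of the form whose existence and near-optimality are guaranteed by \cref{thm:optimal-evar-erm}; then the inequality \eqref{eq:evar-guarantee} is immediate. Concretely, three things must be verified: (i) the \textbf{while} loop halts and returns some $\beta_0>0$ with $g_\infty\opt(0)\le g_\infty\opt(\beta_0)-\delta$, which is precisely the hypothesis on $\beta_0$ in \cref{thm:optimal-evar-erm}; (ii) for each $\beta\in\mathcal{B}(\beta_0,\delta)$, solving the linear program \eqref{eq:erm-lp} yields the optimal ERM-TRC value $g_\infty\opt(\beta)$ together with a stationary deterministic $\bm d_\beta$ attaining it, so that $h_\infty((\bm d_\beta)_\infty,\beta)=g_\infty\opt(\beta)+\beta^{-1}\log\alpha=\max_{\pi\in\PiSD}h_\infty(\pi,\beta)$; and (iii) ranging over the finite set $\mathcal{B}(\beta_0,\delta)$ and keeping the best choice then realizes $\argmax_{(\pi,\beta)\in\PiSD\times\mathcal{B}(\beta_0,\delta)}h_\infty(\pi,\beta)$, i.e. a valid choice of $(\pi\opt,\beta\opt)$ in \cref{thm:optimal-evar-erm}.

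For (i): transience makes the risk-neutral TRC finite --- by \cref{lem:transient-spectral-radius} the expected number of visits to each state under any stationary deterministic policy is finite, hence $g_\infty\opt(0)\in\Real$ and an optimal stationary deterministic policy $\pi_0\opt$ for it exists, with associated integrable infinite-horizon total reward $\tilde R$ from $\bm\mu$. By Jensen's inequality $\ermo_\beta[\,\cdot\,]\le\E[\,\cdot\,]$, so $g_t\opt(\beta)\le g_t\opt(0)$ for all $t$ and therefore $g_\infty\opt(\beta)\le g_\infty\opt(0)$ for all $\beta>0$. Conversely, since $\pi_0\opt$ is stationary the horizon limit exists and $g_\infty\opt(\beta)\ge g_\infty(\pi_0\opt,\beta)=\ermo_\beta[\tilde R]$, which tends to $\E[\tilde R]=g_\infty\opt(0)$ as $\beta\downarrow 0$ by the extension $\ermo_0[\tilde R]=\lim_{\beta\to 0^+}\erm{\beta}{\tilde R}=\E[\tilde R]$. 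Squeezing gives $\lim_{\beta\downarrow 0}g_\infty\opt(\beta)=g_\infty\opt(0)$, so after finitely many halvings the loop condition $g_\infty\opt(0)-g_\infty\opt(\beta_0)>\delta$ fails and the loop exits with a valid $\beta_0$.

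For (ii) and (iii): fix $\beta\in\mathcal{B}(\beta_0,\delta)$. If $g_\infty\opt(\beta)=-\infty$ then \eqref{eq:erm-lp} is unbounded below and such $\beta$ is discarded; this is harmless because $h_\infty\opt(\beta_0)=g_\infty\opt(\beta_0)+\beta_0^{-1}\log\alpha$ is finite, so no such $\beta$ can be a maximizer. Otherwise $\bm w^{\infty,\star}>-\bm\infty$, and by \cref{thm:erm-main-convergence} it is the unique solution of $\bm w=L^{\bm d\opt}\bm w=L\opt\bm w$ and --- by the monotonicity of $L\opt$ used in the proof of that theorem --- the \emph{least} vector satisfying $\bm w\ge L\opt\bm w=\max_{a\in\actions}(\bm B^a\bm w-\bm b^a)$; the standard MDP linear-programming argument then shows \eqref{eq:erm-lp} returns $\bm w^{\infty,\star}$, and an optimal basis (complementary slackness) supplies a deterministic $\bm d_\beta$ with $\bm w^{\infty,\star}=L^{\bm d_\beta}\bm w^{\infty,\star}$, so $(\bm d_\beta)_\infty\in\PiSD$ is ERM-TRC optimal for this $\beta$. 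By \cref{cor:erm-optimal-stationary} and \eqref{eq:h} this gives $h_\infty((\bm d_\beta)_\infty,\beta)=g_\infty\opt(\beta)+\beta^{-1}\log\alpha=\max_{\pi\in\PiSD}h_\infty(\pi,\beta)$. Taking $\beta\opt\in\argmax_{\beta\in\mathcal{B}(\beta_0,\delta)}h_\infty((\bm d_\beta)_\infty,\beta)$ and $\pi\opt=(\bm d_{\beta\opt})_\infty$ then yields $h_\infty(\pi\opt,\beta\opt)=\max_{(\pi,\beta)\in\PiSD\times\mathcal{B}(\beta_0,\delta)}h_\infty(\pi,\beta)$, i.e. $(\pi\opt,\beta\opt)$ is exactly a pair of the type in \cref{thm:optimal-evar-erm} for the $\beta_0$ from step (i), and \eqref{eq:evar-guarantee} follows.

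The main obstacle is step (i): interchanging the horizon limit inferior, the policy supremum, and the $\beta\downarrow 0$ limit to obtain $g_\infty\opt(\beta)\to g_\infty\opt(0)$ is not automatic, and the argument above sidesteps it by lower-bounding with a \emph{single} fixed risk-neutral-optimal stationary policy rather than with $g_t\opt$ or $g_\infty\opt$ itself; one must also note that $g_\infty\opt(\beta_0)$ can actually be evaluated (via \eqref{eq:erm-lp}, or detected as $-\infty$) so that the loop condition is decidable. A secondary point --- handled by the \emph{exactness} in \cref{thm:erm-main-convergence} rather than by mere convergence of value iteration --- is that \eqref{eq:erm-lp} must return the true $g_\infty\opt(\beta)$ and not just a limiting approximation of it.
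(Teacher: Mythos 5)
Your proof is correct and follows essentially the same route as the paper, which disposes of this corollary by invoking \cref{thm:optimal-evar-erm} together with the continuity of $g_{\infty}\opt(\beta)$ as $\beta \to 0^{+}$ (the paper delegates the latter to \cref{lem:erm-convergence-0}, where you instead give a self-contained squeeze via Jensen's inequality and a fixed risk-neutral-optimal stationary policy). The additional details you supply --- termination of the \textbf{while} loop and the argument that the linear program \eqref{eq:erm-lp} returns the exact $\bm{w}^{\infty,\star}$ as the least feasible point --- are steps the paper leaves implicit, and they are handled correctly.
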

\Cref{prop:evar-algo-ok} follows directly from \cref{thm:optimal-evar-erm} and from the existence of a sufficiently small $\beta_0$ from the continuity of $g_{\infty}\opt(\beta)$ for positive $\beta$ around $0$.

\Cref{alg:evar-algo} prioritizes simplicity over computational complexity and could be accelerated significantly. Evaluating each $h_{\infty}\opt(\beta)$ requires computing an optimal ERM-TRC solution which involves solving a linear program. One could reduce the number of evaluations of $h_{\infty}\opt$ needed by employing a branch-and-bound strategy that takes advantage of the monotonicity of $g_{\infty}\opt$.

An additional advantage of \cref{alg:evar-algo} is that the overhead of computing optimal solutions for multiple risk levels $\alpha$ can be small if one selects an appropriate set $\mathcal{B}$.

\section{Numerical Evaluation}
\label{sec:numerical-eval}

In this section, we illustrate our algorithms and formulations on tabular MDPs that include positive and negative rewards. 

The ERM returns for the discounted and transient MDPs in \cref{fig:discounted-transient-mdp} with parameters $r = -0.2,\, \gamma = 0.9,\, \epsilon = 0.9$ are shown in \cref{fig:unbounded-erm}. The figure shows that, as expected, the returns are identical in the risk-neutral objective (when $\beta = 0$). However, for $\beta > 0$, the discounted and TRC returns differ significantly. The discounted return is unaffected by $\beta$ while the ERM-TRC return decreases with an increasing $\beta$. Please see \citet[appendix B]{su2024stationary} for more details.
 
\begin{figure}
\centering
\includegraphics[width=0.4\linewidth]{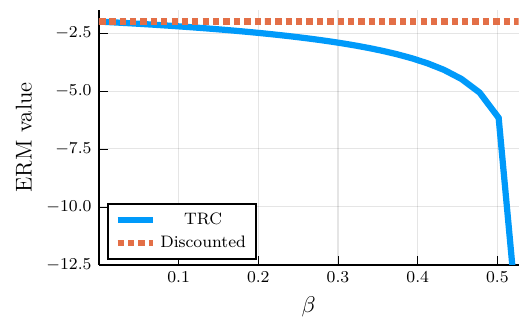} 
\caption{ERM values with TRC and discounted criteria.}
\label{fig:unbounded-erm}
\end{figure}

To evaluate the effect of risk-aversion on the structure of the optimal policy, we use the \emph{gambler's ruin} problem~\cite{hau2023entropic,bauerle2011markov}. In this problem, a gambler starts with a given amount of capital and seeks to increase it up to a cap $K$. In each turn, the gambler decides how much capital to bet. The bet doubles or is lost with a probability $q$ and $1-q$, respectively. The gambler can quit and keep the current wealth; the game also ends when the gambler goes broke or achieves the cap $K$. The reward equals the final capital, except it is -1 when the gambler is broke. The initial state is chosen uniformly. In the formulation, we use $q = 0.68$, and a cap is $K = 7$. The algorithm was implemented in Julia 1.10, and is available at \url{https://github.com/suxh2019/ERMLP}. Please see \citet[appendix F]{su2024stationary} for more details. 

\begin{figure}
\centering
\includegraphics[width=0.4\linewidth]{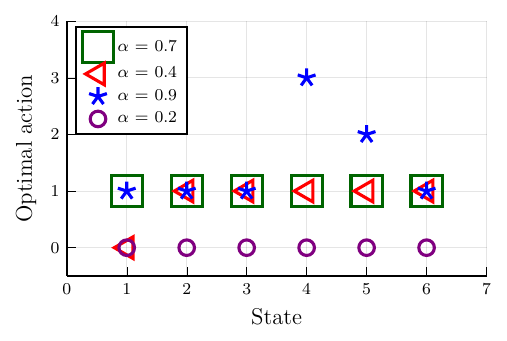}
\caption{The optimal EVaR-TRC policies.}
\label{fig:opt-policy}
\end{figure}

\Cref{fig:opt-policy} shows optimal policies for four different EVaR risk levels $\alpha$ computed by  \cref{alg:evar-algo}. The state represents how much capital the gambler holds. The optimal action indicates the amount of capital invested. The action 0 means quitting the game. Note that there is only one action when the capital is $0$ and $7$ for all policies so that action is neglected in \Cref{fig:opt-policy}. Because the optimal policy is stationary, we can interpret and analyze it. The policies become notably less risk-averse as $\alpha$ increases. For example, when $\alpha = 0.2$, the gambler is very risk-averse and always quits with the current capital. When $\alpha = 0.4$, the gambler invests $1$ when capital is greater than $1$ and quits otherwise to avoid losing it all. When $\alpha = 0.9$, the gambler makes bigger bets, increasing the probability of reaching the cap and losing all capital. 
 
\begin{figure}
\centering
\includegraphics[width=0.4\linewidth]{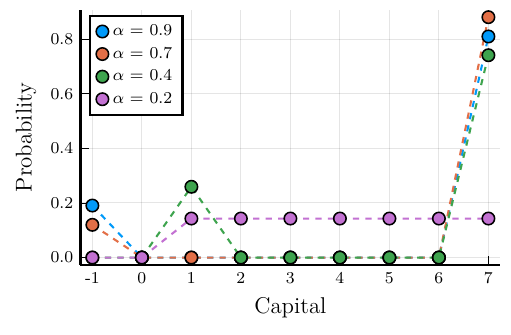}
\caption{Distribution of the final capital for EVaR optimal policies. }
\label{fig:capital-p0.65}
\end{figure}

To understand the impact of risk-aversion on the distribution of returns, we simulate the resulting policies over 7,000 episodes and show the distribution of capitals in \cref{fig:capital-p0.65}.
When $\alpha = 0.2$, the return follows a uniform distribution on [1, 7]. When $\alpha = 0.4$, the returns are $1$ and $7$.  When $\alpha = 0.7$ or $0.9$, the returns are $-1$ and $7$. Overall, the figure shows that for lower values of $\alpha$, the gambler gives up some probability of reaching the cap in exchange for a lower probability of losing all capital.

\section{Conclusion and Future Work}

We analyze transient MDPs with two risk measures: ERM and EVaR. We establish the existence of stationary deterministic optimal policies without any assumptions on the sign of the rewards, a significant departure from past work. Our results also provide algorithms based on value iteration, policy iteration, and linear programming for computing optimal policies.

Future directions include extensions to infinite-state TRC problems, risk-averse MDPs with average rewards, and partial-state observations.

\section*{Acknowledgments} 
We thank the anonymous reviewers for their detailed reviews and thoughtful comments, which significantly improved the paper's clarity. This work was supported, in part, by NSF grants 2144601 and 2218063. Julien Grand-Cl{\'e}ment was supported by Hi! Paris and Agence Nationale de la Recherche (Grant 11-LABX-0047).

\bibliography{aaai25}

\appendix
\section{Background}

\begin{proposition}{Tower Property for ERM}
   \label{pro:erm-tower-property} 
   For any two random variables $\tilde{x}_1,\tilde{x}_2 \in \mathbb{X}$, we have
\[
\ermo_{\beta}[\tilde{x}_1] = \ermo_{\beta}[\ermo_{\beta}[\tilde{x}_1 \mid  \tilde{x}_2]],
\]
where the conditional ERM is defined as 
\[
\ermo_{\beta}[\tilde{x}_1 | \tilde{x}_2] = -\beta^{-1} \log(\E[e^{-\beta \tilde{x}_1} \mid \tilde{x}_2])
\]
\end{proposition}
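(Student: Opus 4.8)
The plan is to unwind both sides through the definition of $\ermo_\beta$ and reduce the identity to the ordinary tower property (law of total expectation) for the expectation operator. First I would fix $\beta > 0$ and introduce the shorthand $\tilde{Y} := \ermo_{\beta}[X_1 \mid X_2] = -\beta^{-1}\log \E[\exp{-\beta X_1}\mid X_2]$, which is a $\sigma(X_2)$-measurable extended-real random variable (well defined since $\exp{-\beta X_1} > 0$, so the conditional expectation exists, possibly $+\infty$). Exponentiating its definition gives the pointwise identity $\exp{-\beta \tilde{Y}} = \E[\exp{-\beta X_1}\mid X_2]$, so that the outer entropic risk measure becomes
\[
\ermo_{\beta}[\tilde{Y}]
\;=\; -\beta^{-1}\log \E\!\left[\exp{-\beta \tilde{Y}}\right]
\;=\; -\beta^{-1}\log \E\!\left[\E[\exp{-\beta X_1}\mid X_2]\right].
\]

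The second step is to apply the classical tower property to the nonnegative random variable $\exp{-\beta X_1}$, namely $\E[\E[\exp{-\beta X_1}\mid X_2]] = \E[\exp{-\beta X_1}]$. Substituting this back yields
\[
\ermo_{\beta}[\ermo_\beta[X_1\mid X_2]]
\;=\; \ermo_{\beta}[\tilde{Y}]
\;=\; -\beta^{-1}\log \E[\exp{-\beta X_1}]
\;=\; \ermo_{\beta}[X_1],
\]
which is exactly the claim. Conceptually, $\ermo_\beta = \phi^{-1}\circ \E \circ \phi$ for the strictly monotone bijection $\phi(x) = \exp{-\beta x}$, and the tower property of $\E$ transports verbatim through this conjugation; no monotonicity, convexity, or contraction argument is needed.

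I do not expect any step to be a genuine obstacle: the only care required is bookkeeping about measurability and finiteness so that $\tilde Y$ and the outer expectation $\E[\exp{-\beta \tilde Y}]$ are well defined, which is automatic for nonnegative integrands and entirely unconditional when $X_1$ is bounded (as in the transient-MDP setting of this paper over the relevant horizons). I would also add a one-line remark that the border cases are consistent: $\beta = 0$ recovers the standard tower property for $\E$, and letting $\beta \to \infty$ recovers $\ess\inf[X_1] = \ess\inf[\ess\inf[X_1\mid X_2]]$, matching the extended definition of $\ermo_\beta$.
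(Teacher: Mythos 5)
Your proof is correct: conjugating the expectation by $x \mapsto \exp{-\beta x}$ and invoking the ordinary tower property of conditional expectation is exactly the standard argument, and the paper itself gives no proof here, deferring instead to a citation of Theorem~3.1 in \citet{hau2023entropic}, which proceeds the same way. Your self-contained write-up, including the care about measurability of $\ermo_\beta[X_1 \mid X_2]$ and the border cases $\beta \in \{0,\infty\}$, is a fine substitute for that citation.
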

For a proof, see, for example, \citet[theorem~3.1]{hau2023entropic}.

\section{Discounting versus TRC}
\label{sec:risk-neutral-disc-trc}
The discounted infinite-horizon ERM objective for a discount factor $\gamma\in (0,1)$, a policy $\pi\in \PiSD$, and $\beta > 0$ is 
\begin{equation} \label{eq:discounted-objective}
  \rho_{\gamma}(\pi, \beta) := \ermo^{\pi,\bm{\mu}} \left[\sum_{t=0}^{\infty} \gamma^t \cdot
    r(\tilde{s}_t, \tilde{a}_t, \tilde{s}_{t+1})\right].
\end{equation}

In the risk-neutral setting, it is well-known that the discounted objective can be interpreted as TRC~\citep[Section 1.10]{Altman1998}. We summarize the construction in this section and show that it does not extend to risk-averse settings.

Given an MDP $\mathcal{M} = (\states, \actions, p, r, \mu)$, we construct a transient MDP $\bar{\mathcal{M}}_{\gamma} = (\bar{\states}, \actions, \bar{p}, \bar{r}, \bar{\mu})$ such that $\bar{\mathcal{S}} = \mathcal{S} \cup \left\{ e \right\}$, and $\bar{\mu}(s) = \mu(s), \forall s\in \mathcal{S}$ and $\mu(e) = 0$. The transition function $\bar{p}$ is defined as
\begin{equation}
\label{eq:convert-p}
  \bar{p}(s,a,s') \;=\; 
  \begin{cases}
    \gamma \cdot  p(s,a,s') \quad&\text{if} \quad s,s'\in \mathcal{S}, a\in \mathcal{A}, \\
    1-\gamma \quad&\text{if} \quad s\in \mathcal{S}, s' = e, a\in \mathcal{A}, \\
    1 \quad&\text{if} \quad s = s' = e, a\in \mathcal{A}, \\
    0 \quad&\text{otherwise}.
  \end{cases}
\end{equation}
When the rewards $r\colon \mathcal{S} \times \mathcal{A} \to \Real$ are independent of the next state, then $\bar{r}\colon \bar{\mathcal{S}} \times \mathcal{A} \to \Real$ are defined as
\begin{equation}
\label{eq:convert-r}
      \bar{r}(s,a) =
  \begin{cases}
    r(s,a) &\text{if } s\in \mathcal{S}, \\
    0 &\text{otherwise}.
  \end{cases}
\end{equation}

The model can be readily extended to account for the target state dependence by constructing an $\bar{\mathcal{M}}_{\gamma}$ with the reward function $\bar{r}\colon \bar{\mathcal{S}} \times  \mathcal{A} \times \bar{\mathcal{S}} \to  \Real$ as 
\[
  \bar{r}(s,a,s') =
  \begin{cases}
    \gamma^{-1} \cdot r(s,a,s') &\text{if } s, s'\in \mathcal{S}, \\
    0 &\text{otherwise}.
  \end{cases}
\]

Recall that when $\beta = 0$, we have $\lim_{\beta \rightarrow 0^+} \ermo[\tilde{x}] = \E[\tilde{x}]$.

\cref{prop:discount-ssp} shows that the expected total returns are identical in a discounted MDP $\mathcal{M}$ and in a transient MDP $\bar{\mathcal{M}}_{\gamma}$.
\begin{proposition} \label{prop:discount-ssp}
For each $\gamma\in [0,1)$ and MDPs $\mathcal{M}, \bar{\mathcal{M}}_{\gamma}$ constructed above, we have that
\[
\rho_{\gamma}(\pi, 0) \; =\;
g_{\infty}( \pi, 0),
\quad \pi\in \PiSD.
\]
where $\rho_{\gamma}(\pi, 0)$ is defined in~\eqref{eq:discounted-objective} and $g_{\infty}( \pi, 0)$ is defined in~\eqref{eq:g-definitions}.

\end{proposition}
\begin{proof}
The proposition follows from the construction above by algebraic manipulation~\citep[section 1.10]{Altman1998}. 
\end{proof}

\begin{proposition}
There exists an MDP $\mathcal{M}$, $\gamma > 0$, $\beta > 0$, $\bar{\mathcal{M}}_{\gamma}$ constructed above, and $\pi\in \PiSD$ such that
\[
\rho_{\gamma}( \pi, \beta) \; \neq \;
g_{\infty}( \pi, \beta),
\]
where $\rho_{\gamma}(\pi, 0)$ is defined in~\eqref{eq:discounted-objective} and $g_{\infty}( \pi, 0)$ is defined in~\eqref{eq:g-definitions}.
\end{proposition}
\begin{proof}
See the one-state example described in \cref{sec:numerical-eval}.
\end{proof}

\section{Additional Results of \cref{sec:preliminaries} }

The following proposition contradicts theorem~1 in \citet{ahmadi2021risk}.
\begin{proposition}
\label{thm:cvar-counter-example}
There exists a transient MDP and a risk level $\alpha \in (0,1)$ such that the TRC with nested CVaR or EVaR is unbounded. 
\end{proposition}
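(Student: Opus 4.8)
The plan is to exhibit a single non-sink-state transient MDP with a negative self-loop reward whose finite-horizon nested-risk values diverge to $-\infty$. Take $\states = \{s\}$ together with the sink $e$, a single action $a$, transitions $p(s,a,s) = p$ and $p(s,a,e) = 1-p$, rewards $r(s,a,s) = -1$ and $r(s,a,e) = 0$, and $\mu$ concentrated on $s$. Since $\P^{a,s}[\tilde{s}_t = s] = p^{t}$, we have $\sum_{t \ge 0} p^{t} = (1-p)^{-1} < \infty$, so the MDP is transient in the sense of \Cref{def:transient-mdp} (equivalently $\rho(\bm{P}^{a}) = p < 1$ by \Cref{lem:transient-spectral-radius}). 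Fix any $\alpha \in (0,1)$ and choose $p \in [\alpha, 1)$, e.g.\ $p = (1+\alpha)/2$.

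Next I would unroll the nested (Markov) risk recursion. Let $v_t(s)$ denote the nested $\cvaro_\alpha$ of the sum of the first $t$ rewards started in $s$; the standard nested dynamic program gives $v_0(s) = 0$, $v_t(e) = 0$, and $v_t(s) = \cvaro_\alpha[\, r(s,a,\tilde{s}_1) + v_{t-1}(\tilde{s}_1)\,]$, i.e.\ $v_t(s) = \cvaro_\alpha[\tilde{x}_{t-1}]$ where $\tilde{x}_{t-1}$ equals $-1 + v_{t-1}(s)$ with probability $p$ and $0$ with probability $1-p$. Monotonicity of $\cvaro$ together with $\cvaro_\alpha[0] = 0$ gives $v_t(s) \le 0$ for all $t$ by induction, hence $-1 + v_{t-1}(s) \le -1 < 0$, so the lower $\alpha$-tail of $\tilde{x}_{t-1}$ lies entirely on the continuing branch because $\alpha \le p$; therefore $\cvaro_\alpha[\tilde{x}_{t-1}] = -1 + v_{t-1}(s)$. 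Induction then yields $v_t(s) = -t$, and letting $t \to \infty$ shows the nested-$\cvaro$ TRC objective equals $-\infty$, i.e.\ it is unbounded.

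For EVaR I would argue by comparison rather than repeat the computation. It is standard \cite{Ahmadi-Javid2012} that $\evaro_\alpha[\tilde{x}] \le \cvaro_\alpha[\tilde{x}]$ for every $\tilde{x}$ and every $\alpha \in (0,1)$, and both operators are monotone. Writing $v_t^{\mathrm{E}}(s)$ for the nested-$\evaro$ value, the induction closes: $v_0^{\mathrm{E}}(s) = 0 = v_0(s)$, and if $v_{t-1}^{\mathrm{E}}(s) \le v_{t-1}(s)$ then the corresponding one-step payoff decreases pointwise, so its $\evaro_\alpha$ is at most $\evaro_\alpha[\tilde{x}_{t-1}] \le \cvaro_\alpha[\tilde{x}_{t-1}] = v_t(s)$; hence $v_t^{\mathrm{E}}(s) \le v_t(s) = -t \to -\infty$. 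Thus the nested-EVaR TRC objective is also unbounded on this transient MDP, contradicting theorem~1 of \citet{ahmadi2021risk}.

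Since the construction is this transparent there is no serious technical obstacle; the only points needing care are (i) stating the infinite-horizon nested objective precisely enough that the one-step recursion is unambiguous, and (ii) the key observation that, because $\alpha \le p$, $\cvaro_\alpha$ (and a fortiori $\evaro_\alpha$) at each step sees only the continuing negative branch and therefore acts as a deterministic $-1$ shift. This is exactly the failure mode the paper highlights: a transient MDP with negative rewards need not have a bounded nested-risk total reward, in contrast to the positive-reward settings assumed in prior work.
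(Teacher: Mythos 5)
Your proposal is correct and follows essentially the same route as the paper's own proof: the identical one-state transient MDP with reward $-1$ on the self-loop, the choice $p\ge\alpha$ so that the entire lower $\alpha$-tail sits on the continuing branch, and the induction $v_t(s)\le -t$. The only difference is that you explicitly dispatch the nested-EVaR case via monotonicity and the domination $\evaro_{\alpha}[\tilde{x}]\le\cvaro_{\alpha}[\tilde{x}]$, which the paper's proof leaves implicit, so your write-up is if anything slightly more complete.
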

\begin{proof}
Consider the example of a transient MDP in \cref{fig:discounted-transient-mdp} with $r=-1$, a risk level $\alpha \in (0,1)$, and some $\epsilon \in (0,1)$ such that $\epsilon \ge \alpha$. This MDP admits only a single policy because it only has one action $a$, which we omit in the notation below. 

Recall that nested CVaR objective for some $s \neq e$ is
\begin{equation*}
  v_t(s)
  \; =\;
  \cvaro^{s}_{\alpha}\left[r(\tilde{s}_0, a)
+\cvaro_{\alpha}^{\tilde{s}_1}\left[r(\tilde{s}_1, a) + \cdots 
\cvaro^{\tilde{s}_{t-1}}_{\alpha}\left[r(\tilde{s}_{t-1}, a) \right] \right] \right],
\end{equation*}

 The value function for the non-terminal state can be computed using a dynamic program as~\citep[theorem~1]{ahmadi2021risk}:
\begin{align*}
  v_t(s)
  &= \cvaro^{s}_{\alpha}[r(s,a) + v_{t-1}(\tilde{s}_1)] 
  = -1 + \min \left\{ q_1 \cdot v_{t-1}(s) \mid q\in \Delta^2,\, 
  q_1 \le \frac{\epsilon}{\alpha}, q_2 \le \frac{1-\epsilon}{\alpha} \right\} \\
  &\le -1 - v_{t-1}(s).
\end{align*}
Here, we used the dual representation of CVaR. Then, by induction, $v_t(s) \le -t$ and, therefore, $\lim_{t\to \infty} v_t(s) = -\infty$.
\end{proof}

\section{Proofs of  \cref{sec:ssp-with-erm}}

\begin{proposition} \label{prop:erm-unbounded}
There exists a transient MDP and a risk level $\beta > 0$ such that $g_{\infty}\opt(\beta) = -\infty$.
\end{proposition}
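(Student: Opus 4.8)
The goal is to exhibit a transient MDP in which, for some $\beta > 0$, the optimal ERM-TRC value $g_\infty^\star(\beta) = -\infty$. The natural candidate is the simplest possible transient MDP --- the one-state, one-action example of \cref{fig:discounted-transient-mdp} (right) with a strictly negative reward $r < 0$ and self-loop probability $\epsilon$. Since there is only one policy, $g_\infty^\star(\beta) = g_\infty(\pi,\beta)$ for that unique $\pi$, so it suffices to show the ERM of the (random, finite-with-probability-one but unbounded) total reward diverges to $-\infty$.

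\textbf{Key steps.} First, I would compute the distribution of the total reward explicitly: under the unique policy, the process stays in $s$ for $\tilde{N}$ steps and then transitions to $e$, where $\tilde{N}$ is geometric with parameter $1-\epsilon$, so the total reward is $r\cdot(\tilde{N}+1)$ (counting the terminal transition, which also has reward $r$ in this instance, or $r \cdot \tilde N$ depending on the exact bookkeeping --- either way it is $r$ times a geometric-type variable). Second, I would plug this into the ERM definition \eqref{eq:defn_ent_risk}: $\erm{\beta}{r(\tilde N + 1)} = -\beta^{-1}\log \E[\exp(-\beta r(\tilde N+1))]$. Since $r<0$, $-\beta r > 0$, and $\E[\exp(-\beta r \tilde N)] = \sum_{n\ge 0} (1-\epsilon)\epsilon^n e^{-\beta r n}$, which is a geometric series that \emph{diverges} precisely when $\epsilon e^{-\beta r}\ge 1$, i.e. when $\beta \ge -\log(\epsilon)/(-r) = \log(1/\epsilon)/|r|$. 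Third, I would connect this to the matrix condition from the paper: here $\bm{B}^{\bm d} = \epsilon e^{-\beta r}$ is a scalar, so $\rho(\bm B^{\bm d}) = \epsilon e^{-\beta r} \ge 1$ exactly in the divergent regime, matching \cref{lem:transient-spectral-radius} and \cref{lem:bounded-contraction-fixedpoint}: transience ($\epsilon < 1$) does not prevent $\rho(\bm B) \ge 1$ for large $\beta$. When the series diverges, $\E[\exp(-\beta r(\tilde N+1))] = +\infty$, so $\erm{\beta}{\cdot} = -\infty$, and since $g_t$ is monotone decreasing in $t$ in this regime one also gets $g_\infty(\pi,\beta) = \liminf_t g_t(\pi,\beta) = -\infty$.

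\textbf{Concrete choice and conclusion.} I would fix explicit numbers matching \cref{sec:numerical-eval}, e.g. $r = -0.2$, $\epsilon = 0.9$, and any $\beta \ge \log(1/0.9)/0.2 \approx 0.5268$ (the value $0.526$ cited in the numerical section), and verify $\rho(\bm B) = 0.9\, e^{0.2\beta} \ge 1$. Then $g_\infty^\star(\beta) = g_\infty(\pi,\beta) = -\infty$ while the MDP is transient since $\epsilon = 0.9 < 1$ (invoking \cref{lem:transient-spectral-radius} with $\bm P^\pi = \epsilon < 1$). This is exactly the phenomenon advertised in the introduction: a bounded-return risk-neutral MDP whose ERM-TRC value is unbounded for sufficiently aggressive risk aversion.

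\textbf{Main obstacle.} There is no real obstacle --- this is a one-line computation dressed up. The only thing to be careful about is the bookkeeping of the terminal transition reward and the role of the $\liminf$ (rather than $\lim$): I must confirm that the finite-horizon values $g_t$ are eventually $-\infty$ or decrease without bound, so that the $\liminf$ in the definition of $g_\infty$ is genuinely $-\infty$ and not rescued by the inferior limit. Since each additional step multiplies the exponential sum by a factor $\ge 1$ inside the (negated) logarithm, $g_t(\pi,\beta)$ is nonincreasing in $t$ and unbounded below, so the $\liminf$ equals the limit equals $-\infty$.
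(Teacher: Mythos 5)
Your proposal is correct and takes essentially the same route as the paper: the paper's proof also uses the one-state transient MDP of \cref{fig:discounted-transient-mdp}, writes the finite-horizon ERM as a negated logarithm of the geometric series $\sum_k (1-\epsilon)\epsilon^k e^{-\beta r k}$, and concludes divergence to $-\infty$ exactly when $\epsilon\, e^{-\beta r} \ge 1$ (the paper picks $r=-1$ and $\beta > -\log\epsilon$). Your additional remarks on the monotonicity of $g_t$ in $t$ and the link to $\rho(\bm{B}^{\bm{d}})\ge 1$ are consistent with, but not needed beyond, the paper's argument.
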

\begin{proof}
We use the transient MDP described in \cref{fig:discounted-transient-mdp} to show this result. Because the returns of this MDP follow a truncated geometric distribution, its risk-averse return for each $\beta>0$ and $\epsilon \in (0,1)$ can be expressed analytically for $t \ge 1$ as
\begin{equation} \label{eq:exp-erm-value-func} 
\begin{aligned}
 \ermp{\beta}{\pi}{\sum_{k=0}^{t-1} r(\tilde{s}_k,\tilde{a}_k,\tilde{s}_{k+1})} 
&= -\frac{1}{\beta}\log\left(  \sum_{k = 0}^{t-1} (1-\epsilon) \epsilon^k \cdot \exp{-\beta \cdot k \cdot  r}  + \epsilon^t \cdot 0 \right) \\
&=  -\frac{1}{\beta}\log\left(  \sum_{k = 0}^{t-1} (1-\epsilon) \epsilon^{k} \cdot \exp{-\beta \cdot  r}^k  \right).
\end{aligned}
\end{equation}
Here, $(1-\epsilon) \epsilon^k$ is the probability that the process terminates after exactly $k$ steps, and $\epsilon^t$ is the probability that the process does not terminate before reaching the horizon. Then, using the fact that a geometric series $\sum_{i=0}^{\infty} a \cdot q^i$ for $a \neq 0$ is bounded if and only if $|q| < 1$ we get that
\begin{equation*}
\lim_{t\to \infty} \ermp{\beta}{\pi}{\sum_{k=0}^{t-1} r(\tilde{s}_k,\tilde{a}_k,\tilde{s}_{k+1})}   > - \infty 
\qquad\Leftrightarrow \qquad
\epsilon \cdot \exp{-\beta \cdot  r}  < 1. 
\end{equation*}

Note that $\epsilon \cdot \exp{-\beta \cdot r} \ge 0$ from its definition. Then, setting $r = -1$ and $\beta > -\log \epsilon$ proves the result. 
\end{proof}

\subsection{Optimality of Markov Policies} \label{sec:optim-mark-polic}

The equivalence to solving finite-horizon MDPs with exponential utility functions gives us the following result.
\begin{theorem} \label{thm:optim-mark-polic}
For each $\beta > 0$, there exists an optimal deterministic Markov policy $\pi^{t,\star} \in \PiMD$ for each horizon $t \in \mathbb{N} $: 
\begin{equation*}
\begin{split}
 \max_{\pi\in \PiMD}  \ermp{\beta}{\pi,s}{\sum_{k=0}^{t-1} r(\tilde{s}_k,\tilde{a}_k,\tilde{s}_{k+1})}
  \;=\;   \max_{\pi \in \PiHR} \ermp{\beta}{\pi,s}{\sum_{k=0}^{t-1} r(\tilde{s}_k,\tilde{a}_k,\tilde{s}_{k+1})}.
  \end{split}
\end{equation*}
\end{theorem}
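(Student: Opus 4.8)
The plan is to reduce the claim to the standard fact that in a finite-horizon MDP with a multiplicative (exponential-utility) Bellman recursion, history-dependence buys nothing, exactly as in the risk-neutral case~\citep{puterman205markov}. First I would reformulate the ERM objective using the exponential transformation: since $\ermo_\beta[\tilde x] = -\beta^{-1}\log\E\exp{-\beta\tilde x}$ is a strictly decreasing function of $\E\exp{-\beta\tilde x}$, maximizing $\ermp{\beta}{\pi,s}{\sum_{k=0}^{t-1} r(\tilde s_k,\tilde a_k,\tilde s_{k+1})}$ over $\pi$ is equivalent to \emph{minimizing} $\E^{\pi,s}\exp{-\beta\sum_{k=0}^{t-1} r(\tilde s_k,\tilde a_k,\tilde s_{k+1})}$ over $\pi$. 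The map from policies to this expected exponential return is therefore the real object of study, and it is a sum/product of per-step nonnegative factors $e^{-\beta r(s_k,a_k,s_{k+1})}$, which is precisely the setting of \cref{thm:exponential-bellman}.

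Next I would invoke \cref{thm:exponential-bellman} directly: it already establishes that for the exponential value functions $\bm w^t$, there exists $\pi\opt \in \PiMD^t$ with $\bm w^{t,\star} = L\opt \bm w^{t-1,\star} = \bm w^t(\pi\opt) \ge \bm w^t(\pi)$ for all $\pi \in \PiMR^t$. Translating back through $v^t_s(\pi) = -\beta^{-1}\log(-w^t_s(\pi))$ and the monotonicity of that transformation, this immediately gives $\max_{\pi\in\PiMR} v^t_s(\pi) = v^t_s(\pi\opt)$ with $\pi\opt$ deterministic Markov. So the content of \cref{thm:optim-mark-polic} that is \emph{not} already in \cref{thm:exponential-bellman} is only the upper equality: that allowing \emph{fully history-dependent} policies $\PiHR$ cannot beat $\PiMR$. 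For this I would argue by the usual backward-induction / conditioning argument: fix any $\pi\in\PiHR^t$, condition on the first state-action pair, and use the tower property of ERM (\cref{pro:erm-tower-property}) to peel off one stage, observing that the conditional continuation is again an ERM-maximization over a $(t-1)$-horizon problem from the new state. An optimal choice of the first decision rule depends only on the current state (the per-state continuation values are the same regardless of the history that led there, since the MDP is Markov), so the optimizing policy can be taken in $\PiMR^t$, and then in $\PiMD^t$ by \cref{thm:exponential-bellman}. Equivalently, one can cite the known equivalence for finite-horizon exponential-utility MDPs~\citep{patek1997stochastic,hau2023entropic} that the exponential Bellman recursion computes the optimum over $\PiHR$.

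The main obstacle is being careful that the tower-property peeling is valid with history-dependent policies: one must check that for $\pi\in\PiHR^t$, conditioning on $(\tilde s_0,\tilde a_0,\tilde s_1)$ leaves a well-defined history-dependent policy on the remaining horizon whose ERM from $\tilde s_1$ is bounded above by $v^{t-1,\star}_{\tilde s_1}$, and that the outer ERM over the first transition is maximized by a state-dependent (not history-dependent, trivially, since there is no prior history at time $0$) decision rule — the real work is the inductive step, where "no prior history" must be replaced by "the continuation value is history-independent," which follows because $\bm v^{t-1,\star}$ as defined in \eqref{eq:erm-value-function} depends only on the current state. Since \cref{thm:exponential-bellman} already encapsulates the bulk of this, I expect the proof to be short: state the exponential reformulation, apply \cref{thm:exponential-bellman} for the $\PiMD$ side, and give a one-paragraph conditioning argument (or citation) for the $\PiHR \le \PiMR$ direction.
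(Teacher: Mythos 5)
Your proposal is correct and matches the paper's approach: the paper proves this theorem simply by citing Corollary~4.2 of \citet{hau2023entropic} and the exponential-utility literature \cite{Chung1987, patek2001terminating}, and the argument in those references is exactly the exponential-transformation plus backward-induction/tower-property reduction you describe. The one point you flag as needing care---that the continuation value in the inductive step is history-independent because $\bm{v}^{t-1,\star}$ depends only on the current state---is precisely the content of those cited results, so nothing is missing.
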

See \citep[Corollary~4.2]{hau2023entropic} for a proof. The result can also be derived from the optimality of Markov deterministic policies in MDPs with exponential utility functions~\cite{Chung1987, patek2001terminating}.

\subsection{Bellman Operator}

\begin{lemma}\label{lem:exp-bellman-monotone}
The exponential Bellman operator is monotone. That is, for $\bm{x},\bm{y} \in \Real^S$
\begin{align}
  \label{eq:exp-bellman-monotone}
  \bm{x} \ge \bm{y}
  &\quad \Rightarrow \quad
    L^{\bm{d}} \bm{x} \ge L^{\bm{d}} \bm{y}, \qquad \forall \bm{d}\in \mathcal{D} \\
  \label{eq:exp-bellman-opt-monotone}
  \bm{x} \ge \bm{y}
  &\quad \Rightarrow \quad
  L\opt \bm{x} \ge L\opt \bm{y}.
\end{align}
\end{lemma}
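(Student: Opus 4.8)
The statement to prove is that the exponential Bellman operator is monotone: if $\bm{x} \ge \bm{y}$ (componentwise) then $L^{\bm{d}} \bm{x} \ge L^{\bm{d}} \bm{y}$ for every decision rule $\bm{d} \in \mathcal{D}$, and likewise $L^\star \bm{x} \ge L^\star \bm{y}$. The plan is to argue directly from the definition in \eqref{eq:exp-bellman-definition}, namely $L^{\bm{d}} \bm{w} = \bm{B}^{\bm{d}} \bm{w} - \bm{b}^{\bm{d}}$, using only the nonnegativity of $\bm{B}^{\bm{d}}$.

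First I would prove \eqref{eq:exp-bellman-monotone}. Fix $\bm{d} \in \mathcal{D}$ and suppose $\bm{x} \ge \bm{y}$, so $\bm{x} - \bm{y} \ge \bm{0}$. By the construction in \eqref{eq:exponential-definitions}, every entry $B_{s,s'}^{\bm{d}} = \sum_{a\in\mathcal{A}} p(s,a,s') \, d_a(s) \, e^{-\beta r(s,a,s')}$ is a sum of products of nonnegative quantities, hence $\bm{B}^{\bm{d}} \in \Real_+^{S\times S}$. A nonnegative matrix maps nonnegative vectors to nonnegative vectors, so $\bm{B}^{\bm{d}}(\bm{x} - \bm{y}) \ge \bm{0}$, i.e. $\bm{B}^{\bm{d}} \bm{x} \ge \bm{B}^{\bm{d}} \bm{y}$. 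Subtracting the common vector $\bm{b}^{\bm{d}}$ from both sides preserves the inequality, giving $L^{\bm{d}} \bm{x} = \bm{B}^{\bm{d}}\bm{x} - \bm{b}^{\bm{d}} \ge \bm{B}^{\bm{d}}\bm{y} - \bm{b}^{\bm{d}} = L^{\bm{d}}\bm{y}$.

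Next I would deduce \eqref{eq:exp-bellman-opt-monotone} from \eqref{eq:exp-bellman-monotone} by the standard maximum argument. Since $L^\star \bm{w} = \max_{\bm{d}\in\mathcal{D}} L^{\bm{d}} \bm{w}$, for each state $s$ pick $\bm{d}_s$ attaining the maximum in the $s$-th component of $L^\star \bm{y}$; then $(L^\star \bm{x})_s \ge (L^{\bm{d}_s}\bm{x})_s \ge (L^{\bm{d}_s}\bm{y})_s = (L^\star \bm{y})_s$, where the first inequality is by definition of the maximum and the second is \eqref{eq:exp-bellman-monotone}. Since this holds for every $s$, we get $L^\star \bm{x} \ge L^\star \bm{y}$.

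There is essentially no obstacle here — the lemma is routine, following from nonnegativity of $\bm{B}^{\bm{d}}$ and the fact that a pointwise maximum of monotone maps is monotone. The only thing to be slightly careful about is that the maximization defining $L^\star$ is taken componentwise (state by state), so that different decision rules may be selected for different states; this is exactly why the argument passes through the extreme-point/deterministic decision rules without issue, but it does not affect monotonicity.
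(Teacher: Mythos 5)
Your proof is correct and follows exactly the paper's argument: \eqref{eq:exp-bellman-monotone} from the nonnegativity of $\bm{B}^{\bm{d}}$ in \eqref{eq:exponential-definitions}, and \eqref{eq:exp-bellman-opt-monotone} from the monotonicity of the componentwise maximum. You merely spell out the details that the paper leaves as ``immediate.''
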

\begin{proof}
The property in~\eqref{eq:exp-bellman-monotone} follows immediately from non-negativity of $\bm{B}^{\bm{d}}$. The property in~\eqref{eq:exp-bellman-opt-monotone} then follows from the monotonicity of the $\max$ operator.
\end{proof}

\begin{lemma}\label{lem:exp-bellman-continuous}
The exponential Bellman operators $L^{\bm{d}},\,  \forall \bm{d}\in \mathcal{D}$ and $L\opt$ are continuous.
\end{lemma}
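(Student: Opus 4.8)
The plan is to argue continuity directly from the formulas in \eqref{eq:exp-bellman-definition} and \eqref{eq:exponential-definitions}. First I would observe that for each fixed $\bm{d}\in\mathcal{D}$, the map $\bm{w}\mapsto L^{\bm{d}}\bm{w} = \bm{B}^{\bm{d}}\bm{w} - \bm{b}^{\bm{d}}$ is an affine map $\Real^S\to\Real^S$: the matrix $\bm{B}^{\bm{d}}$ and the vector $\bm{b}^{\bm{d}}$ depend only on the (fixed) MDP data $p$, $r$, the risk level $\beta$, and $\bm{d}$, not on $\bm{w}$. Every affine map on a finite-dimensional space is continuous, which disposes of the first claim.

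For $L\opt$, recall from \eqref{eq:exp-bellman-definition} that $L\opt\bm{w} = \max_{\bm{d}\in\ext\mathcal{D}} L^{\bm{d}}\bm{w}$, where the maximum is taken componentwise and $\ext\mathcal{D}$ is the finite set of deterministic decision rules. Concretely, for each state $s\in\mathcal{S}$,
\[
  (L\opt\bm{w})_s \;=\; \max_{a\in\actions}\Bigl( \textstyle\sum_{s'\in\mathcal{S}} B^a_{s,s'}\, w_{s'} \;-\; b^a_s \Bigr),
\]
which is a pointwise maximum of finitely many affine (hence continuous) functions of $\bm{w}$. A finite pointwise maximum of continuous functions is continuous, so each component of $L\opt\bm{w}$ is continuous in $\bm{w}$, and therefore so is $L\opt$.

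This argument is entirely routine and I do not anticipate a genuine obstacle; the only point requiring a moment's care is the reduction $\max_{\bm{d}\in\mathcal{D}} L^{\bm{d}}\bm{w} = \max_{\bm{d}\in\ext\mathcal{D}} L^{\bm{d}}\bm{w}$ from \eqref{eq:exp-bellman-definition}, i.e.\ that optimizing the linear objective over the polytope $\mathcal{D}=(\probs{\actions})^\states$ is attained at an extreme point, so that the maximum is over a \emph{finite} set and the finite-max-of-continuous-functions fact applies. Since that identity is already asserted in the definition of the operator, it may simply be invoked; alternatively one can note directly that $\bm{d}\mapsto (L^{\bm{d}}\bm{w})_s$ is linear in the row $\bm{d}(s)\in\probs{\actions}$ for each $s$, so the componentwise maximum decouples across states and is attained at a vertex $d_a(s)\in\{0,1\}$.
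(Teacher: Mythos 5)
Your proposal is correct and matches the paper's own argument, which likewise invokes the continuity of linear (affine) maps for $L^{\bm{d}}$ and the fact that a pointwise maximum of finitely many continuous functions is continuous for $L\opt$. Your additional remark on the reduction to extreme points of $\mathcal{D}$ is a reasonable elaboration of what the paper takes as given in the definition of $L\opt$.
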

\begin{proof}
The lemma follows directly from the continuity of linear operators and from the fact that the pointwise maximum of a finite number of continuous functions is continuous. See also \citep[lemma~5]{patek2001terminating}
\end{proof}

\subsection{Proof of \cref{thm:exponential-bellman}}
\label{proof:thm-exponential-bellman}

\begin{proof}[Proof of \cref{thm:exponential-bellman}]
To construct the value function, we can define a Bellman operator $\BellI^{\bm{d}}\colon \Real^S \to  \Real^S$ for any decision rule $\bm{d}\colon  \mathcal{S} \to  \probs{A}$ and the optimal Bellman operator $\BellI\opt \colon \Real^S \to \Real^S$ for a \emph{value vector} $\bm{v}\in \Real^S$ as
\begin{equation} \label{eq:bellman-erm}
\begin{aligned}
  (\BellI^{\bm{d}} \bm{v})_s
  &\;:=\; \ermp{\beta}{\bm{d},s}{ r(s,\tilde{a}_0, \tilde{s}_1) + v_{\tilde{s}_1} },  \\
  \BellI\opt \bm{v}
  &\;:=\; \max_{\bm{d}\in \mathcal{D}}   \BellI^{\bm{d}} \bm{v} = \max_{\bm{d}\in \ext \mathcal{D}}   \BellI^{\bm{d}} \bm{v}.
\end{aligned}
\end{equation}
It is easy to see that $\bm{d}$ can be chosen independently for each state to maximize $\bm{v}$ uniformly across states. The optimality of deterministic decision rules, $\bm{d} \in \ext \mathcal{D}$, follows because ERM is a mixture quasi-convex function~\cite{Delage2019}.

The existence of value function for finite-horizon problem under the ERM objective has been analyzed previously~\cite{hau2023entropic} including in the context of exponential utility functions~\cite{Chung1987}.

To derive the exponential Bellman operator for the exponential value function for $\bm{d}\colon \mathcal{S}\to \probs{A}$, we concatenate the Bellman operator with the transformations to and from the exponential value function: 
\begin{equation} \label{eq:erm-exp-bellman}
\begin{aligned}
 (\BellIE^{\bm{d}} \bm{w})_s 
  &\;=\; - \exp {-\beta \cdot T^{\bm{d}} ( - \beta^{-1} \log (-  \bm{w}) ) }\\
  &\; =\;  -\E^{\bm{d},s}\left[\exp{-\beta \cdot  r(s,\tilde{a}_0,\tilde{s}_1)  + \log (-w_{\tilde{s}_1}) } \right] \\
  &\; =\;  -\E^{\bm{d},s}\left[\exp{-\beta \cdot  r(s,\tilde{a}_0,\tilde{s}_1)}  \cdot  (-w_{\tilde{s}_1})  \right] \\
  &\; =\; \sum_{s'\in \bar{\mathcal{S}}} \sum_{a\in \mathcal{A}}  p(s, a, s') \cdot d_a(s) \cdot \exp{-\beta \cdot  r(s,a,s')}  \cdot  w_{s'} \\
  &\; =\;  \sum_{s'\in \mathcal{S}} \sum_{a\in \mathcal{A}}  p(s, a, s') \cdot d_a(s) \cdot \exp{-\beta \cdot  r(s,a,s')}  \cdot  w_{s'} \\
   & \qquad \qquad-  \sum_{a\in \mathcal{A}}  p(s, a, e) \cdot d_a(s) \cdot \exp{-\beta \cdot  r(s,a,e)}.
\end{aligned}
\end{equation}

The derivation above uses the fact that $w_e = -1$ since $v_e = 0$ by definition. The statement of the theorem then follows by algebraic manipulation of $\bm{B}^{\bm{d}}, \bm{b}^{\bm{d}}$ and by induction on $t$. The base case hold by the definition of $\bm{w}^0(\pi) = \bm{w}^{0,\star } = -\bm{1}$.

The existence of an optimal $\pi\opt$ follows by choosing the maximum in the definition of $L\opt $, which is attained by compactness and continuity of the objective. 
\end{proof}

\subsection{Proof of \cref{thm:erm-main-convergence}}
\label{proof:erm-main-convergence}

\begin{lemma} \label{lem:radius-contractions}
Assume some $\pi = (\bm{d})_{\infty} \in \PiSR$ such that $\rho(\bm{B}^{\bm{d}}) < 1$. Then for all $\bm{z}\in \Real^S$
\[
   \bm{w}^{\infty}(\pi)  = \bm{w}^{\infty}(\pi, \bm{z}) = L^{\bm{d}} \bm{w}^{\infty}(\pi) > -\bm{\infty}.
  \]
\end{lemma}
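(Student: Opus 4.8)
The plan is to read the result off directly from the closed-form expansion of $\bm{w}^t(\pi,\bm{z})$ in \eqref{eq:exp-value-expression}, exploiting that $\rho(\bm{B}^{\bm{d}})<1$ makes both the transient term and the geometric (Neumann) series behave well.

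First I would recall from \eqref{eq:exp-value-expression} that for $\pi = (\bm{d})_\infty$ and any $\bm{z}\in\Real^S$,
\[
\bm{w}^t(\pi,\bm{z}) = -(\bm{B}^{\bm{d}})^t\bm{z} - \sum_{k=0}^{t-1}(\bm{B}^{\bm{d}})^k \bm{b}^{\bm{d}}.
\]
Since $\rho(\bm{B}^{\bm{d}})<1$, standard facts about matrices with spectral radius below one~\cite{Horn2013} give $(\bm{B}^{\bm{d}})^t \to \bm{0}$ as $t\to\infty$ and $\sum_{k=0}^{\infty}(\bm{B}^{\bm{d}})^k = (\bm{I}-\bm{B}^{\bm{d}})^{-1}$ (the Neumann series converges and $\bm{I}-\bm{B}^{\bm{d}}$ is invertible). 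Hence $\lim_{t\to\infty}\bm{w}^t(\pi,\bm{z})$ exists — so the $\liminf$ in the definition of $\bm{w}^{\infty}(\pi,\bm{z})$ is an honest limit — equals $\bm{w}^{\star} := -(\bm{I}-\bm{B}^{\bm{d}})^{-1}\bm{b}^{\bm{d}}$, and in particular is finite, i.e. $\bm{w}^{\star} > -\bm{\infty}$. The $\bm{z}$-term vanishes in the limit, so the value does not depend on $\bm{z}$; taking $\bm{z}=\bm{1}$ and using $\bm{w}^t(\pi)=\bm{w}^t(\pi,\bm{1})$ yields $\bm{w}^{\infty}(\pi) = \bm{w}^{\infty}(\pi,\bm{z}) = \bm{w}^{\star}$ for every $\bm{z}\in\Real^S$.

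It then remains to check that $\bm{w}^{\star}$ is a fixed point of $L^{\bm{d}}$. One route is to pass to the limit in the recursion $\bm{w}^{t+1}(\pi,\bm{z}) = L^{\bm{d}}\bm{w}^t(\pi,\bm{z})$ using continuity of $L^{\bm{d}}$ (\cref{lem:exp-bellman-continuous}). Alternatively one argues directly: $L^{\bm{d}}\bm{w}^{\star} = \bm{B}^{\bm{d}}\bm{w}^{\star} - \bm{b}^{\bm{d}} = -\bigl(\bm{B}^{\bm{d}}(\bm{I}-\bm{B}^{\bm{d}})^{-1} + \bm{I}\bigr)\bm{b}^{\bm{d}} = -(\bm{I}-\bm{B}^{\bm{d}})^{-1}\bm{b}^{\bm{d}} = \bm{w}^{\star}$, where the middle identity uses $\bm{B}^{\bm{d}}(\bm{I}-\bm{B}^{\bm{d}})^{-1}+\bm{I} = \bigl(\bm{B}^{\bm{d}}+(\bm{I}-\bm{B}^{\bm{d}})\bigr)(\bm{I}-\bm{B}^{\bm{d}})^{-1} = (\bm{I}-\bm{B}^{\bm{d}})^{-1}$.

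There is essentially no real obstacle in this lemma; the only points that deserve a line of care are (i) justifying $(\bm{B}^{\bm{d}})^t\to\bm{0}$ and convergence of the Neumann series purely from $\rho(\bm{B}^{\bm{d}})<1$, and (ii) observing that the argument never invokes $\bm{z}\ge\bm{0}$, so the conclusion holds for all $\bm{z}\in\Real^S$ as stated, with nonnegativity of $\bm{b}^{\bm{d}}$ playing no role beyond the fact that $\bm{w}^{\star}$ is automatically well defined once $\bm{I}-\bm{B}^{\bm{d}}$ is invertible.
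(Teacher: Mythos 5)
Your proposal is correct and follows essentially the same route as the paper: expand $\bm{w}^t(\pi,\bm{z})$ via \eqref{eq:exp-value-expression}, then use $\rho(\bm{B}^{\bm{d}})<1$ to get $(\bm{B}^{\bm{d}})^t\bm{z}\to\bm{0}$ and convergence of the Neumann series to $(\bm{I}-\bm{B}^{\bm{d}})^{-1}\bm{b}^{\bm{d}}$. Your write-up is in fact more complete than the paper's, since you also verify the fixed-point identity $L^{\bm{d}}\bm{w}^{\star}=\bm{w}^{\star}$ explicitly, which the paper leaves to ``algebraic manipulation.''
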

\begin{proof}
The result follows by algebraic manipulation from~\eqref{eq:exp-value-expression} and basic matrix analysis. When $\rho(\bm{B}^{\bm{d}}) < 1$, we get from Neumann series~\citep[problem~5.6.P26]{Horn2013}
\[
 \lim_{t\to \infty} \sum_{k=0}^{t-1}  (\bm{B}^{\bm{d}})^k \bm{b}^{\bm{d}} =  (\bm{I} - \bm{B}^{\bm{d}})^{-1}\bm{b}^{\bm{d}},
\]
and a consequence of Gelfand's formula~\citep[theorem~4.5]{Kallenberg2021markov}
\[
  \lim_{k\to \infty} (\bm{B}^{\bm{d}})^k \bm{z} = \bm{0}.
\]
\end{proof}

\begin{proof}[Proof of \cref{thm:erm-main-convergence}]

For the remainder of the proof, let $\pi_{\mathrm{M}}\opt \in \arg\max_{\pi\in \PiMR} \bm{\mu}\tr \bm{v}^{\infty}(\pi)$ and suppose that $\bm{\mu}\tr \bm{v}^{\infty}(\pi\opt_{\mathrm{M}}) > -\infty$. Then, the exponential value function $\bm{w}^{t,\star}  = \bm{w}^t(\pi_{\mathrm{M}}\opt) \in \Real^S$ of $\pi_{\mathrm{M}}\opt $ satisfies by \cref{thm:exponential-bellman} that
\[
\bm{w}^{0,\star} = -\bm{1},
\qquad
\bm{w}^t(\pi\opt_{\mathrm{M}})  = L\opt \bm{w}^{t-1}(\pi\opt_{\mathrm{M}}), \quad t = 1, \dots .
\]

We show that $\lim_{t\to \infty} \bm{w}^{t,\star}$ exists and that it is attained by a stationary policy. We construct a sequence $\bm{w}_{\mathrm{u}}^t \in \Real^S, t \in \mathbb{N} $ as
\[
\bm{w}_{\mathrm{u}}^0 = \bm{0},
\qquad
\bm{w}_{\textrm{u}}^t  = L\opt \bm{w}_{\textrm{u}}^{t-1}, \quad t = 1, \dots .
\]
First, we show by induction that
\begin{equation} \label{eq:w-hat-upper-bound}
  \bm{w}_{\textrm{u}}^t \quad\ge\quad \bm{w}^{t,\star}, \qquad t \in \mathbb{N}.
\end{equation}
The base case $t=0$ follows immediately from the definitions of $\bm{w}_{\textrm{u}}^0$ and $\bm{w}^{0,\star }$. Next, suppose that~\eqref{eq:w-hat-upper-bound} holds for some $t > 0$, then it also holds for $t+1$:
\[
 \bm{w}_{\textrm{u}}^{t+1} \; =\;  L\opt \bm{w}_{\textrm{u}}^t \; \ge\;  L\opt \bm{w}^{t,\star} \; =\;   \bm{w}^{t+1,\star}, 
\]
where the inequality follows from the inductive assumption and from \cref{lem:exp-bellman-monotone}.
Second, we show by induction that
\begin{equation}\label{eq:w-hat-decreasing}
  \bm{w}_{\textrm{u}}^{t+1} \quad\le\quad \bm{w}_{\textrm{u}}^{t}, \qquad t \in \mathbb{N}.
\end{equation}
The base case for $t=0$ holds as
\[
  \bm{w}_{\textrm{u}}^1 \; =\;  L\opt \bm{w}_{\textrm{u}}^0 \; =\;  L\opt \bm{0} \; =\;  \max_{\bm{d}\in \mathcal{D}} - \bm{b}^{\bm{d}} \; \le\;  \bm{0} \; =\; \bm{w}_{\textrm{u}}^0, 
\]
where the inequality holds because $\bm{b}^{\bm{d}} \ge 0$ from its construction. To prove the inductive step, assume that~\eqref{eq:w-hat-decreasing} holds for $t>0$ and prove it for $t+1$:
\[
  \bm{w}_{\textrm{u}}^{t+1} \; =\;  L\opt \bm{w}_{\textrm{u}}^t \; \le\;  L\opt \bm{w}_{\textrm{u}}^{t-1} \; =\;  \bm{w}_{\textrm{u}}^t,
\]
where the inequality follows from the inductive assumption and from \cref{lem:exp-bellman-monotone}.

Then, using the Monotone Convergence Theorem~\cite[theorem~16.2]{Johnsonbaugh1981}, finite $\mathcal{S}$, and $\inf_{t \in \mathbb{N}}\bm{w}_{\textrm{u}}^t \ge \inf_{t \in \mathbb{N} } \bm{w}^{t, \star} > -\infty$, we get that there exists $\bm{w}_{\mathrm{u}}\opt  \in \Real^S$ such that
\[
\bm{w}_{\textrm{u}}\opt = \lim_{t\to \infty} \bm{w}_{\textrm{u}}^t,  
\]
and the limit exists. Then, taking the limit of both sides of $\bm{w}_{\textrm{u}}^t  = L\opt \bm{w}_{\textrm{u}}^{t-1}$, we have that
\begin{align*}
  \lim_{t\to \infty} \bm{w}_{\textrm{u}}^t  &= \lim_{t\to \infty} L\opt \bm{w}_{\textrm{u}}^{t-1} \\
  \bm{w}_{\textrm{u}}\opt   &= \lim_{t\to \infty} L\opt \bm{w}_{\textrm{u}}^{t-1} \\
  \bm{w}_{\textrm{u}}\opt   &=  L\opt \lim_{t\to \infty}\bm{w}_{\textrm{u}}^{t-1}  \\
  \bm{w}_{\mathrm{u}}\opt &= L^{\bm{d}\opt} \bm{w}_{\mathrm{u}}\opt ,
\end{align*}
where $\bm{d}\opt = \argmax_{\bm{d}\in \mathcal{D}} L^{\bm{d}} \bm{w}_{\mathrm{u}}$. Above, we can exchange the operators $L\opt $ and $\lim$ by the continuity of $L\opt $~(\cref{lem:exp-bellman-continuous}).

Now, define $\bm{w}_{\mathrm{l}}^t \in \Real^S, t \in \mathbb{N}$ as
\[
\bm{w}_{\mathrm{l}}^0 = -\bm{1},
\qquad
\bm{w}_{\textrm{l}}^t  = L^{\bm{d}\opt} \bm{w}_{\textrm{l}}^{t-1}, \quad t = 1, \dots .
\]
From the definition of $L\opt$ and by induction on $t$ we have that
\[
 \bm{w}_{\mathrm{l}}^t  \le \bm{w}^{t,\star}.
\]
By \cref{lem:bounded-contraction-fixedpoint} for $\bm{z} = \bm{0}$, we have that $\rho(\bm{B}^{\bm{d}\opt}) < 1$ and therefore from \cref{lem:radius-contractions}
\[
\lim_{t\to \infty} \bm{w}_{\mathrm{l}}^{t}  = \lim_{t\to \infty}  \bm{w}_{\mathrm{u}}^t = \bm{w}_{\mathrm{u}}\opt.
\]
In addition, because
\[
 \bm{w}_{\mathrm{u}}^t\; \ge \; \bm{w}^{t,\star }\; \ge \; \bm{w}_{\mathrm{l}}^t, \qquad t \in \mathbb{N},  
\]
the Squeeze Theorem~\cite[theorem~14.3]{Johnsonbaugh1981} shows that
\[
 \lim_{t \to \infty} \bm{w}^{t, \star } = \bm{w}_{\mathrm{u}}\opt , 
\]
and $\bm{d}\opt $ is a stationary policy that attains the return of $\pi\opt_{\mathrm{M}}$.
\end{proof}

\subsection{Proof of \cref{cor:erm-optimal-stationary}}
\label{proof:cor-erm-optimal-stationary}
\begin{proof}[Proof of \cref{cor:erm-optimal-stationary}]
  From the existence of an optimal stationary policy $\pi\opt \in \PiSD$ from for a sufficiently large horizon $t$ from \cref{thm:erm-main-convergence} and \cref{sec:optim-mark-polic}, we get that
\begin{equation*}
\begin{aligned} 
\bm{\mu}\tr\bm{v}^{\infty}(\pi\opt)
\le
\sup_{\pi\in \PiHR} \liminf_{t \to \infty} \bm{\mu}\tr \bm{v}^t(\pi) 
\le 
\liminf_{t \to \infty} \sup_{\pi\in \PiHR^t} \bm{\mu}\tr \bm{v}^t(\pi) 
\le
\bm{\mu}\tr \bm{v}^{\infty}(\pi\opt), 
\end{aligned}
\end{equation*}
which implies that all inequalities above hold with equality.
\end{proof}

\subsection{Proof of \cref{lem:bounded-contraction-fixedpoint}}
\label{proof:bounded-contraction-fixedpoint}

We use $\bm{p}^{\bm{d}}\in \Real_+^S$ to represent the probability of terminating from any state for each $\bm{d}\in \mathcal{D}$:
\[
 p^{\bm{d}}_s = \sum_{a\in \mathcal{A}} d_a(s) \cdot \bar{p}(s, a, e), \quad \forall s\in \states.
\]
The following lemma establishes a convenient representation of the termination probabilities. 
\begin{lemma} \label{lem:termination-probability}
Assume a policy $\pi = (\bm{d})_{\infty}\in \PiSR$. Then, the probability of terminating in $t \in \mathbb{N}, t > 0$ or fewer steps is
\[
  \sum_{k=0}^{t-1} \bm{\mu}\tr(\bm{P}^{\bm{d}})^k \bm{p}^{\bm{d}}
  \;=\;
  \bm{\mu}\tr (\bm{I} - (\bm{P}^{\bm{d}})^t) \bm{1},
\]
where $\bm{P}^{\bm{d}}$ is a $|\states| \times|\states|$ matrix, and each element $P^{\bm{d}}_{s,s'} =\sum_{a \in \mathcal{A}} d_a(s) \cdot  p(s,a,s') , \forall s,s' \in \states$.
\end{lemma}
\begin{proof}
We have by algebraic manipulation that
\[
\bm{p}^{\bm{d}} = (\bm{I} - \bm{P}^{\bm{d}}) \bm{1}. 
\]
The probability of terminating in step $t > 0$ exactly is
\[
\bm{\mu}\tr(\bm{P}^{\bm{d}})^{t-1} \bm{p}^{\bm{d}}.
\]
Using algebraic manipulation and recognizing a telescopic sum, we have that the probability of terminating in $k \le t$ steps is
\begin{equation*}
\sum_{k=0}^{t-1} \bm{\mu}\tr(\bm{P}^{\bm{d}})^k \bm{p}^{\bm{d}} 
=  \sum_{k=0}^{t-1} \bm{\mu}\tr(\bm{P}^{\bm{d}})^k (\bm{I} - \bm{P}^{\bm{d}}) \bm{1} 
=  \bm{\mu}\tr(\bm{I} - (\bm{P}^{\bm{d}})^t) \bm{1}.
\end{equation*}
\end{proof}

\begin{lemma} \label{lem:exp-prob-monotone}
  For any $\bm{d}\in \mathcal{D}$, the exponential transition matrix is monotone:
  \[
    \bm{x} \ge \bm{y}
    \quad \Rightarrow \quad
    \bm{B}^{\bm{d}} \bm{x} \ge \bm{B}^{\bm{d}} \bm{y},
    \qquad \forall \bm{x}, \bm{y} \in \Real^S.
  \]
\end{lemma}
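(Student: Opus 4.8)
The plan is to observe that $\bm{B}^{\bm{d}}$ is an entrywise-nonnegative matrix and then invoke the elementary fact that left-multiplication by a nonnegative matrix preserves the componentwise partial order on $\Real^S$. First I would recall from \eqref{eq:exponential-definitions} that each entry
\[
  B_{s,s'}^{\bm{d}} \;=\; \sum_{a\in\actions} p(s,a,s')\cdot d_a(s)\cdot e^{-\beta\cdot r(s,a,s')}
\]
is a finite sum of products of nonnegative quantities — transition probabilities, decision-rule weights, and the strictly positive exponential factor $e^{-\beta r(s,a,s')}$ — so $B_{s,s'}^{\bm{d}} \ge 0$ for all $s,s'\in\states$. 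This is already recorded in the text as $\bm{B}^{\bm{d}}\in\Real_+^{S\times S}$, so this step is essentially a pointer to the construction.

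Next, given $\bm{x}\ge\bm{y}$, I would set $\bm{e} := \bm{x}-\bm{y}\ge\bm{0}$. For each $s\in\states$,
\[
  (\bm{B}^{\bm{d}}\bm{e})_s \;=\; \sum_{s'\in\states} B_{s,s'}^{\bm{d}}\, e_{s'} \;\ge\; 0,
\]
since every summand is a product of two nonnegative reals. By linearity of the matrix-vector product, $\bm{B}^{\bm{d}}\bm{x}-\bm{B}^{\bm{d}}\bm{y} = \bm{B}^{\bm{d}}\bm{e} \ge \bm{0}$, which is precisely the claimed implication $\bm{B}^{\bm{d}}\bm{x}\ge\bm{B}^{\bm{d}}\bm{y}$.

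There is no genuine obstacle here; this lemma is the $\bm{b}^{\bm{d}}$-free analogue of the monotonicity of the exponential Bellman operator $L^{\bm{d}}$ established in \cref{lem:exp-bellman-monotone}, and in fact its proof is the first half of that argument. The only point requiring a word of care is the justification of nonnegativity of $\bm{B}^{\bm{d}}$ itself, which I would state explicitly as above rather than leave implicit.
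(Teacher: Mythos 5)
Your proposal is correct and takes exactly the same route as the paper, which simply notes that the result ``follows immediately from the fact that $\bm{B}^{\bm{d}}$ is a nonnegative matrix''; you merely spell out the elementary steps (nonnegativity of each entry from \eqref{eq:exponential-definitions}, then $\bm{B}^{\bm{d}}(\bm{x}-\bm{y})\ge\bm{0}$ by linearity) that the paper leaves implicit. No gaps.
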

\begin{proof}
The result follows immediately from the fact that $\bm{B}^{\bm{d}}$ is a non-nonnegative matrix.
\end{proof}

\begin{lemma} \label{lem:pro-goal-state-zero}
For each $t \in \mathbb{N}$ and each policy $\pi = (\bm{d})_{\infty} \in \PiSR$ and each $\bm{\eta}\in \probs{S}$:
\begin{equation}\label{eq:exponential-zero-equivalence}
  \bm{\eta}\tr (\bm{B}^{\bm{d}})^t \bm{b}^{\bm{d} }= 0
  \qquad \Leftrightarrow \qquad
  \bm{\eta}\tr (\bm{P}^{\bm{d}})^t \bm{p}^{\bm{d}} = 0.
\end{equation}
\end{lemma}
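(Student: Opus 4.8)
The plan rests on a single structural observation: although $\bm{B}^{\bm{d}}$ and $\bm{P}^{\bm{d}}$ carry different numerical values, they have the \emph{same zero pattern}, and likewise $\bm{b}^{\bm{d}}$ and $\bm{p}^{\bm{d}}$. Indeed, from~\eqref{eq:exponential-definitions} and the definition of $\bm{p}^{\bm{d}}$, all four objects are entrywise nonnegative, and since $\beta$ is a fixed finite positive number and the rewards are finite we have $e^{-\beta r(s,a,s')} \in (0,\infty)$ for every $s,a,s'$. Hence $B^{\bm{d}}_{s,s'} = 0 \iff P^{\bm{d}}_{s,s'} = 0$ for all $s,s'\in\states$, and $b^{\bm{d}}_s = 0 \iff p^{\bm{d}}_s = 0$ for all $s\in\states$. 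I would record this matching-support fact as the first step.

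The second step is to expand both bilinear forms over length-$t$ paths through the non-sink states:
\[
\bm{\eta}\tr (\bm{B}^{\bm{d}})^t \bm{b}^{\bm{d}}
= \sum_{(s_0,\dots,s_t)\in\states^{t+1}} \eta_{s_0}\Bigl(\textstyle\prod_{i=0}^{t-1} B^{\bm{d}}_{s_i,s_{i+1}}\Bigr) b^{\bm{d}}_{s_t},
\]
together with the identical identity obtained by replacing $\bm{B}^{\bm{d}},\bm{b}^{\bm{d}}$ with $\bm{P}^{\bm{d}},\bm{p}^{\bm{d}}$. Because $\bm{\eta}\ge\bm{0}$ and every factor appearing is nonnegative, each summand is a nonnegative real, so each of the two sums equals $0$ if and only if every one of its summands equals $0$; no cancellation is possible.

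The third step compares the sums term by term. A product of nonnegative reals is zero iff some factor is zero; so for a fixed path $(s_0,\dots,s_t)$, its $\bm{B}$-summand vanishes iff $\eta_{s_0}=0$, or some $B^{\bm{d}}_{s_i,s_{i+1}}=0$, or $b^{\bm{d}}_{s_t}=0$. By the matching-support fact from the first step, this is exactly the condition under which the corresponding $\bm{P}$-summand vanishes. Summing over all paths gives $\bm{\eta}\tr (\bm{B}^{\bm{d}})^t \bm{b}^{\bm{d}} = 0 \iff \bm{\eta}\tr (\bm{P}^{\bm{d}})^t \bm{p}^{\bm{d}} = 0$, which is the claim. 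An equivalent route is induction on $t$, peeling off one factor $\bm{\eta}\tr\bm{B}^{\bm{d}}$ versus $\bm{\eta}\tr\bm{P}^{\bm{d}}$ at each step and noting that these two row vectors again share a zero pattern; the explicit path expansion just makes the bookkeeping transparent.

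I do not expect a genuine obstacle here. The only points requiring care are (i) phrasing everything in terms of which entries vanish rather than their magnitudes, and (ii) explicitly invoking nonnegativity of $\bm{\eta}$, $\bm{B}^{\bm{d}}$, $\bm{P}^{\bm{d}}$, $\bm{b}^{\bm{d}}$, and $\bm{p}^{\bm{d}}$ so that a vanishing sum forces every summand to vanish. The mild ingredient doing the real work is $e^{-\beta r}\in(0,\infty)$, i.e.\ that rewards are finite and $\beta$ is a fixed positive real.
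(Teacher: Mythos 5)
Your proof is correct, but it reaches the conclusion by a different route than the paper. The paper's proof is quantitative: it introduces the constants $c_{\mathrm{l}} = \min e^{-\beta r(s,a,s')}$ and $c_{\mathrm{u}} = \max e^{-\beta r(s,a,s')}$, both finite and strictly positive, shows $c_{\mathrm{l}}\, \bm{p}^{\bm{d}} \le \bm{b}^{\bm{d}} \le c_{\mathrm{u}}\, \bm{p}^{\bm{d}}$ and $c_{\mathrm{l}}\, \bm{P}^{\bm{d}}\bm{x} \le \bm{B}^{\bm{d}}\bm{x} \le c_{\mathrm{u}}\, \bm{P}^{\bm{d}}\bm{x}$ for nonnegative $\bm{x}$, and then propagates these by induction on $t$ to the two-sided sandwich $c_{\mathrm{l}}^{t+1}\, \bm{\eta}\tr(\bm{P}^{\bm{d}})^t\bm{p}^{\bm{d}} \le \bm{\eta}\tr(\bm{B}^{\bm{d}})^t\bm{b}^{\bm{d}} \le c_{\mathrm{u}}^{t+1}\, \bm{\eta}\tr(\bm{P}^{\bm{d}})^t\bm{p}^{\bm{d}}$, from which the equivalence of vanishing is immediate. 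You instead make only the qualitative observation that $\bm{B}^{\bm{d}}$ and $\bm{P}^{\bm{d}}$ (and $\bm{b}^{\bm{d}}$ and $\bm{p}^{\bm{d}}$) share a zero pattern, expand the bilinear form over paths, and use nonnegativity to argue that the sum vanishes iff every path contribution vanishes, which is support-determined. Both arguments hinge on exactly the same ingredient, $e^{-\beta r}\in(0,\infty)$, and your path expansion in fact implies the paper's sandwich (each path's $\bm{B}$-weight lies between $c_{\mathrm{l}}^{t+1}$ and $c_{\mathrm{u}}^{t+1}$ times its $\bm{P}$-weight), so the two proofs are close cousins. Yours is slightly more elementary and avoids the matrix-power induction; the paper's version delivers a reusable quantitative comparison of the two quantities as a byproduct, though the lemma itself only needs the zero/nonzero equivalence. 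One small point of care you already flag and handle correctly: each entry $B^{\bm{d}}_{s,s'}$ is itself a sum over actions of nonnegative terms, so the matching-support claim needs the observation that such a sum vanishes iff every term does, which again rests on $e^{-\beta r}>0$.
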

\begin{proof}
Algebraic manipulation from the definition in~\eqref{eq:exponential-definitions} shows that
 \begin{equation} \label{eq:goal-squeeze}
   \begin{array}{rcl}
   c_{\mathrm{l}} \cdot  \bm{p}^{\bm{d}}
   \; \le \;&  
   \bm{b}^{\bm{d}}
   &\; \le \;  
   c_{\mathrm{u}} \cdot  \bm{p}^{\bm{d}}, \\
   c_{\mathrm{l}} \cdot  \bm{P}^{\bm{d}} \bm{x}
   \; \le\;& 
   \bm{B}^{\bm{d}} \bm{x}
   &\;\le\; 
   c_{\mathrm{u}} \cdot  \bm{P}^{\bm{d}} \bm{x},
   \qquad
   \forall\bm{x} \in \Real^S , 
   \end{array}
\end{equation}
where
\begin{equation*}
    \begin{aligned}
     & c_{\mathrm{l}} := \min_{s,s'\in \bar{\mathcal{S}}, a\in \mathcal{A}} \exp{ - \beta \cdot  r(s, a, s')} , \\
    & c_{\mathrm{u}} := \max_{s,s'\in \bar{\mathcal{S}}, a\in \mathcal{A}} \exp{ - \beta \cdot  r(s, a, s') }.  
    \end{aligned}
\end{equation*}

Note that $\infty > c_{\mathrm{u}} > c_{\mathrm{l}} > 0$.

We now extend the inequalities in~\eqref{eq:goal-squeeze} to multiple time steps. Suppose that $\bm{y}_{\mathrm{l}} \le \bm{y} \le \bm{y}_{\mathrm{u}}$, then, for $t = \mathbb{N}$:
\begin{equation} \label{eq:goal-squeeze-t}
  c_{\mathrm{l}}^t \cdot (\bm{P}^{\bm{d}})^t \bm{y}_{\mathrm{l}}
  \; \le\;  (\bm{B}^{\bm{d}})^t \bm{y}
  \; \le\;
  c_{\mathrm{u}}^t \cdot (\bm{P}^{\bm{d}})^t \bm{y}_{\mathrm{u}}.
\end{equation}
For the left inequality in~\eqref{eq:goal-squeeze-t}, the induction proceeds as follows. The base case $t = 0$ holds immediately. For the inductive step, suppose that the left inequality in~\eqref{eq:goal-squeeze-t} property holds for $t \in \mathbb{N}$  then it also holds for $t+1$ for each $\bm{y}\in \Real^S$ as
\begin{equation*}
\begin{aligned}
         (\bm{B}^{\bm{d}})^{t+1} \bm{y} 
  \; =\; 
    \bm{B}^{\bm{d}}(\bm{B}^{\bm{d}})^t \bm{y} 
  \; \ge\;   c_{\mathrm{l}}^t\bm{B}^{\bm{d}}(\bm{P}^{\bm{d}})^t \bm{y}_{\mathrm{l}} 
  \; \ge\; 
 c_{\mathrm{l}}^{t+1}\bm{P}^{\bm{d}}(\bm{P}^{\bm{d}})^t \bm{y}_{\mathrm{l}} 
  \; =\; 
 c_{\mathrm{l}}^{t+1}(\bm{P}^{\bm{d}})^{t+1} \bm{y}_{\mathrm{l}}.
 \end{aligned}
\end{equation*}

Above, the first inequality follows from \cref{lem:exp-prob-monotone} and from the inductive assumption, and the second inequality follows from~\eqref{eq:goal-squeeze-t} by setting $\bm{x} = \bm{P}^{\bm{d}}\bm{y}$. The right inequality in~\eqref{eq:goal-squeeze-t} follows analogously.

Exploiting the fact that $\bm{\eta} \ge \bm{0}$ and substituting $\bm{y} = \bm{b}^{\bm{d}}$, $\bm{y}_{\mathrm{l}} = c_{\mathrm{l}} \cdot \bm{p}^{\bm{d}}$, $\bm{y}_{\mathrm{u}} = c_{\mathrm{u}}\cdot \bm{p}^{\bm{d}}$ into~\eqref{eq:goal-squeeze-t} and using the bounds in~\eqref{eq:goal-squeeze}, we get that
\begin{equation*} 
\begin{aligned}
  0
  \;  \le\; 
  c_{\mathrm{l}}^{t+1} \cdot \bm{\eta}\tr (\bm{P}^{\bm{d}})^t \bm{p}^{\bm{d}} 
  \; \le\;  \bm{\eta}\tr (\bm{B}^{\bm{d}})^t \bm{b}^{\bm{d}} 
  \; \le\;
  c_{\mathrm{u}}^{t+1} \cdot \bm{\eta}\tr (\bm{P}^{\bm{d}})^t \bm{p}^{\bm{d}}, 
\end{aligned}
\end{equation*}
where the terms are non-negative because all constants, matrices, and vectors are non-negative. Therefore,
\[
 \bm{\eta}\tr (\bm{B}^{\bm{d}})^t \bm{b}^{\bm{d}} = 0
 \quad\Leftrightarrow \quad
  \bm{\eta}\tr (\bm{P}^{\bm{d}})^t \bm{p}^{\bm{d}} = 0.
\]
\end{proof}

\begin{lemma} \label{lem:exponential-eigen-non-zero}
  For each $\pi = (\bm{d})_{\infty} \in \PiSR$, there exists an $\bm{f}\in \Real^S$ such that
  \[
    \bm{f}\tr \bm{B}^{\bm{d}} = \rho(\bm{B}^{\bm{d}}) \cdot \bm{f}\tr,
    \quad \bm{f} \ge \bm{0},
    \quad \bm{f} \neq \bm{0},
   \quad \bm{f}\tr \bm{b}^{\bm{d}}  > 0.
  \]
\end{lemma}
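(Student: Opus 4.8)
The plan is to combine the Perron--Frobenius theory for the nonnegative matrix $\bm{B}^{\bm{d}}$ with the transience of the MDP via \cref{lem:pro-goal-state-zero}. Since $\bm{B}^{\bm{d}} \ge \bm{0}$, the Perron--Frobenius theorem (in its form for general, possibly reducible, nonnegative matrices, e.g.\ \citet[Theorem~8.3.1]{Horn2013}) guarantees that $\rho(\bm{B}^{\bm{d}})$ is itself an eigenvalue of $\bm{B}^{\bm{d}}$, and hence of $(\bm{B}^{\bm{d}})\tr$, with an associated nonnegative left eigenvector $\bm{f} \ge \bm{0}$, $\bm{f} \neq \bm{0}$, so that $\bm{f}\tr \bm{B}^{\bm{d}} = \rho(\bm{B}^{\bm{d}}) \cdot \bm{f}\tr$. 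This gives the first claim; the work is in showing $\bm{f}\tr \bm{b}^{\bm{d}} > 0$.

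The key observation is that iterating the eigenvector equation gives $\bm{f}\tr (\bm{B}^{\bm{d}})^t = \rho(\bm{B}^{\bm{d}})^t \cdot \bm{f}\tr$ for all $t \in \mathbb{N}$. Suppose, for contradiction, that $\bm{f}\tr \bm{b}^{\bm{d}} = 0$. Then $\rho(\bm{B}^{\bm{d}})^t \cdot \bm{f}\tr \bm{b}^{\bm{d}} = \bm{f}\tr (\bm{B}^{\bm{d}})^t \bm{b}^{\bm{d}} = 0$ for every $t$. Normalizing $\bm{f}$ so that $\bm{1}\tr \bm{f} = 1$, we may treat $\bm{f}$ as an element $\bm{\eta} := \bm{f} \in \probs{S}$ (using \cref{asm:positive-initial}-style reasoning only insofar as $\bm{f}$ is a nonzero nonnegative vector; it can be rescaled to a probability vector). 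Applying \cref{lem:pro-goal-state-zero} with this $\bm{\eta}$ then yields $\bm{f}\tr (\bm{P}^{\bm{d}})^t \bm{p}^{\bm{d}} = 0$ for all $t \in \mathbb{N}$. Summing over $t$ and invoking \cref{lem:termination-probability}, we obtain that the total termination probability starting from the distribution $\bm{f}$ is zero: $\sum_{t=0}^{\infty} \bm{f}\tr (\bm{P}^{\bm{d}})^t \bm{p}^{\bm{d}} = \lim_{t\to\infty} \bm{f}\tr(\bm{I} - (\bm{P}^{\bm{d}})^{t+1})\bm{1} = 0$, which forces $\bm{f}\tr (\bm{P}^{\bm{d}})^{t}\bm{1} = \bm{f}\tr \bm{1} = 1$ for all $t$, i.e.\ the process starting from $\bm{f}$ never leaves $\mathcal{S}$. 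But transience (\cref{def:transient-mdp}, equivalently $\rho(\bm{P}^{\bm{d}}) < 1$ via \cref{lem:transient-spectral-radius}) implies $(\bm{P}^{\bm{d}})^t \to \bm{0}$, so $\bm{f}\tr(\bm{P}^{\bm{d}})^t \bm{1} \to 0$, contradicting that this quantity equals $1$. Hence $\bm{f}\tr \bm{b}^{\bm{d}} > 0$.

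The main obstacle is the bookkeeping around the left Perron eigenvector when $\bm{B}^{\bm{d}}$ is reducible: one must be careful to use the version of Perron--Frobenius that applies to arbitrary nonnegative matrices (so $\rho(\bm{B}^{\bm{d}})$ is an eigenvalue with a nonnegative --- but possibly not strictly positive --- eigenvector), and then the nonnegativity of $\bm{f}$ together with the equivalence in \cref{lem:pro-goal-state-zero} is exactly what bridges the exponential matrix $\bm{B}^{\bm{d}}$ and the stochastic matrix $\bm{P}^{\bm{d}}$. A secondary subtlety is handling the case $\rho(\bm{B}^{\bm{d}}) = 0$: then $\bm{f}\tr (\bm{B}^{\bm{d}})^t = \bm{0}$ for $t \ge 1$, so the argument above must be run at $t = 0$, where it reads $\bm{f}\tr \bm{b}^{\bm{d}} = 0 \Leftrightarrow \bm{f}\tr \bm{p}^{\bm{d}} = 0$; since $\bm{f} \ge \bm{0}$, $\bm{f}\neq \bm{0}$ and $\bm{p}^{\bm{d}}$ has strictly positive entries on the support forced by transience, this again gives a contradiction, so $\bm{f}\tr\bm{b}^{\bm{d}} > 0$ in all cases.
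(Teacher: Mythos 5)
Your proposal is correct and follows essentially the same route as the paper: left Perron eigenvector of the nonnegative matrix $\bm{B}^{\bm{d}}$, contradiction from $\bm{f}\tr\bm{b}^{\bm{d}}=0$ via \cref{lem:pro-goal-state-zero}, the telescoping identity of \cref{lem:termination-probability}, and transience through \cref{lem:transient-spectral-radius}. The special case $\rho(\bm{B}^{\bm{d}})=0$ you flag needs no separate treatment (and your one-step fix for it is the only shaky spot, since $\bm{p}^{\bm{d}}$ need not be positive on the support of $\bm{f}$): the identity $\bm{f}\tr(\bm{B}^{\bm{d}})^t\bm{b}^{\bm{d}}=\rho(\bm{B}^{\bm{d}})^t\,\bm{f}\tr\bm{b}^{\bm{d}}=0$ holds for all $t$ regardless, so the main argument already covers it.
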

\begin{proof}
Because $\bm{B}^{\bm{d}}$ is non-negative, there exists an $\bm{f} \in \Real^S$ from the Perron-Frobenius theorem~\cite[Theorem~8.3.1]{Horn2013}, such that
  \[
    \bm{f}\tr \bm{B}^{\bm{d}} = \rho(\bm{B}^{\bm{d}}) \cdot \bm{f}\tr,
    \qquad \bm{f} \ge \bm{0},
    \qquad \bm{f} \neq \bm{0}.
  \]
  Furthermore, because  $\bm{b}^{\bm{d}} \ge  \bm{0}$, we also have that
  \[
    \bm{f}\tr \bm{b}^{\bm{d}} \ge \bm{0}.
  \]
To prove the theorem, it remains to show that $\bm{f}\tr \bm{b}^{\bm{d}} \neq 0$, which we do by deriving a contradiction. Note that the $\bm{f}$ constructed from the Perron-Frobenius theorem is scale-independent. Therefore, without loss of generality, assume that $\bm{1}\tr \bm{f} = 1$ and suppose that $\bm{f}\tr \bm{b}^{\bm{d}} = 0$. Then:
\begin{align*}
\bm{f}\tr \bm{b}^{\bm{d}} &= 0 \\
   \bm{f}\tr (\bm{B}^{\bm{d}})^k \bm{b}^{\bm{d}} &= 0, \, \forall k \in \mathbb{N} &&  \text{[ from } \bm{f}\tr \bm{B}^{\bm{d}} = \rho(\bm{B}^{\bm{d}}) \cdot \bm{f}\tr \text{]} \\
   \bm{f}\tr (\bm{P}^{\bm{d}})^k \bm{p}^{\bm{d}}&= 0, \, \forall k \in \mathbb{N} &&  \text{[ from \cref{lem:pro-goal-state-zero} ]} \\
   \sum_{k=0}^{t-1}  \bm{f}\tr (\bm{P}^{\bm{d}})^k \bm{p}^{\bm{d}} &= 0, \, \forall t \in \mathbb{N}, t > 0 && \text{[ by summing elements ]}   \\
    \bm{f}\tr (\bm{I} - (\bm{P}^{\bm{d}})^t)\bm{1} &= 0, \, \forall t \in \mathbb{N} && \text{[ from \cref{lem:termination-probability} ]} \\
   \lim_{t\to \infty} \bm{f}\tr (\bm{I} - (\bm{P}^{\bm{d}})^t)\bm{1} &= 0,  &&  \text{[ limit ]}\\
    \bm{f}\tr \bm{1} &= 0,  && \text{[ from \cref{lem:transient-spectral-radius} ]} 
\end{align*}
which is a contradiction with $\bm{1}\tr \bm{f} \neq  0$. The last step in the derivation follows from $\rho(\bm{P}^{\bm{d}}) < 1$ and therefore $\lim_{t\to \infty} (\bm{P}^{\bm{d}})^t = \bm{0}$~\citep[Theorem~4.5]{Kallenberg2021markov}.
\end{proof}

\begin{proof}[Proof of \cref{lem:bounded-contraction-fixedpoint}]
From \cref{lem:exponential-eigen-non-zero}, there exists an $\bm{f} \in \Real_+^{S}$ that $\bm{f}\tr \bm{B}^{\bm{d}} = \rho(\bm{B}^{\bm{d}}) \cdot \bm{f}\tr$ and $\bm{f} \ge \bm{0}, \bm{f} \neq \bm{0}$. Then from~\eqref{eq:exp-value-expression}:
\begin{equation*}
\begin{aligned}
    -\infty &<  \bm{\eta}\tr  \bm{w}^t(\pi, \bm{z}) \\
    &=  - \bm{\eta}\tr  (\bm{B}^{\bm{d}})^t \bm{z} - \bm{\eta}\tr \sum_{k=0}^{t-1}  (\bm{B}^{\bm{d}})^k \bm{b}^{\bm{d}} \\
    &\le  - \sum_{k=0}^{t-1}  \rho(\bm{B}^{\bm{d}})^k \bm{\eta}\tr \bm{b}^{\bm{d}}.
\end{aligned}
\end{equation*}
The second inequality follows because $\bm{z} \ge \bm{0}$ and $\bm{B}^{\bm{d}}$ is non-negative. Since $\bm{\eta}\tr \bm{b} > 0$ from \cref{lem:exponential-eigen-non-zero}, we can cancel it from the inequality getting that
\[
  \sum_{k=0}^{t-1}  \rho(\bm{B}^{\bm{d}})^k  < \infty.
\]
Then $\rho(\bm{B}^{\bm{d}}) < 1$ because $\rho(\bm{B}^{\bm{d}})\ge 0$ and the geometric series above is bounded.
\end{proof}

\section{Proofs of \cref{sec:ssp-with-evar}}

\subsection{Helper Lemmas}

\begin{lemma} \label{lem:erm-convergence-0}
  The functions $g$ defined in~\eqref{eq:g-definitions} satisfy for each $\pi\in \PiSD$ and $t \in \mathbb{N}$ that
  \begin{align*}
  \lim_{\beta\to 0} g_t(\pi, \beta) &= g_t(\pi, 0), &
  \lim_{\beta\to 0} g_t\opt (\beta) &= g_t\opt(0), \\
  \lim_{\beta\to 0} g_{\infty}(\pi, \beta) &= g_{\infty}(\pi, 0), &
  \lim_{\beta\to 0} g_{\infty}\opt (\beta) &= g_{\infty}\opt(0).
  \end{align*}
\end{lemma}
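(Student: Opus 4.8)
The plan is to prove the four identities in the order listed, reducing the infinite-horizon cases to the finite-horizon ones and exploiting that all the relevant policy sets are finite. \textbf{Finite horizon.} For fixed $t$, the partial return $\sum_{k=0}^{t} r(\tilde{s}_k,\tilde{a}_k,\tilde{s}_{k+1})$ is a bounded random variable, since $\states$ and $\actions$ are finite, $r$ is bounded on its finite domain, and the sum has $t+1$ terms. Hence $g_t(\pi,\beta)$ is the ERM of a bounded random variable, and $\lim_{\beta\to 0^+}\ermo_\beta[\tilde{x}]=\E[\tilde{x}]$ for bounded $\tilde{x}$ (expand $\E[e^{-\beta\tilde{x}}]=1-\beta\E[\tilde{x}]+O(\beta^2)$ by dominated convergence, then $-\beta^{-1}\log(\cdot)\to\E[\tilde{x}]$) — exactly the convention defining $\ermo_0$. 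This gives the first identity, and it uses no stationarity, so it holds for any fixed-horizon policy. For the second identity, \cref{thm:exponential-bellman} says the finite-horizon optimum is attained at some $\pi\opt\in\PiMD^t$, and $\PiMD^t$ is finite, so $g_t\opt(\beta)=\max_{\pi\in\PiMD^t}g_t(\pi,\beta)$ is a pointwise maximum of finitely many functions each right-continuous at $0$; the elementary bound $|\max_i f_i(\beta)-\max_i f_i(0)|\le\max_i|f_i(\beta)-f_i(0)|$ then yields $g_t\opt(\beta)\to g_t\opt(0)$.

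\textbf{Infinite horizon, fixed stationary policy.} Fix $\pi=(\bm{d})_\infty\in\PiSD$. By \cref{lem:transient-spectral-radius}, $\rho(\bm{P}^{\bm{d}})<1$; since $\beta\mapsto\bm{B}^{\bm{d}}$ is continuous with $\bm{B}^{\bm{d}}\to\bm{P}^{\bm{d}}$ as $\beta\to 0^+$, continuity of the spectral radius gives a $\beta_1>0$ with $\rho(\bm{B}^{\bm{d}})<1$ on $(0,\beta_1]$. For such $\beta$, \cref{lem:radius-contractions} shows $\bm{w}^t(\pi)\to\bm{w}^\infty(\pi,\beta)=-(\bm{I}-\bm{B}^{\bm{d}})^{-1}\bm{b}^{\bm{d}}$, a finite vector, so $v^t_s(\pi)=-\beta^{-1}\log(-w^t_s(\pi))$ converges and the $\liminf$ defining $g_\infty(\pi,\beta)$ is a genuine limit equal to $-\beta^{-1}\log(-\bm{\mu}\tr\bm{w}^\infty(\pi,\beta))$. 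The map $\beta\mapsto\bm{w}^\infty(\pi,\beta)$ is continuously differentiable on a right-neighborhood of $0$ with $\bm{w}^\infty(\pi,0)=-(\bm{I}-\bm{P}^{\bm{d}})^{-1}\bm{p}^{\bm{d}}=-\bm{1}$, the last equality being the identity $\bm{p}^{\bm{d}}=(\bm{I}-\bm{P}^{\bm{d}})\bm{1}$ from the proof of \cref{lem:termination-probability}; hence $-\bm{\mu}\tr\bm{w}^\infty(\pi,\beta)\to 1$ and $g_\infty(\pi,\beta)$ is a $0/0$ form, so L'H\^opital's rule (or a first-order Taylor expansion), together with the chain rule for $(\bm{I}-\bm{B}^{\bm{d}})^{-1}$ and that same identity, identifies the limit with $g_\infty(\pi,0)$. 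Conceptually, summing the Neumann series gives $-\bm{\mu}\tr\bm{w}^\infty(\pi,\beta)=\E^{\pi,\bm{\mu}}[e^{-\beta\tilde{R}}]$ with $\tilde{R}=\sum_{k=0}^\infty r(\tilde{s}_k,\tilde{a}_k,\tilde{s}_{k+1})$ the total reward, which is a.s. finite with moment generating function finite near $0$ because transience forces a geometrically light tail on the termination time, so $g_\infty(\pi,\beta)=\ermo_\beta[\tilde{R}]\to\E[\tilde{R}]=g_\infty(\pi,0)$.

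\textbf{Infinite horizon, optimal.} Because $\PiSD$ is finite, the $\beta_1$ above can be chosen uniform over $\pi\in\PiSD$, so for $\beta\in(0,\beta_1]$ every stationary deterministic policy has finite return and \cref{cor:erm-optimal-stationary} gives $g_\infty\opt(\beta)=\max_{\pi\in\PiSD}g_\infty(\pi,\beta)>-\infty$; this finite maximum converges, by the previous step, to $\max_{\pi\in\PiSD}g_\infty(\pi,0)$, which equals $g_\infty\opt(0)$ since risk-neutral transient MDPs admit stationary deterministic optimal policies (\cref{thm:erm-main-convergence} at $\beta=0$, or \citep[theorem~4.12]{Kallenberg2021markov}). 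The step I expect to be the main obstacle is the infinite-horizon fixed-policy case: one must first argue the $\liminf$ over $t$ is actually a limit for small $\beta$ — which is where transience and \cref{lem:radius-contractions} enter — and then extract the $0/0$ limit cleanly, all the remaining cases being finite-maximum or bounded-random-variable arguments.
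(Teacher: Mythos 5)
Your proof is correct, but for the infinite-horizon identities it takes a genuinely different route from the paper's. The paper handles the finite-horizon claims exactly as you do (continuity of ERM at $\beta=0^+$ for a bounded random variable, then a maximum over a finite policy set), but for the passage to $t\to\infty$ it simply cites lemma~1 of \citet{patek2001terminating} to obtain that $g_t\to g_\infty$ \emph{uniformly} on some interval $(0,\beta_0)$, which licenses exchanging $\lim_{t\to\infty}$ with $\lim_{\beta\to 0^+}$. You avoid the exchange-of-limits issue altogether: for each fixed small $\beta$ you identify the $t\to\infty$ limit in closed form via the Neumann series, $-\bm{\mu}\tr\bm{w}^{\infty}(\pi,\beta)=\bm{\mu}\tr(\bm{I}-\bm{B}^{\bm{d}})^{-1}\bm{b}^{\bm{d}}=\E^{\pi,\bm{\mu}}\left[\exp{-\beta\tilde{R}}\right]$, and then use smoothness of the cumulant generating function of the total reward $\tilde{R}$ near $0$, which is finite because transience gives the termination time a geometric tail. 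What your route buys is self-containedness and robustness: the cited lemma of Patek is proved in a setting with sign-restricted costs — precisely the restriction this paper advertises removing — so leaning on it here is slightly delicate, whereas your argument uses only spectral-radius continuity together with \cref{lem:transient-spectral-radius} and \cref{lem:radius-contractions}, both internal to this paper. What the paper's route buys is brevity, and a uniform-convergence statement that echoes the pointwise-versus-uniform concern raised around \eqref{eq:h-function-utility}. One small point you should make explicit in your last step: \cref{cor:erm-optimal-stationary} is only available once $\bm{w}^{\infty,\star}>-\bm{\infty}$, which you do get for $\beta\le\beta_1$ because the optimal value dominates the finite value of any fixed stationary policy, but this deserves a sentence rather than being left implicit.
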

\begin{proof}
The result for $g_t$ holds from the continuity of ERM as $\beta \to 0^+$; e.g. ~\citet[remark~2.8]{ahmadi2017analytical}. The result for $g_t\opt$ then follows from the fact that $\PiSD$ is finite. Then, \citet[lemma~1]{patek2001terminating} shows that there exists $\beta_0 > 0$ such that $g_t \to g_{\infty}$ uniformly on $(0, \beta_0)$ and the limits can be exchanged. Therefore, the result also holds for $g_{\infty}$ and $g_{\infty}\opt$.
\end{proof}

\begin{lemma} \label{lem:erm-decreasing}
  The functions $g$ defined in~\eqref{eq:g-definitions} are monotonically non-increasing in the parameter $\beta \ge 0$.
\end{lemma}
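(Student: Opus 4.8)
The plan is to reduce the statement to the single fact that the entropic risk measure $\erm{\beta}{\tilde x}$ is non-increasing in $\beta$ on $(0,\infty)$ for every fixed $\tilde x \in \mathbb{X}$, and then to propagate this through the suprema and limit inferiors that define the various $g$'s. For fixed $\pi\in\PiHR$ and finite $t$, the random variable $\sum_{k=0}^{t} r(\tilde s_k,\tilde a_k,\tilde s_{k+1})$ is bounded (finite state and action sets, finite horizon), so $g_t(\pi,\beta) = \erm{\beta}{\sum_{k=0}^t r(\tilde s_k,\tilde a_k,\tilde s_{k+1})}$ is finite, and the claim for $g_t(\pi,\cdot)$ is exactly the base fact. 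The claim for $g_t\opt = \sup_{\pi\in\PiHR} g_t(\pi,\cdot)$ follows because a pointwise supremum of non-increasing functions is non-increasing; and the claims for $g_{\infty}(\pi,\cdot) = \liminf_{t\to\infty} g_t(\pi,\cdot)$ and $g_{\infty}\opt = \liminf_{t\to\infty} g_t\opt$ follow because $\liminf$ preserves the pointwise order $\ge$ (even when the limiting value is $-\infty$).

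For the base fact I would give one of two standard arguments. The cleanest uses the dual representation $\erm{\beta}{\tilde x} = \inf_{Q\ll P}\bigl(\E_Q[\tilde x] + \beta^{-1}\,\KL(Q\|P)\bigr)$: since $\KL(Q\|P)\ge 0$, each bracketed term is non-increasing in $\beta$, hence so is the infimum. Alternatively, and self-contained, let $\Lambda(\beta) := \log\E[\exp(-\beta\tilde x)]$ be the cumulant generating function of $-\tilde x$; it is convex by H\"older's inequality and satisfies $\Lambda(0)=0$, so $\erm{\beta}{\tilde x} = -\Lambda(\beta)/\beta = -\bigl(\Lambda(\beta)-\Lambda(0)\bigr)/(\beta-0)$ is minus the slope of the secant of a convex function anchored at the origin; secant slopes of a convex function are non-decreasing, hence $\erm{\beta}{\tilde x}$ is non-increasing in $\beta$.

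Finally, I would treat the endpoint $\beta=0$ separately: Jensen's inequality applied to the convex map $u\mapsto e^{-\beta u}$ gives $\E[\exp(-\beta\tilde x)]\ge \exp(-\beta\,\E[\tilde x])$, hence $\erm{\beta}{\tilde x}\le \E[\tilde x] = \ermo_0[\tilde x]$ for every $\beta>0$; together with monotonicity on $(0,\infty)$ this yields monotonicity on all of $[0,\infty]$, and the same endpoint conclusion also follows from the continuity in \cref{lem:erm-convergence-0}. Propagating through $\sup$ and $\liminf$ as above then covers $g_t$, $g_t\opt$, $g_{\infty}$, and $g_{\infty}\opt$. I do not anticipate a real obstacle: the only points needing care are that the finite-horizon returns are bounded (so every quantity is finite before the $\liminf$), and that monotonicity survives the appearance of the value $-\infty$ in the limit — both routine.
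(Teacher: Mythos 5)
Your proposal is correct and matches the paper's approach: the paper's proof is a one-line appeal to the dual representation of ERM, which is exactly your first argument, and your additional details (the secant-slope alternative, the Jensen step at $\beta=0$, and the propagation through $\sup$ and $\liminf$) are routine elaborations the paper leaves implicit. No gap.
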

\begin{proof}
This well-known result follows, for example, from the dual representation of ERM~\cite{hau2023entropic}.
\end{proof}

The following auxiliary lemmas will be used to bound the approximation error of $h$.
\begin{lemma} \label{lem:case1}
Suppose that $f\colon \Real_{+} \to \bar{\Real}$ is non-increasing and $0 < \beta_0$. Then: 
\begin{equation*}
\begin{aligned}
 \sup_{\beta \in [0,\beta_0)} (f(\beta) + \beta^{-1} \log \alpha)  -
(f(\beta_0) + \beta_0^{-1} \log \alpha)  
& \;\le\;  f(0) - f(\beta_0) .
\end{aligned}
\end{equation*}

\end{lemma}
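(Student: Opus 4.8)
The plan is to directly bound the supremum by splitting the term $f(\beta) + \beta^{-1}\log\alpha$ into two pieces and bounding each separately over $\beta \in [0,\beta_0)$. The key observation is that, since $\alpha \in (0,1)$, we have $\log\alpha < 0$, so the map $\beta \mapsto \beta^{-1}\log\alpha$ is \emph{increasing} on $(0,\infty)$ and tends to $-\infty$ as $\beta \to 0^+$. Combined with the hypothesis that $f$ is non-increasing, this suggests that on the interval $[0,\beta_0)$ the two terms pull in opposite directions, and the cleanest route is to bound $f(\beta) \le f(0)$ uniformly (monotonicity of $f$) and $\beta^{-1}\log\alpha \le \beta_0^{-1}\log\alpha$ uniformly (monotonicity of $\beta\mapsto\beta^{-1}\log\alpha$ on $(0,\beta_0]$, using $\log\alpha<0$).

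The steps, in order: first, note that for any $\beta \in (0,\beta_0)$, monotonicity of $f$ gives $f(\beta) \le f(0)$, and since $\log\alpha < 0$ and $0 < \beta < \beta_0$ we have $\beta^{-1} > \beta_0^{-1} > 0$, hence $\beta^{-1}\log\alpha < \beta_0^{-1}\log\alpha$. Adding these, $f(\beta) + \beta^{-1}\log\alpha \le f(0) + \beta_0^{-1}\log\alpha$ for every $\beta \in (0,\beta_0)$. Second, handle the endpoint $\beta = 0$ separately: there the expression is $f(0) + 0^{-1}\log\alpha$, which should be read as $f(0) - \infty = -\infty$ (consistent with the EVaR extension $\evaro_0 = \ess\inf$ in the main text, and with $\log\alpha<0$), so it is dominated by the bound as well; alternatively one restricts the supremum to $(0,\beta_0)$ since the $\beta=0$ term contributes $-\infty$. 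Third, take the supremum over $\beta \in [0,\beta_0)$ of the left-hand side to obtain $\sup_{\beta\in[0,\beta_0)}(f(\beta)+\beta^{-1}\log\alpha) \le f(0) + \beta_0^{-1}\log\alpha$. Finally, subtract $f(\beta_0) + \beta_0^{-1}\log\alpha$ from both sides; the $\beta_0^{-1}\log\alpha$ terms cancel, leaving exactly $f(0) - f(\beta_0)$, which is the claimed inequality.

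There is essentially no hard step here — the whole argument is a two-line monotonicity bound plus bookkeeping. The only genuine subtlety is the treatment of the boundary point $\beta = 0$, where $\beta^{-1}\log\alpha$ is formally $-\infty$; one must be careful that the $\bar{\Real}$-valued conventions make this harmless (indeed it only helps, since $-\infty$ is below any bound), and that the supremum is therefore effectively over the open interval $(0,\beta_0)$. A secondary point worth a sentence is that $f$ may take the value $-\infty$ or $+\infty$ at some points; since we only ever use $f(\beta)\le f(0)$ and never cancel $f$-values, the inequality chain remains valid in $\bar{\Real}$ as long as $f(0)$ and $f(\beta_0)$ are such that the right-hand side $f(0)-f(\beta_0)$ is well-defined, which we may assume in the intended application (where these are finite ERM-TRC values).
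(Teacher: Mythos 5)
Your proof is correct and follows essentially the same route as the paper's one-line argument, which is a direct monotonicity bound. In fact yours is slightly more careful: the paper only invokes $f(\beta)\le f(0)$ and $\beta^{-1}\log\alpha<0$, which by itself yields the weaker bound $f(0)-f(\beta_0)-\beta_0^{-1}\log\alpha$, whereas your additional observation that $\beta\mapsto\beta^{-1}\log\alpha$ is increasing (so $\beta^{-1}\log\alpha\le\beta_0^{-1}\log\alpha$ on $(0,\beta_0)$) is exactly what makes the $\beta_0^{-1}\log\alpha$ terms cancel and gives the stated right-hand side; your handling of the $\beta=0$ endpoint as contributing $-\infty$ is also fine.
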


\begin{proof}
  Follows immediately from the fact that $f(\beta) \le f(0), \forall \beta \ge 0$ and $\beta^{-1} \log \alpha < 0, \forall \beta > 0$. 
\end{proof}

\begin{lemma} \label{lem:case2}
Suppose that $f\colon \Real_{+} \to \bar{\Real}$ is non-increasing and $0 < \beta_k < \beta_{k+1}$. Then: 
\begin{equation*}
\begin{aligned}
\sup_{\beta \in [\beta_k,\beta_{k+1})} (f(\beta) + \beta^{-1} \log \alpha) -
(f(\beta_k) + \beta_k^{-1} \log \alpha) 
& \;\le\;  (\beta_{k+1}^{-1} - \beta_k^{-1}) \cdot \log \alpha. 
\end{aligned}
\end{equation*}
\end{lemma}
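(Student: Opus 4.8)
The plan is to exploit the fact that $f$ is non-increasing together with the explicit behavior of the correction term $\beta^{-1}\log\alpha$, using that $\log\alpha < 0$ since $\alpha\in(0,1)$. Fix any $\beta \in [\beta_k, \beta_{k+1})$. First I would split the quantity $f(\beta)+\beta^{-1}\log\alpha - (f(\beta_k)+\beta_k^{-1}\log\alpha)$ into the $f$-difference and the correction-term difference: it equals $\bigl(f(\beta) - f(\beta_k)\bigr) + \bigl(\beta^{-1} - \beta_k^{-1}\bigr)\log\alpha$. Since $\beta \ge \beta_k$ and $f$ is non-increasing, the first bracket is $\le 0$, so the whole expression is at most $(\beta^{-1} - \beta_k^{-1})\log\alpha$.

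Next I would observe that the map $\beta \mapsto (\beta^{-1} - \beta_k^{-1})\log\alpha$ is itself non-increasing on $[\beta_k,\beta_{k+1})$: as $\beta$ grows, $\beta^{-1}$ decreases, so $\beta^{-1}-\beta_k^{-1}$ decreases (it starts at $0$ and becomes negative), and multiplying by the negative constant $\log\alpha$ turns this into an increasing-then— wait, let me be careful — multiplying a decreasing nonpositive quantity by a negative number yields an increasing nonnegative quantity. So in fact $(\beta^{-1}-\beta_k^{-1})\log\alpha$ is \emph{non-decreasing} in $\beta$ on this interval, and its supremum over $[\beta_k,\beta_{k+1})$ is the limiting value at $\beta_{k+1}$, namely $(\beta_{k+1}^{-1} - \beta_k^{-1})\log\alpha$. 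Taking the supremum over $\beta\in[\beta_k,\beta_{k+1})$ of the bound obtained in the first step therefore yields exactly $(\beta_{k+1}^{-1} - \beta_k^{-1})\log\alpha$, which is the claimed inequality.

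There is no real obstacle here — the argument is an elementary monotonicity bookkeeping, entirely parallel to Lemma \ref{lem:case1}. The only point requiring a moment's care is the sign tracking: since $\log\alpha<0$, inequalities flip when one divides by or compares against it, and the term $(\beta_{k+1}^{-1}-\beta_k^{-1})\log\alpha$ on the right-hand side is in fact positive (a negative times a negative), so the bound is genuinely an upper bound that is achieved in the limit $\beta\uparrow\beta_{k+1}$ rather than attained. I would state this last point explicitly to reassure the reader that replacing the half-open interval's supremum by the endpoint value is legitimate even though $\beta_{k+1}$ itself is excluded.
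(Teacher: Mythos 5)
Your proof is correct and uses exactly the same two ingredients as the paper's: $f(\beta)\le f(\beta_k)$ for $\beta\ge\beta_k$ by monotonicity of $f$, and the fact that $\beta\mapsto\beta^{-1}\log\alpha$ is increasing (since $\log\alpha<0$) so the correction term is maximized in the limit $\beta\uparrow\beta_{k+1}$; you merely apply these bounds in the opposite order. Your sign tracking and the remark about the supremum over the half-open interval being a limit rather than an attained value are both accurate.
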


\begin{proof}
  The proof uses the same technique as the lemma D.5 from~\cite{hau2023entropic}. We restate the proof here for completeness.

  Using the fact that $f$ is non-increasing and $\beta \mapsto \beta^{-1} \log \alpha$ is increasing we get that:
\begin{gather*}
  \sup_{\beta\in [\beta_k,\beta_{k+1})}  \left(  f(\beta) + \beta^{-1}\cdot \log (\alpha) \right) - 
      \left(  f(\beta_k) + \beta_k^{-1}\cdot \log \alpha \right)  \\
    \le \sup_{\beta\in [\beta_k,\beta_{k+1})}  \left(  f(\beta) + \beta_{k+1}^{-1}\cdot \log \alpha \right) - \left( f(\beta_k) + \beta_k^{-1}\cdot \log \alpha \right) \\
    \le \sup_{\beta\in [\beta_k,\beta_{k+1})}  \left(  f(\beta)  - f(\beta_k) \right) +  \left(\beta_{k+1}^{-1}\cdot \log (\alpha)  - \beta_k^{-1}\cdot \log \alpha \right) \\
\le  \beta_{k+1}^{-1}\cdot \log \alpha  - \beta_k^{-1}\cdot \log \alpha .
\end{gather*}
\end{proof}

\begin{lemma} \label{lem:case3}
Suppose that $f\colon \Real_{+} \to \bar{\Real}$ is non-increasing and $0 < \beta_K$. Then:
\begin{equation*}
    \begin{aligned}
       & \sup_{\beta\in [\beta_K, \infty)} (f(\beta) + \beta^{-1} \log  \alpha ) -
  (f(\beta_K) + \beta_K^{-1} \log \alpha) \; \le\;
\frac{-\log (\alpha)}{\beta_K}. 
    \end{aligned}
\end{equation*}
\end{lemma}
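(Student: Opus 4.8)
The plan is to bound the supremum by the same term-by-term argument already used for \cref{lem:case1} and \cref{lem:case2}, so the proof should be essentially a one-liner. The key observation is a sign one: since $\alpha \in (0,1)$ we have $\log \alpha < 0$, hence the map $\beta \mapsto \beta^{-1}\log\alpha$ is \emph{increasing} on $(0,\infty)$ (not decreasing — the negative coefficient flips the usual intuition, exactly as in \cref{lem:case2}), and it takes values in $[\beta_K^{-1}\log\alpha,\,0)$ for $\beta \in [\beta_K,\infty)$. In particular $\sup_{\beta \ge \beta_K} \beta^{-1}\log\alpha = 0$, the value being approached only as $\beta \to \infty$.

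First I would use that $f$ is non-increasing to replace $f(\beta)$ by $f(\beta_K)$ for every $\beta \ge \beta_K$, which, combined with the sup computation above, gives
\[
\sup_{\beta\in[\beta_K,\infty)}\bigl(f(\beta) + \beta^{-1}\log\alpha\bigr)
\;\le\; f(\beta_K) + \sup_{\beta\in[\beta_K,\infty)}\beta^{-1}\log\alpha
\;=\; f(\beta_K).
\]
Subtracting $f(\beta_K) + \beta_K^{-1}\log\alpha$ from both sides then yields
\[
\sup_{\beta\in[\beta_K,\infty)}\bigl(f(\beta) + \beta^{-1}\log\alpha\bigr) - \bigl(f(\beta_K) + \beta_K^{-1}\log\alpha\bigr)
\;\le\; -\beta_K^{-1}\log\alpha \;=\; \frac{-\log\alpha}{\beta_K},
\]
which is exactly the claimed bound.

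I do not expect any real obstacle here. The only point requiring a moment's care is the monotonicity direction of $\beta \mapsto \beta^{-1}\log\alpha$ noted above; and, as in \cref{lem:case1,lem:case2}, one should read the statement under the implicit convention that $f(\beta_K)$ is finite (otherwise the difference on the left is to be interpreted with the usual extended-real conventions and the bound holds trivially or vacuously). Since the supremum is only ever upper-bounded, the fact that it ranges over an unbounded interval and need not be attained is immaterial.
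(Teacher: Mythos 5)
Your proof is correct and matches the paper's own argument: both bound the supremum by $f(\beta_K)$ using that $f$ is non-increasing and that $\beta^{-1}\log\alpha \le 0$ on $[\beta_K,\infty)$, then subtract $f(\beta_K)+\beta_K^{-1}\log\alpha$ to obtain $-\beta_K^{-1}\log\alpha$. No gaps.
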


\begin{proof}
The proof uses the same technique as the lemma D.6 from~\cite{hau2023entropic}. We restate the proof here for completeness.
  
Because $f$ non-increasing and  $\beta^{-1}\cdot \log \alpha \le 0, \forall \beta > 0$ we get that:
\begin{align*}
 \sup_{\beta\in [\beta_K, \infty)} (f(\beta) + \beta^{-1} \log  \alpha ) -
  (f(\beta_K) + \beta_K^{-1} \log \alpha) 
\le \sup_{\beta\in [\beta_K, \infty)} (f(\beta)  ) -
  (f(\beta_K) + \beta_K^{-1} \log \alpha) 
\le
\frac{-\log (\alpha)}{\beta_K}.
\end{align*}
\end{proof}

\begin{lemma} \label{lem:beta-b-approximation}
Suppose that $f\colon \Real_{+} \to \bar{\Real}$ is non-increasing and $\mathcal{B}(\beta, \delta)$ defined in~\eqref{eq:b-set-defs}. Then for each $\beta_0 > 0$ and $\delta > 0$:
\begin{equation*}
\begin{aligned}
\max_{\beta\in \mathcal{B}(\beta_0, \delta)}  f(\beta) + \beta^{-1} \log \alpha 
& \; \le\; 
\sup_{\beta > 0 } f(\beta) + \beta^{-1} \log  \alpha \\
&  \; \le\; 
\max_{\beta\in \mathcal{B}(\beta_0, \delta)}  f(\beta) + \beta^{-1} \log \alpha +
\max \left\{ f(0) - f(\beta_0) , \delta \right\} .
\end{aligned}
\end{equation*}

\end{lemma}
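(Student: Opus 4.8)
The idea is to decompose the positive half-line $(0,\infty)$ into the finitely many intervals induced by the discretization points in $\mathcal{B}(\beta_0,\delta) = \{\beta_0, \dots, \beta_K\}$, together with the initial interval $[0,\beta_0)$ and the terminal ray $[\beta_K,\infty)$, and then bound the supremum of $f(\beta) + \beta^{-1}\log\alpha$ over each piece by the value attained at the left endpoint of that piece, which lies in $\mathcal{B}(\beta_0,\delta)$. This is precisely what the three auxiliary lemmas \ref{lem:case1}, \ref{lem:case2}, and \ref{lem:case3} deliver.

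For the first inequality, $\max_{\beta\in\mathcal B(\beta_0,\delta)} f(\beta)+\beta^{-1}\log\alpha \le \sup_{\beta>0} f(\beta)+\beta^{-1}\log\alpha$, there is nothing to do: $\mathcal B(\beta_0,\delta)\subseteq(0,\infty)$, so the max over a finite subset is at most the sup over the whole set. For the second inequality I would write
\[
  \sup_{\beta>0} \bigl(f(\beta)+\beta^{-1}\log\alpha\bigr)
  = \max\Bigl\{\, \sup_{[0,\beta_0)},\ \max_{k=0}^{K-1}\sup_{[\beta_k,\beta_{k+1})},\ \sup_{[\beta_K,\infty)}\,\Bigr\}
\]
(abbreviating the common integrand), noting the interval $[0,\beta_0)$ may include $\beta=0$ harmlessly since $f(0)$ is finite-or-$\pm\infty$ and the claim is an inequality. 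Then apply Lemma~\ref{lem:case1} to the first piece to get a bound of $f(\beta_0)+\beta_0^{-1}\log\alpha + (f(0)-f(\beta_0))$; apply Lemma~\ref{lem:case2} to each middle piece to get $f(\beta_k)+\beta_k^{-1}\log\alpha + (\beta_{k+1}^{-1}-\beta_k^{-1})\log\alpha$; and apply Lemma~\ref{lem:case3} to the last piece to get $f(\beta_K)+\beta_K^{-1}\log\alpha + (-\log\alpha)/\beta_K$. Each of these is of the form (value at a point of $\mathcal B(\beta_0,\delta)$) plus an error term, and each value-at-a-point is $\le \max_{\beta\in\mathcal B(\beta_0,\delta)} f(\beta)+\beta^{-1}\log\alpha$.

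It remains to check that every error term is at most $\max\{f(0)-f(\beta_0),\delta\}$. The first-piece error is $f(0)-f(\beta_0)$, which is covered directly. For the middle and last pieces, the construction in \eqref{eq:beta-construction} is designed so that consecutive reciprocals satisfy $\beta_{k+1}^{-1}-\beta_k^{-1} = \delta/(-\log\alpha)$ (one verifies $\beta_k^{-1} = \delta/(-\log\alpha)\cdot k + \beta_0^{-1}$ by unwinding the recurrence, using $\log\alpha<0$ so $-\log\alpha>0$), hence $(\beta_{k+1}^{-1}-\beta_k^{-1})\log\alpha = -\delta < \delta$, i.e.\ the error is $\le\delta$; and for the last piece $(-\log\alpha)/\beta_K = \delta$ exactly by the definition $\beta_K = -\log\alpha/\delta$. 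Taking the maximum over all pieces then yields the stated bound. The one point requiring a little care — and the only real obstacle — is confirming the arithmetic of the grid: that the recurrence in \eqref{eq:beta-construction} makes the reciprocals $\beta_k^{-1}$ an arithmetic progression with common difference $\delta/(-\log\alpha)$, and that this progression reaches exactly $\beta_K^{-1}=\delta/(-\log\alpha)$ at the last index, so that the uniform spacing is what converts the per-interval bounds into the clean $\delta$ error; once that is in hand the rest is bookkeeping over finitely many intervals.
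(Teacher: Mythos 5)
Your proof follows the paper's argument exactly: the paper's own proof of this lemma is a one-line appeal to \cref{lem:case1,lem:case2,lem:case3} via precisely the interval decomposition you describe, and you have simply filled in the bookkeeping. One small sign slip in that bookkeeping: since $\beta_{k+1}>\beta_k$ the reciprocals decrease, so $\beta_{k+1}^{-1}-\beta_k^{-1}=\delta/\log\alpha<0$ and the middle-interval error is $(\beta_{k+1}^{-1}-\beta_k^{-1})\log\alpha=+\delta$ (not $-\delta$), which is still at most $\delta$, so your conclusion stands.
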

\begin{proof}
The result follows by algebraic manipulation from \cref{lem:case1,lem:case2,lem:case3}.
\end{proof}

\subsection{Proof of \cref{prop:choice-K}}

\begin{proof}[Proof of \cref{prop:choice-K}]
  First, the iterates $\beta_k, k\ge 0$ can be bounded as
  \begin{align*}
    \beta_{k+1}
    &= \frac{\beta_k \log \frac{1}{\alpha}}{\log\frac{1}{ \alpha } -\beta_k \delta}
              \ge \beta_k \frac{\log \frac{1}{\alpha}}{\log \frac{1}{\alpha} - \beta_0 \delta} \\
    &=  \beta_0 \left(\frac{\log \frac{1}{\alpha}}{\log \frac{1}{\alpha} - \beta_0 \delta} \right)^{k+1}.
\end{align*}
Using the lower bound on $\beta_k$, to show that
\begin{equation*}
\beta_K \ge  \frac{\log \frac{1}{\alpha}}{\delta},
\end{equation*}
it is sufficient to show that
\[
\beta_0 \left(\frac{\log \frac{1}{\alpha}}{\log \frac{1}{\alpha} - \beta_0 \delta} \right)^K \ge 
\frac{\log \frac{1}{\alpha}}{\delta}.
\]
Using the variable $z$, the sufficient condition translates to 
\begin{align*}
  \left( \frac{1}{1-z} \right)^K &\ge \frac{1}{z} \\
  K &\ge \frac{\log  z}{\log (1-z)},
\end{align*}
since $z \in (0,1)$ by the assumption $\beta_0 \delta < \log \frac{1}{\alpha}$.
\end{proof}
\subsection{Proof of \cref{thm:optimal-evar-erm}}
\label{proof:thm-optimal-evar-erm}
We are now ready to prove the main theorem.
\begin{proof}[Proof of \cref{thm:optimal-evar-erm}]
From \cref{lem:erm-convergence-0}, we can choose a $\beta_0 > 0$ be such that $g_{\infty}\opt(0) - g\opt_{\infty} (\beta_0) \le \delta$.

Then we can upper bound the objective as:
\begin{align*}
\sup_{\pi\in \PiHR} \lim_{t\to \infty }  \sup_{\beta > 0} h_t(\pi ,\beta )
&\stackrel{\text{(a)}}{\le} \lim_{t\to \infty }  \sup_{\beta > 0}\sup_{\pi\in \PiHR}  h_t(\pi ,\beta )  \\
& \stackrel{\text{(b)}}{=} 
  \lim_{t\to \infty }  \sup_{\beta > 0}  h_t\opt(\beta ) \\
&\stackrel{\text{(c)}}{\le} 
  \lim_{t\to \infty }  \max_{\beta \in \mathcal{B}(\beta_0, \delta)}  h_t\opt(\beta ) + \delta \\
& \stackrel{\text{(d)}}{\le} 
    \max_{\beta \in \mathcal{B}(\beta_0, \delta)}  \lim_{t\to \infty } h_t\opt(\beta ) + \delta\\
&\le \max_{\beta \in \mathcal{B}(\beta_0, \delta)}  h_{\infty }\opt(\beta ) + \delta.
\end{align*}
Here, (a) follows because the limit of an upper bound on a sequence is an upper bound on the limit, (b) follows from the definition of $h_t\opt $, (c) from \cref{lem:beta-b-approximation}, (d) from the continuity of $\max$ over finite sets.

Let $\pi\opt \in \PiSD, \beta\opt \in \mathcal{B}(\beta_0, \delta)$ be the maximizers that attain the objective above:
\[
  h_{\infty }(\pi\opt, \beta\opt ) = \max_{\beta \in \mathcal{B}(\beta_0, \delta)}  h_{\infty }\opt(\beta ).
\]
Then, continuing the upper bound on the objective:
\begin{align*}
 \max_{\beta \in \mathcal{B}(\beta_0, \delta)}  h_{\infty }\opt(\beta ) + \delta 
  &= h_{\infty }(\pi\opt, \beta\opt )  + \delta  \\
  &= \lim_{t\to \infty } h_t(\pi\opt, \beta\opt )  + \delta  \\
  &\le  \lim_{t\to \infty } \sup_{\beta > 0} h_t(\pi\opt, \beta)  + \delta  \\
 & \le \lim_{t\to \infty }  \evaro_{\alpha}^{\pi\opt, \bm{\mu}} \left[ \sum_{k=0}^{t-1} r(\tilde{s}_k,\tilde{a}_k,\tilde{s}_{k+1}) \right] + \delta ,
\end{align*}
which shows that $\pi\opt$ is $\delta$-suboptimal since $\displaystyle \lim_{t\to \infty }  \evaro_{\alpha}^{\pi\opt, \bm{\mu}} \left[ \sum_{k=0}^{t-1} r(\tilde{s}_k,\tilde{a}_k,\tilde{s}_{k+1}) \right]$ is a lower bound on the objective. 
\end{proof}

\subsection{Proof of \cref{cor:evar-stationary-policy}}
\label{proof:cor-evar-stationary-policy}

\begin{proof}[Proof of \cref{cor:evar-stationary-policy}]
Suppose that there exists a $\hat{\pi}\in \PiHR \setminus \PiSD$ such that it attains an objective value greater than the best $\pi\in \PiSD$ by at least $\epsilon > 0$. Note that a best $\pi\in \PiSD$ exists because $\PiSD$ is finite. Then, one can derive a contradiction by choosing a $\pi\in \PiSD$ that is at most $\epsilon / 2$ suboptimal.   
\end{proof}

\section{Additional Material of \cref{sec:numerical-eval}}
\label{sec:addit-mater}

The detailed description of the gambler's ruin is as follows. The state space is $\bar{\states} = \{0,1,\cdots,7,e \}$ and the action space is $\actions = \{0,1,\cdots,6 \}$. Action $0$ represents quitting the game, and the other actions represent the bet size. The initial state is chosen randomly according to a uniform distribution over states $1, \dots, 7$.

The transitions and rewards in the MDP are defined as follows.

In state $0$, the gambler has capital $0$, takes the only available action $0$, receives the reward $-1$, and transits to the sink state $e$. That is:
\[
  \bar{p}(0,0,e)=1, \qquad
  \bar{r}(0,0,e)=-1.
\]

If state $7$, the gambler has capital 7, reaches the target wealth level, takes the only available action $0$, receives the reward $7$, and transits to the sink state $e$. That is:
\[
  \bar{p}(7,0,e)=1, \qquad
    \bar{r}(7,0,e)=7.
\]

For the sink state $e$, the only available action is $0$, and the transition probabilities and rewards are
\[
  \bar{p}(e,0,e)=1, \qquad
  \bar{r}(e,0,e)=0.
\]

For other states $s\in \left\{ 1, \dots , 6 \right\}$, the gambler's capital is $s$, can take any action $a \in \{ 1,\cdots,s \}$ to continue the game or an action $0$ to quit the game.

If the game is won, the gambler's bet doubles. If the game is lost, the gambler's bet is lost. That is, the transition probabilities and rewards for $a\in \left\{ 1, \dots , s \right\}$ are
\[
  \bar{p}(s,a,s') =
  \begin{cases}
      q &\text{if } s' = \min \left\{  s + a, 7 \right\}, \\
    1-q &\text{if } s' = s - a, \\
      0 &\text{otherwise},
  \end{cases}
  \qquad
  \bar{r}(s,a,s') = 0,
\]
for each $s' \in  \left\{ 0, \dots, 7, e \right\}$ where $q \in [0,1]$ is the win probability.

If $s\in \left\{ 1, \dots , 6 \right\}$ and the action is $0$, the gambler quits the game and collects the reward. That is:
\[
  \bar{p}(s,0,e) = 1, \qquad
  \bar{r}(s,0,e)=s.
\]

\end{document}